\theoremstyle{plain}
\newtheorem{theorem}{Theorem}[section]
\newtheorem{lemma}[theorem]{Lemma}
\newtheorem{corollary}[theorem]{Corollary}
\theoremstyle{definition}
\theoremstyle{remark}
\newtheorem{remark}[theorem]{Remark}
\newcommand{\Int}{\mathrm{Int}\,}
\def\eqref#1{equation~\ref{#1}}
\def\1{\bm{1}}
\def\inp#1#2{\left\langle #1, #2 \right\rangle}
\DeclareMathAlphabet{\mathsfit}{\encodingdefault}{\sfdefault}{m}{sl}
\SetMathAlphabet{\mathsfit}{bold}{\encodingdefault}{\sfdefault}{bx}{n}
\def\gD{{\mathcal{D}}}
\def\gO{{\mathcal{O}}}
\def\gP{{\mathcal{P}}}
\def\gS{{\mathcal{S}}}
\def\sI{{\mathbb{I}}}
\def\sN{{\mathbb{N}}}
\def\sP{{\mathbb{P}}}
\def\sR{{\mathbb{R}}}
\newcommand{\E}{\mathbb{E}}
\newcommand{\I}{\mathbbm{1}}
\newcommand{\Reg}{\mathrm{Reg}}
\newcommand{\Min}{\mathrm{min}}
\newcommand{\uL}{\underline{L}}
\newcommand{\ul}{\underline{\lambda}}
\newcommand{\buL}{\bm{\uL}}
\DeclareMathOperator*{\argmin}{arg\,min}
\newcommand\numberthis{\addtocounter{equation}{1}\tag{\theequation}}
\icmltitlerunning{FTPL for Decoupled Bandits: BOBW and Practicality}
\begin{document}

\twocolumn[
  \icmltitle{Follow-the-Perturbed-Leader for Decoupled Bandits: \\ Best-of-Both-Worlds and Practicality}



  \icmlsetsymbol{equal}{*}

  \begin{icmlauthorlist}
    \icmlauthor{Chaiwon Kim}{equal,snu}
    \icmlauthor{Jongyeong Lee}{equal,kist}
    \icmlauthor{Min-hwan Oh}{snu}
  \end{icmlauthorlist}

  \icmlaffiliation{snu}{Seoul National University, Seoul, Korea}
  \icmlaffiliation{kist}{Korea Institute of Science and Technology, Seoul, Korea}

  \icmlcorrespondingauthor{Chaiwon Kim}{snukcw128@snu.ac.kr}
  \icmlcorrespondingauthor{Jongyeong Lee}{jongyeong@kist.re.kr}
  \icmlcorrespondingauthor{Min-hwan Oh}{minoh@snu.ac.kr}
  
  \icmlkeywords{Follow-the-Perturbed-Leader,Decoupled exploration and exploitation,Best-of-Both-Worlds}

  \vskip 0.3in
]



\printAffiliationsAndNotice{\icmlEqualContribution}

\begin{abstract} 
We study the decoupled multi-armed bandit problem, where the learner separately selects one arm for exploration and one, possibly different, arm for exploitation at each round. 
In this setting, the loss of the explored arm is observed but not incurred, whereas the loss of the exploited arm is incurred without being observed. 
We propose an efficient Follow-the-Perturbed-Leader (FTPL) policy that achieves Best-of-Both-Worlds (BOBW) guarantee with constant regret in the stochastic regime and optimal $\gO(\sqrt{KT})$ regret in the adversarial regime. 
A key feature of our method is that it completely avoids both the convex optimization required by prior BOBW policies and the resampling procedures typically used in FTPL bandit policies. 
This allows FTPL to fully realize its computational efficiency advantages, leading to substantial reductions in computational cost.
We empirically confirm that our policy not only improves the runtime but also demonstrates superior regret performance in both regimes.
\end{abstract}
\section{Introduction}
The multi-armed bandit (MAB) problem~\citep{thompson1933likelihood, robbins1952some, lai1985asymptotically} is a fundamental framework for sequential decision-making, with applications in areas such as recommendation systems \citep{broden2017bandit, zhou2017large}, communication systems~\citep{maghsudi2016multi,li2020multi}, and dynamic pricing~\citep{misra2019pricing}. Consider the standard MAB problem, in which a learner selects one of $K$ arms at each round over a time horizon $T$. 
The learner aims to minimize cumulative regret, defined as the difference between the total loss incurred by the learner and that incurred by the best fixed arm in hindsight.
At each round $t \in \{1, ..., T\}$, only the loss $\ell_{t,i_t}$ of the selected arm $i_t \in \{1, ..., K\}$ is observed and incurred. 
Thus, the learner must balance selecting arms that currently appear to have low expected loss (exploitation) with sampling arms whose losses remain uncertain, even if they currently appear suboptimal (exploration).
In other words, each decision must simultaneously serve both exploitation and exploration, so the two objectives are \emph{coupled} within each round.

\begin{figure}
    \centering
    \includegraphics[width=\linewidth]{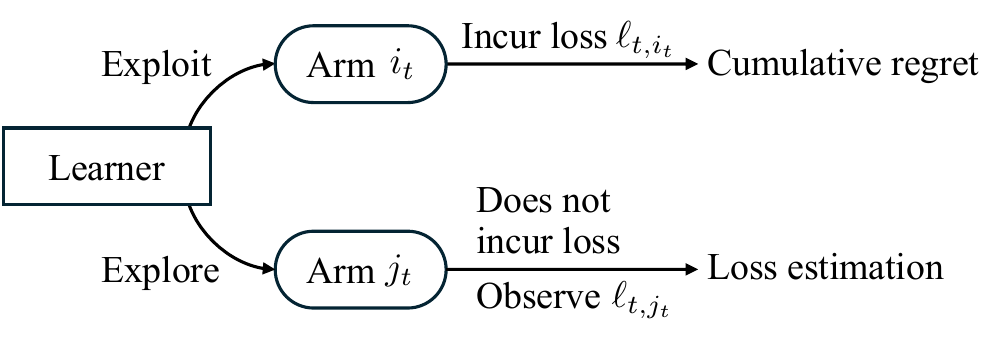}
 \caption{An illustration of decoupled multi-armed bandits, where exploitation and exploration are \textit{decoupled} in each round.}
    \label{fig: decoupled mab}
    \vspace{-1em}
\end{figure}

However, the standard MAB model does not capture scenarios in which exploration can be performed separately from exploitation. 
For instance, in ultra-wideband communication systems, the learner can sense a channel different from the one used for transmission in order to observe feedback while avoiding interference with the ongoing transmission, which motivated the study of decoupled bandits by \citet{avner2012decoupled}.
Another example arises when a real-time system operates alongside a high-fidelity simulator: the learner can explore in the simulator while exploiting in the real system, as in sim-to-real transfer for robotics~\citep{zhao2020sim}, without degrading real-world performance.
A related example appears in recommendation systems~\citep{che2025egreedy}, where, given user contexts (e.g., preferences), a platform can explore on a random subset of users to update its policy while exploiting for the remaining users by serving the best-known items.


To model such scenarios, \citet{avner2012decoupled} introduced the \textit{decoupled} bandit setting (see Figure~\ref{fig: decoupled mab}), where the learner can select two arms at each round: one for incurring the loss without observing it, and one for observing the loss without incurring it.
This decoupling of exploration and exploitation recovers the standard multi-armed bandits, when the learner is restricted to select the same arm for both objectives.
Note that this framework differs from pure explorations, in which the primary focus is on exploration to identify the best (good) arm~\citep{evendar2006pure, jourdan2023bestarm}. 
It is also different from explore-then-commit style policies, which divide exploration and exploitation into distinct phases, performing only one of the two at each round~\citep{garivier2016explore}. 

In the adversarial decoupled bandits, \citet{avner2012decoupled} established a lower bound of $\Omega(\sqrt{KT})$, matching that of the standard multi-armed bandits~\citep{auer2002nonstochastic}.
This result indicates that the two problems are similarly challenging in the adversarial regime. 
Against an oblivious adversary, their Exp3-type policy obtained a near-optimal adversarial regret of $\gO(\sqrt{KT\ln K})$.
In the stochastic setting with a unique optimal arm, they achieved a regret of $\gO(\sqrt{T\ln K})$.
However, this result is highly suboptimal, as even an anytime sampling rule designed for pure exploration tasks attains a time-independent cumulative regret of $\tilde{\gO}(K^3/\Delta_\mathrm{min}^2)$ in the same setting, despite being aimed at minimizing the expected simple regret~\citep{jourdan2023bestarm}.
Here, $\Delta_\mathrm{min} = \min_{i: \Delta_i>0} \Delta_i$ denotes the minimum suboptimality gap, where $\Delta_i = \E[\ell_{\cdot,i}] - \min_j \E[\ell_{\cdot,j}]$.

In addition to these limitations, the update rules for arm-selection probabilities and choice of learning rates proposed by \citet{avner2012decoupled} require prior knowledge of both the time horizon and the environment.
In practice, however, the nature of the environment is typically unknown, 
which motivates the design of policies that guarantee (near-)optimal performance across all possible environments, known as the Best-of-Both-Worlds (BOBW) guarantee \citep{bubeck2012bobw}. 

Tsallis-INF, based on the Follow-the-Regularized-Leader (FTRL) framework, is a prominent BOBW policy for the multi-armed bandits~\citep{zimmert2021tsallis}. 
Based on this policy, \citet{rouyer2020decoupled} proposed Decoupled-Tsallis-INF, which also achieves BOBW guarantees in the decoupled bandit settings: optimal $\gO(\sqrt{KT})$ regret in the adversarial regime and near-optimal time-independent regret of $\gO(K/\Delta_{\Min})$ in the stochastic regime.
This result shows a significant improvement over \citet{avner2012decoupled} and also outperforms the optimal bound for the standard stochastic bandits, which scales logarithmically with the horizon $T$ as $\gO(\sum_{i:\Delta_i>0} \log T / \Delta_i)$.

Despite its strong theoretical guarantees, a practical drawback of the FTRL framework is the need to solve a convex optimization problem at every round to compute arm-selection probabilities, which can be computationally intensive.
This has motivated interest in more computationally efficient alternatives, such as the Follow-the-Perturbed-Leader (FTPL) framework that selects an arm by adding random perturbations instead of solving optimizations, of which Exp3 is a special case~\citep{auer2002nonstochastic, bubeck2012regret}.
In standard multi-armed bandits, recent work has shown that FTPL can achieve BOBW guarantees without requiring any convex optimization~\citep{pmlr-v201-honda23a, pmlr-v247-lee24a}.
In decoupled bandits, however, BOBW guarantees are obtained only by FTRL policy that requires solving optimizations~\citep{rouyer2020decoupled}, while FTPL-type policy are known to achieve only suboptimal regret bounds~\citep{avner2012decoupled}.
Hence, the following research question arise: 
\begin{center}
    \emph{Can we achieve BOBW guarantees for decoupled bandits while improving computational efficiency, without sacrificing regret performance in both regimes?}
\end{center}
\vspace{-0.5em}
\paragraph{Contributions.}
A key computational challenge of FTRL and FTPL policies in bandit problems lies in computing the arm-selection probability vector $w$, which is required both (i) to select an arm and (ii) to construct an unbiased loss estimator, since an importance-weighted (IW) loss estimator is typically used.
In FTRL, these probabilities are obtained by solving a convex optimization problem at each round.
By contrast, FTPL avoids convex optimization by selecting arms via random perturbations rather than explicitly computing probabilities.
This is sufficient for (i), but creates a difficulty for (ii), since the IW estimator requires the probability of the selected arm.
To estimate this, FTPL policies usually employ geometric resampling~\citep{neu2016importance} or its efficient variant~\citep{chen2025geometric}, which incur a per-step average cost of $\gO(K^2)$ or $\gO(K \log K)$, respectively.

However, such resampling methods cannot be directly extended to the decoupled setting, where exploration is performed separately from exploitation. 
To the best of our knowledge, all existing decoupled bandit policies set the exploration probability as a function of the exploitation probability. 
Hence, directly employing this approach to FTPL policies poses a technical obstacle, since $w$ of FTPL is not available. 
In particular, if one directly adapt these designs to FTPL, e.g., by estimating the full vector via resampling, it would incur at least $\gO(K^2\log K)$ per-step cost, thereby eliminating the computational benefits of FTPL and potentially making it less efficient than FTRL.


In this paper, we introduce an alternative way to design the exploration probabilities for decoupled bandits that does not require explicit value of the arm-selection (exploitation) probability vector and can be computed using only currently available estimates.
Based on this idea, we propose a decoupled FTPL policy that achieves BOBW \emph{without convex optimization or resampling}, attaining the same regret order as Decoupled-Tsallis-INF while reducing computational cost.
Our main contributions are as follows: \vspace{-0.5em}
\begin{itemize}
    \item We introduce a new FTPL-based policy that achieves BOBW guarantees without convex optimization or resampling,  where we attain $\gO(\sqrt{KT})$ regret in the adversarial regime (Theorem~\ref{thm: adv}) and $\gO(K / \Delta_{\min})$ regret in the stochastic regime (Theorem~\ref{thm: sto}).\vspace{-0.5em}
    \item We show that the proposed policy achieves superior empirical performance even with faster runtime compared to existing BOBW policy.\vspace{-0.5em}
\end{itemize}
Beyond these contributions, our approach to designing exploration probabilities can be of independent interest. 
In particular, it enables FTPL to fully realize its computational efficiency advantages by completely avoiding the use of resampling algorithms.
Moreover, our analysis suggests that \emph{surrogate probabilities can be used in place of exact probability vectors} to obtain the same order of regret guarantees.
Consequently, these ideas can be applicable to other algorithmic frameworks, such as the Prod family~\citep{cesa2007improved, zimmert2024productive}, not restricted to FTPL, enabling comparable regret performance with improved computational efficiency. 

\section{Preliminaries}
\paragraph{Notation.}
Let $T \in \sN$ and $K \in \sN$ denote the time horizon and the number of arms, respectively. 
For $n\in \sN$, we use the shorthand $[n] := \{1, \dots, n\}$. 
Let $\mathbf{0}$ and $\mathbf{1}$ denote the all-zeros vector and all-ones vector in $\sR^K$ and $e_i$ denote the $i$-th standard basis vector in $\sR^K$. For an event $A$, we write $\I[A]$ to denote its indicator function, which equals $1$ if $A$ occurs and $0$ otherwise.
We also use the notation $x \land y := \min\{x, y\}$.

\subsection{Problem setting}
In the decoupled bandits, the learner selects an arm $i_t \in [K]$ to exploit, and an arm $j_t \in [K]$ to explore at each round $t\in [T]$, which may be the same or different. 
Then, the learner suffers $\ell_{t,i_t}$ without observing it, and observes $\ell_{t,j_t}$ without suffering it. 
Let $w_{t,i}$ be the exploitation probability and $p_{t,i}$ denote the exploration probability of arm $i$ at round $t$. 
The performance of a policy is measured by the pseudo-regret, defined as
\begin{align*}
    \Reg(T) &= \E\qty[\sum_{t=1}^T \ell_{t,i_t}] - \min_{i \in [K]} \E\qty[\sum_{t=1}^T \ell_{t,i}] \\
    &= \E\qty[\sum_{t=1}^T\inp{\ell_t}{w_t - e_{i^*}}],\numberthis{\label{eq: pseudo regret}}
\end{align*} 
where $i^* = \argmin_{i \in [K]} \E[\sum_{t=1}^T \ell_{t,i}]$ denotes the optimal arm in hindsight, assumed to be unique. 
The expectation $\E[\cdot]$ is taken over the randomness of policy and environment. 

In this paper, we consider two environments for generating loss vectors $\ell_t = (\ell_{t,1}, \dots, \ell_{t,K}) \in [0,1]^K$ at each round: the adversarial regime~\citep{auer2002nonstochastic} and stochastically constrained adversarial (SCA) regime~\citep{wei2018adversarial}.
In the adversarial regime, loss vectors are determined by an adaptive adversary, in response to the learner's past actions.
In the SCA regime, the environment may adjust the parameters of the arms (e.g., means) over rounds in response to the learner's past actions $\{i_s\}_{s=1}^{t-1}$. 
However, it is constrained to maintain fixed differences in the expected losses between any pair of arms, i.e., $\E[\ell_{t,i}-\ell_{t,j}] = \Delta_{i,j}$ for all $i, j, t$. Let $\Delta_i = \Delta_{i,i^*}$ denote the suboptimality gap of arm $i$, where $i^* = \argmin_i\Delta_{i,1}$ holds in this regime. 
Therefore, this regime includes the pure stochastic setting as a special case, and its pseudo-regret satisfies
\begin{align} \label{eq: constrained pseudo regret}
    \Reg(T) = \E\qty[\sum_{t=1}^T \sum_{i \ne i^*} \Delta_i w_{t,i}].
\end{align}
Since only the partial feedback is available, the learner constructs an unbiased estimator $\hat\ell_t$ of the loss vector, typically using an importance-weighted (IW) estimator, based on the observed feedback $\ell_{t,j_t}$ of the explored arm. 
Specifically, the IW loss estimator is defined as $\hat\ell_{t,i} = \ell_{t,i}\I[j_t=i]p_{t,i}^{-1}$ and then $\hat{L}_{t,i} = \sum_{s=1}^{t-1} \hat\ell_{s,i}$ denotes the estimated cumulative loss up to round $t-1$.


%

\subsection{Previous approaches in decoupled bandits}\label{sec: previous decouple}
\citet{avner2012decoupled} proposed a decoupled bandit policy that uses an exploitation strategy based on Exp3~\citep{auer2002nonstochastic} and an exploration strategy designed to minimize the variance of the loss estimates, which scales as $\sum_{i} w_{t,i} / p_{t,i}$:
\begin{align}\label{alg: avner exp3}
    w_{t,i} = \frac{(1-\gamma)g_{t,i}}{\sum_{j=1}^Kg_{t,j}} + \frac{\gamma}{K} \text{  and  } p_{t,i} = \frac{\sqrt{w_{t,i}}}{\sum_{j=1}^K\sqrt{w_{t,i}}}, 
\end{align}
where $\gamma$ is a parameter that depends on the learning rate $\eta$ and the number of arms $K$.
While this policy can be applied to both regimes, the update rule for $g_{t,i}$ and the choice of learning rate $\eta$ differ across regimes, implying that the policy requires prior knowledge of the environment. 

For BOBW guarantees, \citet{rouyer2020decoupled} adopted $\beta$-Tsallis-INF policy with $\beta \in (0,1)$ as the exploitation strategy, a well-known BOBW policy in standard multi-armed bandits~\citep{zimmert2021tsallis}. For exploration, they employed a strategy similar to that of \citet{avner2012decoupled}. Together, these form the Decoupled-Tsallis-INF policy:
\begin{equation}\label{alg: Decouple TsallisINF}
    \begin{aligned}
        w_t &= \argmin_{w \in \gS_{K}} \bigg\{\inp{w}{\hat{L}_t} - \frac{1}{\eta_t}\sum_{i\in [K]}\frac{w_i^\beta - \beta w_i}{\beta(1-\beta)}\bigg\}, \\
         p_{t} &= \bigg[\frac{w_{t,i}^{1-\beta/2}}{\sum_{j\in [K]}w_{t,j}^{1-\beta/2}}\bigg]_{i\in [K]}.
    \end{aligned}
\end{equation}
Here, $\gS_{K} = \{w \in [0,1]^K: \norm{w}_1 = 1\}$ denotes the $(K-1)$-dimensional probability simplex and the learning rate is $\eta_t= \gO(t^{-1/2})$.

This policy achieves BOBW guarantees, with $\gO(\sqrt{KT})$ adversarial regret for $\beta \in (0,1)$ and time-independent regret $\gO(K/\Delta_\Min)$ for $\beta \in (0,2/3]$ in the SCA regime, significantly improving over previous Exp3-type policy.
By construction, $\beta$-Tsallis-INF converges to Exp3 as $\beta \to 1$\footnote{Strictly speaking, it coincides with the version of Exp3 in \citet{bubeck2012regret}, whereas the original version \citep{auer2002nonstochastic} includes an additional $\gamma/K$ term.} and $p_t$ coincides with that of \citet{avner2012decoupled} when $\beta = 1$. 
In this sense, Decoupled-Tsallis-INF roughly recovers (\ref{alg: avner exp3}) by tuning $\beta$.
However, computing $w_{t}$ in (\ref{alg: Decouple TsallisINF}) involves solving a convex optimization step, increasing the computational cost as the price for improved regret guarantees.
\begin{remark}
    Recently, \citet{jin2023improved} proposed an FTRL policy with a hybrid regularizer that combines a log-barrier and Tsallis entropy, together with arm-dependent learning rates, achieving an improved regret bound of $\gO(\sqrt{\sum_i K / \Delta_i^2})$ in the SCA regime. 
    While this approach is theoretically appealing, solving optimizations with such hybrid regularizers, especially those involving a log-barrier regularizer, incurs substantially heavier computational costs, as it requires solving a convex optimization at each round.
    In contrast, Tsallis-INF admits a solution that can be efficiently solved via Newton’s method.
    
    Recall that decoupled bandits were originally motivated by communication systems, where sensing and transmission decisions are often required to be made on very short time scales, making per-step computational efficiency important.
    However, our experiment in Appendix~\ref{app: adv experiment} shows that directly using convex optimization algorithms incur significantly higher runtime, around $130$ times slower even in $K=2$, than our proposed policy.
    These results highlight the practical advantages of our approach, despite achieving near-optimal regret guarantees.
\end{remark}



\section{FTPL for decoupled bandits} \label{sec: method}
In this section, we elaborate on technical challenges that arise in applying FTPL to decoupled bandits and present our method to overcome them.
\subsection{Technical challenges}\label{sec: challenge}
A common feature of previous decoupled bandit policies is that the exploration probability $p_{t,i}$ is computed using the exploitation probability $w_{t,i}$. In FTPL, however, $w_{t,i}$ generally lacks a closed-from expression, except in special cases such as FTPL with Gumbel perturbations, where the induced exploitation probability coincides with the multinomial logit model, i.e., the Exp3 policy. 
Although using Exp3 for exploitation is convenient, it results in suboptimal performance in the stochastic regime~\citep{avner2012decoupled}.
Instead, to obtain BOBW guarantees with FTPL, it is natural to adopt a Fr\'{e}chet-type perturbation, due to its correspondence with $\beta$-Tsallis-INF~\citep{kim2019,lee2025}, an exploitation strategy known to achieve BOBW guarantees in multi-armed bandits, even though $w_{t,i}$ does not have a closed form~\citep{pmlr-v201-honda23a, pmlr-v247-lee24a, chen2025geometric}.


A natural idea is to estimate $w_{t,i}$ via geometric resampling (GR), used in several bandit settings to construct the IW estimator~\citep{neu2016importance, chen2025geometric}.
In GR, after an arm $i_t$ is selected, the learner repeatedly resamples the perturbations until the same $i_t$ is selected again with those resampled perturbations, where the number of resampling steps becomes an unbiased estimator of $1/w_{t,i_t}$.

However, GR is designed to estimate only the probability of the selected arm $i_t$, rather than the full vector $w_t$. 
This makes direct application of GR with previous exploration policy infeasible, since computing $p_t$ requires estimates $w_{t,i}$ for all arms. 
Even if one could recover $w_t$ via repeated resampling, the average computational cost would increase by a factor of $K$, yielding a per-step cost of at least $\gO(K^3)$ or $\gO(K^2 \log K)$, depending on the method.
Moreover, for arms with very small $w_{t,i}$, the required number of resampling iterations becomes large, as it scales as $1/w_{t,i}$. 

\subsection{Proposed policy}\label{sec: proposed}
To overcome these challenges, we propose an alternative exploration policy for FTPL, which avoids both resampling steps and convex optimization steps and thus achieves considerable computational improvement.
The key idea is to replace the arm-selection probability vector $w_t$ with a surrogate quantity that can be computed only from the currently available estimates.

\vspace{-0.5em}
\paragraph{Exploitation.}
At each round $t \in [T]$, the learner selects an arm $i_t$ to exploit according to the FTPL policy:
\begin{align*}
    i_t &= \argmin_{i \in [K]} \qty{\hat{L}_{t,i} - \frac{r_{t,i}}{\eta_t}} \\
    &= \argmin_{i \in [K]} \qty{\hat\uL_{t,i} - \frac{r_{t,i}}{\eta_t}}, \, r_{t,i} \sim \gP_\alpha, \forall i\in [K] \numberthis{\label{eq: it FTPL}}
\end{align*}
where $\hat{\uL}_t = \hat{L}_t - \mathbf{1} \cdot \min_{i \in [K]} \hat{L}_{t,i} \in [0, \infty)^K$ represents the loss-gap vector and $\eta_t$ denotes the learning rate specified later. 
Here, $r_t = (r_{t,1}, \dots, r_{t,K})$ is a random perturbation vector whose components are sampled i.i.d.~from the Pareto distribution $\gP_\alpha$ with shape parameter $\alpha > 1$. The density function $f$ and distribution function $F$ of $\gP_\alpha$ are given as
\begin{align*}
    f(x) = \frac{\alpha}{x^{\alpha+1}}, 
    \quad F(x) = 1 - \frac{1}{x^\alpha}, 
    \quad x \in [1, \infty),
\end{align*}
respectively.
Then, the exploitation probability of arm $i \in [K]$ given $\hat{L}_t$ can be expressed by $w_{t,i} = \phi_i(\eta_t\hat{L}_t)$, where
\begin{align*}
    \phi_i(\eta_t\hat{L}_t) 
    &:= \sP_{r_t \sim \gP_\alpha^K} \qty[i = \argmin_{i \in [K]} \qty{\hat\uL_{t,i} - \frac{r_{t,i}}{\eta_t}}] \\
    &= \int_1^\infty f(z + \eta_t\hat\uL_{t,i})\prod_{j \ne i} F(z + \eta_t\hat\uL_{t,j})\dd z, \numberthis{\label{eq: exploit prob}}
\end{align*}
which cannot be expressed in the closed-form.

\begin{algorithm}[tb]
   \caption{FTPL for decoupled bandits}
   \label{alg: FTPL}
\begin{algorithmic}[1]
    \STATE {\bfseries Input:} Shape $\alpha>1$ and rule to decide learning rate $\eta_t$.
   \STATE {\bfseries Initialization:} $\hat{L}_1 = \mathbf{0}$.
   \FOR{$t=1$ {\bfseries to} $T$}
   \STATE Sample $(r_{t,1},\ldots, r_{t,K})$ i.i.d.~from $\gP_\alpha$.
   \STATE Select $i_t$ as in (\ref{eq: it FTPL}) and incur $\ell_{t,i_t}$.
   \STATE Explore $j_t \sim p_t$ in (\ref{def: exploration prob}) and observe $\ell_{t,j_t}$.
   \STATE Update $\hat{L}_{t+1} = \hat{L}_t + \ell_{t,j_t}p_{t,j_t}^{-1}e_{j_t}$.
   \ENDFOR
\end{algorithmic}
\end{algorithm}

\paragraph{Exploration.}
In addition to FTPL exploitation, the learner selects an arm $j_t$ for exploration according to the probability distribution $p_t$, defined as
\begin{equation}\label{def: exploration prob}
    \begin{aligned}
    p_{t,i} &= \frac{q_{t,i}}{\sum_{j \in [K]} q_{t,j}}, \text{ where} \\
    q_{t,i} &= \qty(\frac{1}{1 + \eta_t\hat{\uL}_{t,i}} \land \frac{1}{\sigma_{t,i}^{1/\alpha}})^{\frac{\alpha+1}{2}},
    \end{aligned}
\end{equation}
and $\sigma_{t,i}$ denotes the rank of $\hat{L}_{t,i}$ among $\{\hat{L}_{t,j}\}_{j\in [K]}$, with $\sigma_{t,i}=1$ for the smallest and $K$ for the largest value (ties are broken arbitrarily).
It is obvious that $p_t$ is computable directly from $\hat{L}_t$ and $\eta_t$ without additional procedures, with at most $O(K \log K)$ per-step cost due to sorting $\{\hat{L}_{t,j}\}_{j}$.

Given the correspondence between the $\beta$-Tsallis entropy and Fr\'echet-type perturbations with $\alpha=1/(1-\beta)$, $q_{t,i}$ can be viewed as an approximation of $w_{t,i}^{1/2+ 1/(2\alpha)}$, which roughly corresponds to $w_{t,i}^{1-\beta/2}$ in (\ref{alg: Decouple TsallisINF}). 
Our approach of approximating $w_{t,i}$ using a tight upper bound (see Lemma~\ref{lem: lb and ub of w_{t,i}} in Appendix for details) may be of independent interest for efficiently approximating arm-selection probabilities of FTPL beyond the decoupled setting.
The pseudo-code of the overall procedure is given in Algorithm~\ref{alg: FTPL}. 


\subsection{Theoretical guarantees}
The following results establish the regret guarantees and show BOBW guarantees, with the first theorem showing that Algorithm~\ref{alg: FTPL} is optimal in the adversarial regime.
\begin{theorem}\label{thm: adv}
    In the adversarial regime, Algorithm~\ref{alg: FTPL} with $\alpha>1$ and $\eta_t = c K^{\frac{1}{\alpha}-\frac{1}{2}}/\sqrt{t}$ for $c>0$ satisfies  \vspace{-0.2em} 
    \begin{equation*}
        \Reg(T) \leq \gO(\sqrt{KT}). \vspace{-0.5em}
    \end{equation*}
\end{theorem}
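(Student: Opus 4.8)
The plan is to bound $\Reg(T)=\sum_{t=1}^T\E[\dotp{\hat\ell_t}{w_t-e_{i^*}}]$ via the standard two-term decomposition for FTPL: a \emph{penalty} term arising from the perturbation magnitude and a \emph{stability} term measuring how much the exploitation distribution moves between consecutive rounds. Writing $\Phi_\eta(L)=\E_{r\sim\gP_\alpha^K}[\min_i\{L_i-r_i/\eta\}]$ for the expected perturbed minimum, the selection probabilities satisfy $w_t=\nabla\Phi_{\eta_t}(\hat{L}_t)$, so telescoping $\Phi_{\eta_t}(\hat{L}_t)$ along the trajectory gives
\begin{align*}
\Reg(T)\leq\underbrace{\frac{1}{\eta_T}\,\E_{r\sim\gP_\alpha^K}\Big[\max_{i\in[K]}r_i\Big]}_{\text{penalty}}+\underbrace{\sum_{t=1}^T\E\big[\dotp{\hat\ell_t}{w_t-w_{t+1}}\big]}_{\text{stability}},
\end{align*}
where the decrease of $\eta_t$ only contributes a controlled (nonpositive or lower-order) boundary term since $\Phi_\eta$ is monotone in $\eta$.

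For the penalty, I would use that for $\gP_\alpha$ with $\alpha>1$ the $K$-sample maximum satisfies $\E[\max_i r_i]=\gO(K^{1/\alpha})$, which follows directly from $\sP[\max_i r_i\le x]=(1-x^{-\alpha})^K$. Substituting $\eta_T=cK^{1/\alpha-1/2}/\sqrt{T}$ gives penalty $=\gO(K^{1/\alpha}\sqrt{T}/K^{1/\alpha-1/2})=\gO(\sqrt{KT})$, matching the target rate exactly; this is precisely why the stated learning rate carries the factor $K^{1/\alpha-1/2}$.

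The stability term is the crux. Since $\hat\ell_t$ is supported only on the explored arm $j_t$ and $\hat{L}_{t+1}=\hat{L}_t+\hat\ell_{t,j_t}e_{j_t}$, the inner product collapses to a single coordinate, $\dotp{\hat\ell_t}{w_t-w_{t+1}}=\hat\ell_{t,j_t}(w_{t,j_t}-w_{t+1,j_t})$, which by the fundamental theorem of calculus equals $\hat\ell_{t,j_t}\int_0^{\eta_t\hat\ell_{t,j_t}}|\partial_{x_{j_t}}\phi_{j_t}|\,ds$. The key estimate I would establish is a self-derivative bound $|\partial_{x_i}\phi_i(x)|=\gO(w_{t,i}^{1+1/\alpha})$ holding along the entire increment, derived from the integral formula~\eqref{eq: exploit prob} and the Pareto density; this exponent $1+1/\alpha$ is the FTPL analogue of the $w^{2-\beta}$ factor in $\beta$-Tsallis-INF under $\alpha=1/(1-\beta)$. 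This yields a per-round bound $\gO(\eta_t\hat\ell_{t,j_t}^2 w_{t,j_t}^{1+1/\alpha})$; taking the conditional expectation over $j_t\sim p_t$ cancels one power of $p_{t,i}$ against $\E[\hat\ell_{t,i}^2\mid j_t=i]=\ell_{t,i}^2/p_{t,i}^2\le 1/p_{t,i}^2$, giving $\gO(\eta_t\sum_i w_{t,i}^{1+1/\alpha}/p_{t,i})$. Finally, since $p_{t,i}$ is proportional to $q_{t,i}$, which approximates $w_{t,i}^{1/2+1/(2\alpha)}$, this sum reduces to $(\sum_i w_{t,i}^{1/2+1/(2\alpha)})^2\le K^{1-1/\alpha}$ by the power-mean inequality (the exponent lies in $(1/2,1)$ for $\alpha>1$, and $\sum_i w_{t,i}=1$). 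Summing $\eta_t K^{1-1/\alpha}=cK^{1/2}/\sqrt{t}$ over $t$ gives $\gO(\sqrt{KT})$, so both terms contribute at the optimal rate.

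The main obstacle is the self-derivative bound $|\partial_{x_i}\phi_i|=\gO(w_{t,i}^{1+1/\alpha})$ for the heavy-tailed Pareto perturbation: the derivative must be controlled \emph{uniformly} over the whole loss increment $[0,\eta_t\hat\ell_{t,j_t}]$, even though $\hat\ell_{t,j_t}$ can be large when $p_{t,j_t}$ is small, and the integrand in~\eqref{eq: exploit prob} decays only polynomially. A secondary difficulty is that the exploration rule uses the computable surrogate $q_{t,i}$ rather than the intractable $w_{t,i}^{1/2+1/(2\alpha)}$; closing the argument therefore requires the two-sided bounds of Lemma~\ref{lem: lb and ub of w_{t,i}} to guarantee that $1/p_{t,i}$ is controlled, up to constants, by $(\sum_j w_{t,j}^{1/2+1/(2\alpha)})/w_{t,i}^{1/2+1/(2\alpha)}$, so that the power-mean step goes through.
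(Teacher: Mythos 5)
Your overall architecture (regret decomposition, penalty via $\E[\max_i r_{t,i}] = \gO(K^{1/\alpha})$, stability reduced through $1/p_{t,i}$ and a power-mean bound) parallels the paper's proof, and the penalty half is fine: it is essentially Lemma~\ref{lem: regret decomposition} plus a uniform bound on $\E[r_{t,i_t}-r_{t,i^*}]$, exactly the role played by Lemma~\ref{lem: penalty bound}. The gap is the step you yourself identify as the crux: the pointwise self-derivative bound $-\phi_i'(\lambda) = \gO\big(\phi_i(\lambda)^{1+1/\alpha}\big)$ with a constant depending only on $\alpha$. This is not just unproven — it is false. Writing $\uL_i=0$ and $G(z)=\prod_{j\neq i}F(z+\eta_t\hat\uL_{t,j})$, both quantities are \emph{linear} functionals of $G$:
\begin{align*}
\phi_i=\alpha\int_1^\infty \frac{G(z)}{z^{\alpha+1}}\dd z,
\qquad
-\phi_i'=\alpha(\alpha+1)\int_1^\infty \frac{G(z)}{z^{\alpha+2}}\dd z .
\end{align*}
Hence multiplying $G$ by a flat factor $\epsilon$ scales both by $\epsilon$, leaving $-\phi_i'/\phi_i$ unchanged while shrinking the target $\phi_i^{1+1/\alpha}$ by an extra $\epsilon^{1/\alpha}$; any inequality $-\phi_i'\le C_\alpha\,\phi_i^{1+1/\alpha}$ therefore degrades by $\epsilon^{-1/\alpha}$ under this operation. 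Such a flat factor is realizable inside the model by placing many arms at a common \emph{large} gap: take arm $i$ tied with $M=\lceil\log^2(1/\epsilon)\rceil$ arms at gap $0$, and $N=\lceil \log(1/\epsilon)\,(D')^{\alpha}\rceil$ arms at gap $D'=\alpha\log^2(1/\epsilon)\,(M/\epsilon)^{1/\alpha}$. Over the whole region carrying the mass of both integrals, the far arms contribute the nearly constant factor $e^{-N(z+D')^{-\alpha}}\approx\epsilon$, and a short computation gives $\phi_i\le 4\epsilon/M$ while $-\phi_i'\ge c_\alpha\,\epsilon/M^{1+1/\alpha}$, so that $-\phi_i'/\phi_i^{1+1/\alpha}\ge c_\alpha'\,\epsilon^{-1/\alpha}$, which is $\tilde\Omega(K^{1/\alpha})$ since $K=1+M+N=\tilde\Theta(1/\epsilon)$. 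Feeding a $K^{1/\alpha}$-inflated constant back into your chain turns the per-round stability bound into $\gO(\eta_t K)=\gO(K^{1/2+1/\alpha}/\sqrt{t})$, which no longer sums to $\gO(\sqrt{KT})$.

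This failure is exactly why the paper never relates the derivative to $w_{t,i}$ in the adversarial regime. Its Lemma~\ref{lem: stability bound} bounds $-\phi_i'$ through the extremal-configuration comparison $I_{i,\alpha+2}(\eta_t\hat\uL_t)\le I_{i,\alpha+2}(\lambda^*)$, yielding a bound in terms of the rank $\sigma_{t,i}$ and the gap $\hat\uL_{t,i}$ — the two quantities from which $q_{t,i}$ is built — so that $-\phi_i'/p_{t,i}\le \gO\big(\sum_j q_{t,j}\,q_{t,i}\big)$ holds \emph{unconditionally}, and $\sum_i q_{t,i}\le\frac{2\alpha}{\alpha-1}K^{1/2-1/(2\alpha)}$ closes the argument. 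A two-sided link between $w_{t,i}$ and the gaps (Lemma~\ref{lem: lb and ub of w_{t,i}}), which is what any bound of your form would require, is available only on the event $D_t$ of (\ref{def: event Dt}) — whose very purpose is to exclude configurations like the one above — and $D_t$ is of course not available against an adversary. The rest of your chain is actually sound: the one-sided inequality $q_{t,i}\ge w_{t,i}^{1/2+1/(2\alpha)}$ that you need to control $1/p_{t,i}$ does hold unconditionally, because $w_{t,i}\le(1+\eta_t\hat\uL_{t,i})^{-\alpha}\land \sigma_{t,i}^{-1}$. So if you replace the false self-derivative lemma with the paper's rank-based derivative bound, your proof collapses into the paper's.
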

The proof of Theorem \ref{thm: adv} is given in Appendix~\ref{app: adversarial}.
This result matches the lower bound of \citet{avner2012decoupled} up to constants, and is therefore order optimal. In the next theorem, we analyze the regret of Algorithm~\ref{alg: FTPL} in the SCA regime including the stochastic setting.
Note that Theorem~\ref{thm: adv} requires only the bounded loss assumption, i.e., $\ell_t \in [0,1]^K$ and does not require a constant gap assumption used in the SCA regime, which encompasses the classical stochastic regime as a special case.

\begin{theorem}\label{thm: sto}
    In the stochastically constrained adversarial regime with a unique best arm $i^*$, Algorithm~\ref{alg: FTPL} with $\alpha \in (1,3]$ and $\eta_t = c K^{\frac{1}{\alpha}-\frac{1}{2}}/\sqrt{t}$ for $c>0$ satisfies
    \begin{align}\label{eq: thm sto}
        \Reg(T) \leq \gO\qty(\qty(\sum_{t=1}^T \sum_{i\ne i^*} \frac{\sqrt{K} \Delta_i^{1-\alpha}}{t^{\frac{\alpha}{2}}})+\frac{K}{\Delta_\Min}).
    \end{align}
\end{theorem}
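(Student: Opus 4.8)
The plan is to work directly from the SCA regret identity \eqref{eq: constrained pseudo regret}, $\Reg(T)=\sum_{t=1}^T\sum_{i\ne i^*}\Delta_i\,\E[w_{t,i}]$, so that everything reduces to controlling the expected exploitation probability $\E[w_{t,i}]$ of each suboptimal arm. First I would invoke the upper bound of Lemma~\ref{lem: lb and ub of w_{t,i}}, which gives $w_{t,i}\le C\,(1+\eta_t\hat{\uL}_{t,i})^{-\alpha}$ for a constant $C=C(\alpha)$ — morally the Pareto-tail statement that an arm can win only if its perturbation exceeds the loss gap to the leader. Since $\hat{\uL}_{t,i}=\hat{L}_{t,i}-\min_j\hat{L}_{t,j}\ge \hat{L}_{t,i}-\hat{L}_{t,i^*}=:\hat{G}_{t,i}$, and each increment $\hat\ell_{s,i}-\hat\ell_{s,i^*}$ has conditional mean $\Delta_i$ (the IW estimator is unbiased and the SCA constraint fixes the per-round gap), so that $\E[\hat{G}_{t,i}]=(t-1)\Delta_i$, the whole question becomes how tightly $\hat{G}_{t,i}$ concentrates around its linearly growing mean.

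The decomposition I would use is the event split $A_{t,i}=\{\hat{G}_{t,i}\ge\tfrac12(t-1)\Delta_i\}$. On $A_{t,i}$ the bound above is deterministic: $\Delta_i w_{t,i}\le C\Delta_i\,(1+\tfrac12\eta_t(t-1)\Delta_i)^{-\alpha}$. Substituting $\eta_t=cK^{1/\alpha-1/2}/\sqrt t$ and checking the two regimes $\eta_t(t-1)\Delta_i\ge 1$ and $\eta_t(t-1)\Delta_i<1$ separately, this is at most a constant times $K^{\alpha/2-1}\Delta_i^{1-\alpha}t^{-\alpha/2}$; the restriction $\alpha\le 3$ enters precisely here to guarantee $K^{\alpha/2-1}\le\sqrt K$, which turns this into the first (main) term $\sqrt K\,\Delta_i^{1-\alpha}t^{-\alpha/2}$ of \eqref{eq: thm sto}. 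I would also check that in the small-$t$ regime the crude bound $\Delta_i w_{t,i}\le\Delta_i$ is itself dominated by $\sqrt K\,\Delta_i^{1-\alpha}t^{-\alpha/2}$, again using $\alpha\le 3$, so that no separate accounting of early rounds is needed.

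The remaining work is the complementary event $A_{t,i}^c$, on which I only use $w_{t,i}\le1$, so its contribution is $\sum_{t,i}\Delta_i\,\sP[A_{t,i}^c]$, and I must show this is $\gO(K/\Delta_\Min)$. I would bound $\sP[\hat{G}_{t,i}<\tfrac12(t-1)\Delta_i]$ by splitting into a lower deviation of $\hat{L}_{t,i}$ and an upper deviation of $\hat{L}_{t,i^*}$ and applying a Freedman/Bernstein inequality to each martingale. For $\hat{L}_{t,i}$ the increments $\ell_{s,i}-\hat\ell_{s,i}\le 1$ are bounded on the relevant side, while for $\hat{L}_{t,i^*}$ the jump size and conditional variance are governed by $1/p_{s,i^*}$. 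The crucial point is that, because $i^*$ is the leader with $\hat{\uL}_{s,i^*}\approx0$ while every suboptimal arm carries a large loss gap, its $q$-value $q_{s,i^*}\approx 1$ dominates $\sum_j q_{s,j}$ in \eqref{def: exploration prob}, forcing $p_{s,i^*}=\Omega(1)$ and hence a per-round variance proxy that is $\gO(1)$ in $K$. This yields a tail of order $\exp(-c(t-1)\Delta_i^2)$, so that $\sum_t\Delta_i\,\sP[A_{t,i}^c]=\gO(1/\Delta_i)$, and summing over the at most $K-1$ suboptimal arms gives $\gO(K/\Delta_\Min)$.

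The hard part is exactly this last step, and specifically the self-referential control of $p_{s,i^*}$: the concentration of $\hat{L}_{t,i^*}$ that I need in order to claim $p_{s,i^*}=\Omega(1)$ is itself what guarantees that $i^*$ remains the leader with small $\hat{\uL}_{s,i^*}$. I would break this circularity with a bootstrapping / stopping-time argument — defining a high-probability history event on which all estimated gaps have so far stayed near their means, showing the exploration probabilities (hence the IW variances) are well-controlled on that event, and then using a Freedman bound to propagate the event one step forward. Keeping the variance proxy dimension-free, so that the prefactor in the second term is exactly $1$ rather than a growing power of $K$, is the delicate quantitative obstacle; it is what makes the correction $K/\Delta_\Min$ rather than a larger power of $K$, and it is where the specific exponent $\tfrac{\alpha+1}{2}$ in the definition of $q_{t,i}$ and the range $\alpha\in(1,3]$ must be used with care.
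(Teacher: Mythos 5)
Your proposal takes a genuinely different route from the paper --- you work directly from the SCA identity $\Reg(T)=\sum_t\sum_{i\ne i^*}\Delta_i\,\E[w_{t,i}]$ and try to prove that the ``good'' event $A_{t,i}$ holds with high probability via martingale concentration, whereas the paper never bounds the probability of any event: it runs the stability/penalty decomposition (Lemmas~\ref{lem: regret decomposition}--\ref{lem: penalty bound}) and then applies the self-bounding technique, using Lemma~\ref{lem: Dtc lower bound} to certify that on $D_t^c$ the \emph{instantaneous regret itself} is at least $\frac{1-e^{-1/2}}{2}\Delta_\Min$, so that the term $\max\bigl(A\sqrt{K/t}-B\Delta_\Min,0\bigr)$ vanishes for $t\gtrsim K/\Delta_\Min^2$ and sums to $\gO(K/\Delta_\Min)$ no matter how often $D_t^c$ occurs. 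Your analysis of the first term (on $A_{t,i}$) is fine: $w_{t,i}\le(1+\eta_t\hat\uL_{t,i})^{-\alpha}$ holds unconditionally by Lemma~\ref{lem: lb and ub of w_{t,i}}, and your bookkeeping showing $K^{\alpha/2-1}\le\sqrt K$ for $\alpha\le 3$ is correct.

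The genuine gap is the concentration step on $A_{t,i}^c$, and it is not a technical nuisance that bootstrapping can repair --- it fails quantitatively at exactly the parameter value the theorem cares about. The tail $\exp\bigl(-c(t-1)\Delta_i^2\bigr)$ you claim requires a per-round variance proxy of order $1$, but the IW increment $\hat\ell_{s,i}=\ell_{s,i}\I[j_s=i]/p_{s,i}$ has conditional variance of order $1/p_{s,i}$, and \emph{on the very good event your bootstrap maintains} (all gap estimates near their means), the algorithm deliberately explores arm $i$ with vanishing probability: $q_{s,i}\approx(1+\eta_s\hat\uL_{s,i})^{-(\alpha+1)/2}$ with $\eta_s\hat\uL_{s,i}\sim K^{1/\alpha-1/2}\sqrt{s}\,\Delta_i$, which for $\alpha=3$ gives $1/p_{s,i}\gtrsim s\Delta_i^2$ (the $K$ factors cancel against the normalization $\sum_j q_{s,j}$). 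The cumulative variance up to round $t$ is then $\Theta(t^2\Delta_i^2)$ --- the same order as the squared deviation $\bigl(\tfrac12 t\Delta_i\bigr)^2$ you must rule out --- so Freedman/Bernstein (even in its one-sided, jump-free form valid for lower tails of nonnegative increments) yields only a \emph{constant}, non-decaying bound on $\sP[A_{t,i}^c]$, and $\sum_t\Delta_i\,\sP[A_{t,i}^c]$ cannot be controlled. For $\alpha<3$ the exponent decays only like $t^{(3-\alpha)/4}$, which degenerates as $\alpha\to3$ and does not produce the $\gO(1/\Delta_i)$ sum you need; and $\alpha=3$ is precisely the case required for Corollary~\ref{cor: t-indep}. (A secondary error: $p_{s,i^*}=\Omega(1)$ is false in general --- at $t=1$, $p_{1,i^*}\sim K^{-(1/2-1/(2\alpha))}$ --- though this only costs constants.) This variance blow-up is intrinsic: any policy achieving constant stochastic regret must explore suboptimal arms so rarely that their loss estimates do not concentrate at a summable rate, which is exactly why the paper (following the FTRL/FTPL BOBW literature) replaces concentration with the self-bounding argument $\Reg(T)=2\Reg(T)-\E[\sum_t\sum_i\Delta_i w_{t,i}]$.
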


A proof sketch of Theorem~\ref{thm: sto} is provided in Section~\ref{sec: proof sketch sto}, with the detailed proof in Appendix \ref{app: stochastic}.
The theorem focuses on $\alpha \in (1,3]$, since for $\alpha > 3$ the dependence on $K$ worsens from $\sqrt{K}$ to $K^{\tfrac{\alpha-2}{\alpha-1}}$. Such degradation, which also arises in the BOBW FTRL policy with $\beta \in (2/3, 1)$, is undesirable~\citep{rouyer2020decoupled}.
Note that the bound in (\ref{eq: thm sto}) becomes independent of the time horizon $T$ for $\alpha \in (2,3]$, since $\sum_{t=1}^T t^{-\alpha/2}$ converges as $T \to \infty$ whenever $\alpha > 2$. 
In particular, $\alpha=3$ minimizes this bound, achieving near-optimal regret as follows. 

\begin{corollary}\label{cor: t-indep}
    In the same setting as in Theorem~\ref{thm: sto}, Algorithm~\ref{alg: FTPL} with $\alpha=3$ and $\eta_t = c K^{-\frac{1}{6}}/\sqrt{t}$ for $c>0$ satisfies
    \begin{align*}
        \Reg(T) \leq \gO\qty(\sqrt{\frac{K}{\Delta_\Min}\sum_{i\ne i^*}\frac{1}{\Delta_i}}+\frac{K}{\Delta_\Min}).
    \end{align*}
\end{corollary}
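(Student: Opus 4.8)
The plan is to specialize the proof of Theorem~\ref{thm: sto} to $\alpha = 3$ (equivalently $\beta = 2/3$ under the correspondence $\alpha = 1/(1-\beta)$) and to sharpen its final self-bounding step; the corollary is not a purely algebraic substitution into~(\ref{eq: thm sto}), since bounding that display naively at $\alpha=3$ only gives $\gO(\sqrt{K}\sum_{i\ne i^*}\Delta_i^{-2})$, which is looser than the claimed rate. The additive $\gO(K/\Delta_\Min)$ term is inherited verbatim, so the work is entirely in the first, gap-dependent term. For $\alpha\in(2,3]$ the series $\sum_{t=1}^T t^{-\alpha/2}$ converges, which is what makes the bound time-independent; among these exponents $\alpha=3$ is the sweet spot that both keeps the leading $K$-factor at $\sqrt K$ (rather than $K^{(\alpha-2)/(\alpha-1)}$ for $\alpha>3$) and yields the tightest gap dependence after the steps below.

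First I would return to the FTPL regret decomposition into a penalty term (scaling with the increments $\tfrac{1}{\eta_{t}}-\tfrac{1}{\eta_{t-1}}$) and a stability term (scaling with $\eta_t$), both built from the quantity $\sum_{i\ne i^*}w_{t,i}^{1-\beta/2}=\sum_{i\ne i^*}w_{t,i}^{2/3}$ that also governs the exploration weights $p_{t,i}\propto w_{t,i}^{2/3}$. The key inequality is the arm-wise Cauchy--Schwarz
\[
\sum_{i\ne i^*} w_{t,i}^{2/3}
=\sum_{i\ne i^*}\big(\Delta_i w_{t,i}\big)^{1/2}\Big(\tfrac{w_{t,i}^{1/3}}{\Delta_i}\Big)^{1/2}
\le \Big(\sum_{i\ne i^*}\Delta_i w_{t,i}\Big)^{1/2}\Big(\sum_{i\ne i^*}\tfrac{1}{\Delta_i}\Big)^{1/2},
\]
using $w_{t,i}^{1/3}\le 1$. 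This simultaneously pulls out the complexity factor $\sum_{i\ne i^*}1/\Delta_i$ and relates the per-round contribution to the instantaneous regret $R_t:=\sum_{i\ne i^*}\Delta_i w_{t,i}$, whose expectation sums to $\Reg(T)$ by~(\ref{eq: constrained pseudo regret}). Treating the learning-rate constant $c$ as a free trade-off parameter and balancing the penalty against the stability by AM--GM then turns the combined bound into a square root of (penalty factor)$\times$(stability factor); feeding in the Cauchy--Schwarz estimate converts this into $\sqrt{\tfrac{K}{\Delta_\Min}\sum_{i\ne i^*}\tfrac{1}{\Delta_i}}$, with the residual $\tfrac{K}{\Delta_\Min}$ originating from the optimal-arm and lower-order contributions.

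The hard part will be obtaining this square-root rate while keeping the bound time-independent. Applying the arm-wise Cauchy--Schwarz and then a second Cauchy--Schwarz over $t$ against $\sum_t 1/t$ would introduce a spurious $\log T$ factor and an inflated gap dependence; avoiding it requires coupling the self-bounding with the decay of $w_{t,i}$ in the stochastically constrained regime (equivalently, the convergence of $\sum_t t^{-3/2}$), so that the time sum contributes only a constant. Concretely, I expect the delicate bookkeeping to be in showing that the $\sqrt{R_t}/\sqrt{t}$-type sum is controlled by $\sqrt{\tfrac{K}{\Delta_\Min}}$ rather than by $\sqrt{\log T}\,\sqrt{\Reg(T)}$, and in correctly propagating the $K$-exponents through $\eta_t = cK^{1/\alpha-1/2}/\sqrt{t}$ and the exploration normalization $\sum_j w_{t,j}^{2/3}$. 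Once the gap-dependent term is shown to be $\gO\big(\sqrt{\tfrac{K}{\Delta_\Min}\sum_{i\ne i^*}\Delta_i^{-1}}\big)$, adding the inherited $\gO(K/\Delta_\Min)$ completes the proof.
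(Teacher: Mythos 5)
Your proposal correctly diagnoses that the corollary is not a plug-in of $\alpha=3$ into~(\ref{eq: thm sto}) (which only yields $\gO(\sqrt{K}\sum_{i\ne i^*}\Delta_i^{-2})$), but the mechanism you propose in its place does not close the gap, and the step you yourself flag as "delicate bookkeeping" is precisely the missing content. The paper's proof is a \emph{two-phase} argument: it introduces a cutoff $T_0 := D(x/\Delta_\Min)^2$, bounds rounds $t\le T_0$ by the adversarial analysis (contributing $\gO(\sqrt{KT_0}) = \gO(x\sqrt{KD}/\Delta_\Min)$), and applies the self-bounding technique with the \emph{per-arm} maximization $\max_{w\in[0,1]}\bigl(Z_1(\alpha) w^{1-1/\alpha}/\sqrt{t}-\Delta_i w\bigr)\le Z_2(\alpha)\Delta_i^{1-\alpha}t^{-\alpha/2}$ only for $t>T_0$; Lemma~\ref{lem: 9 of rouyer} then exploits $\sum_{t>T_0}t^{-3/2}\lesssim T_0^{-1/2}$ to convert $\Delta_i^{-2}$ into $\Delta_i^{-1}\cdot\Delta_\Min/(x\sqrt{D})$, and finally AM--GM over the \emph{free split parameter} $D$ balances $\sqrt{K}D^{-1/2}\sum_{i\ne i^*}\Delta_i^{-1}$ against $x\sqrt{KD}/\Delta_\Min$ to produce the geometric mean $\sqrt{\tfrac{K}{\Delta_\Min}\sum_{i\ne i^*}\tfrac{1}{\Delta_i}}$.

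Your two proposed ingredients cannot be combined into this. First, the arm-wise Cauchy--Schwarz route is a dead end for time-independence: after self-bounding, the per-round term $\max_{R\ge 0}\bigl(2a_t\sqrt{R}\sqrt{\sum_i 1/\Delta_i}-R\bigr)=a_t^2\sum_i 1/\Delta_i \propto K^{1/3}\bigl(\sum_i 1/\Delta_i\bigr)/t$ sums to a $\log T$ factor regardless of where you cut the time axis; the $t^{-3/2}$ convergence you invoke to kill that log belongs to the per-arm-maximization route (the one used in Theorem~\ref{thm: sto} and the paper's corollary proof), not to the Cauchy--Schwarz route, so the two ideas you want to couple are mutually exclusive rather than complementary. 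Second, the learning-rate constant $c$ cannot play the role of the balancing parameter: the stability bound scales as $c$ and the penalty as $1/c$, but both carry the \emph{same} gap and time dependence, so optimizing $c$ only tunes the leading numerical constant (this is exactly what the paper's separate "choice of $c$" paragraph does); the square-root structure in the corollary comes from optimizing the time split $D$, a parameter your proof never introduces. Without the hybrid adversarial/stochastic split, your argument proves at best $\gO(K^{1/3}\log T\sum_{i\ne i^*}1/\Delta_i)$ or falls back to the Theorem~\ref{thm: sto} bound, neither of which is the claimed result.
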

In general, our bound coincides with \citet{rouyer2020decoupled} for $\beta=2/3$.
Given the correspondence $\alpha = 1/(1-\beta)$ noted earlier, this similarity is natural.
However, under specific conditions on the suboptimality gaps, their overall bound can become tighter, as their first term reduces from $\gO(K/\Delta_\Min)$ to $\gO\qty(\sqrt{\frac{K}{\Delta_\Min}\sum_{i\ne i^*}\frac{1}{\Delta_i}})$.
In contrast, our first term already attains their best possible result without any extra assumptions.
The relative looseness of our result comes from the analysis of the second term, which scales as $K/\Delta_\Min$, whereas FTRL achieves the sharper $\sqrt{K}/\Delta_\Min$ rate.

We expect that the improved regret for FTPL-based policy in the SCA regime can be achieved by introducing arm-dependent learning rates, as in FTRL-based methods~\citep{jin2023improved}.
However, this would require significantly intricate analysis, since arm-dependent learning rates in FTPL are indeed equivalent to using arm-dependent perturbations with arm-independent learning rates in (\ref{eq: it FTPL}), leading to FTPL with non-i.i.d.~perturbations.
We leave the development of such an analysis for future work, as it is beyond the scope of this paper.



\subsection{Proof sketch of the regret in the SCA regime}\label{sec: proof sketch sto}
Here, we provide a proof sketch of Theorem~\ref{thm: sto}. 
We begin by decomposing the pseudo-regret in (\ref{eq: pseudo regret}), which can be seen as a reduction of Lemmas 3.3 and 3.4 in \citet{zhan2025follow} from the semi-bandit setting to the multi-armed bandit setting. 
Compared to prior analyses in multi-armed bandits~\citep{pmlr-v201-honda23a,pmlr-v247-lee24a,lee2025}, our decomposition avoids an additional $\log T$ factor.
For completeness, the detailed proof is given in Appendix~\ref{app: regret decomposition}.

\begin{lemma}[Regret decomposition] \label{lem: regret decomposition}
    Let $\{\eta_t\}_{t\in [T]}$ be a sequence of positive, decreasing learning rates and $\eta_0 = \infty$. Then, Algorithm~\ref{alg: FTPL} with $\alpha >1$ satisfies
    \begin{align*}
        \Reg(T) &\leq \sum_{t=1}^T \E\qty[\inp{\hat\ell_t}{\phi(\eta_t\hat{L}_t) - \phi(\eta_t\hat{L}_{t+1})}] \\
        &+ \sum_{t=1}^{T+1} \qty(\frac{1}{\eta_t} - \frac{1}{\eta_{t-1}})\E_{r_t \sim \gP_\alpha^K}\qty[r_{t,i_t} - r_{t,i^*}]. \numberthis{\label{eq: regret decomposition}}
    \end{align*}
\end{lemma}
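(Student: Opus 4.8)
The plan is to run the standard Follow-the-Perturbed-Leader potential argument, treating the Pareto perturbation as inducing a concave potential whose gradient is exactly the exploitation distribution $\phi$. Concretely, I would introduce, for each round $t$, the potential $\Phi_t(L) := \E_{r\sim\gP_\alpha^K}[\min_{i\in[K]}\{L_i - r_i/\eta_t\}]$, which is finite-valued (using $\alpha>1$, so that $\E[\max_i r_i]<\infty$) and concave in $L\in[0,\infty)^K$, being an expectation of a minimum of affine maps. The crucial structural fact, obtained from the envelope/Danskin theorem together with the almost-sure uniqueness of the minimizer (the Pareto law is atomless), is $\nabla_L \Phi_t(L) = \E_r[e_{i(L,r)}] = \phi(\eta_t L)$, where $i(L,r) = \argmin_i\{L_i - r_i/\eta_t\}$. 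This identifies the exploitation vector $w_t = \phi(\eta_t\hat{L}_t)$ as a gradient of $\Phi_t$, which is what lets potential differences control the regret.

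Starting from $\Reg(T) = \sum_t \E[\inp{\hat\ell_t}{\phi(\eta_t\hat{L}_t) - e_{i^*}}]$ in (\ref{eq: pseudo regret}), I would add and subtract $\phi(\eta_t\hat{L}_{t+1})$ to split each summand exactly as $\inp{\hat\ell_t}{\phi(\eta_t\hat{L}_t) - \phi(\eta_t\hat{L}_{t+1})} + \inp{\hat\ell_t}{\phi(\eta_t\hat{L}_{t+1}) - e_{i^*}}$. The first piece, summed and taken in expectation, is precisely the stability term (the first sum in (\ref{eq: regret decomposition})), so no further work is needed there. For the second, ``be-the-leader'' piece I would use that $\hat{L}_{t+1} - \hat{L}_t = \hat\ell_t$ and apply the tangent-line inequality for the concave $\Phi_t$ at $\hat{L}_{t+1}$, namely $\inp{\phi(\eta_t\hat{L}_{t+1})}{\hat\ell_t} = \inp{\nabla\Phi_t(\hat{L}_{t+1})}{\hat{L}_{t+1}-\hat{L}_t} \le \Phi_t(\hat{L}_{t+1}) - \Phi_t(\hat{L}_t)$, all conditional on the history. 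Subtracting $\hat\ell_{t,i^*} = \hat{L}_{t+1,i^*} - \hat{L}_{t,i^*}$ turns the residual into $A_t - B_t$, with $A_t := \Phi_t(\hat{L}_{t+1}) - \hat{L}_{t+1,i^*}$ and $B_t := \Phi_t(\hat{L}_t) - \hat{L}_{t,i^*}$.

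Summing $A_t - B_t$ over $t$ telescopes up to the learning-rate mismatch between $\Phi_t$ and $\Phi_{t+1}$ at the common point $\hat{L}_{t+1}$, giving $\sum_{t=1}^T(A_t - B_t) = B_{T+1} - B_1 + \sum_{t=1}^T[\Phi_t(\hat{L}_{t+1}) - \Phi_{t+1}(\hat{L}_{t+1})]$. To handle the change in learning rate I would view $\lambda \mapsto \E_r[\min_i\{L_i - \lambda r_i\}]$ as a concave function of $\lambda := 1/\eta_t$ (again a min of affine maps), whose envelope derivative is $-\E_r[r_{i(L,r)}]$; the tangent-line inequality at $\lambda=1/\eta_{t+1}$ then yields $\Phi_t(\hat{L}_{t+1}) - \Phi_{t+1}(\hat{L}_{t+1}) \le (1/\eta_{t+1}-1/\eta_t)\E_r[r_{i_{t+1}}]$, where the derivative evaluates at the genuinely chosen arm $i_{t+1}=\argmin_i\{\hat{L}_{t+1,i}-r_i/\eta_{t+1}\}$. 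Reindexing and using $\hat{L}_1 = \mathbf{0}$ (so that $i_1 = \argmax_i r_i$ and $-B_1 = \E_r[r_{1,i_1}]/\eta_1$) collects exactly the $r_{t,i_t}$ part of the target sum with weights $1/\eta_t - 1/\eta_{t-1}$ and $\eta_0=\infty$; bounding the minimum in $B_{T+1}$ by its $i^*$-coordinate gives $B_{T+1} \le -\E[r_{i^*}]/\eta_{T+1}$, which equals $-\sum_{t=1}^{T+1}(1/\eta_t - 1/\eta_{t-1})\E[r_{t,i^*}]$ because $\E[r_{t,i^*}]$ is constant and the weights telescope to $1/\eta_{T+1}$. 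Taking the outer expectation over the history throughout produces (\ref{eq: regret decomposition}).

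The main obstacle I anticipate is the bookkeeping around the time-varying learning rate: one must verify that the envelope derivative of the $\lambda$-potential lands exactly on the chosen arm $i_{t+1}$ (and, at the boundary, that $i_1 = \argmax_i r_i$), and then check that the two boundary contributions $-B_1$ and $B_{T+1}$ assemble precisely into the $t=1$ term and the full $-r_{t,i^*}$ sum, so that the constant-mean cancellation delivers the clean form $\E[r_{t,i_t}-r_{t,i^*}]$ rather than leftover terms. The remaining technical points — differentiating $\Phi_t$ under the expectation, almost-sure uniqueness of the argmin, and finiteness of $\E[\max_i r_i]$ — are routine given $\alpha>1$, but this is exactly where the atomlessness and the heavy-tail moment condition of the Pareto law must be invoked.
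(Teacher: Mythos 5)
Your proposal is correct, but it reaches the decomposition by a genuinely different route from the paper. The paper casts FTPL as FTRL via the convex conjugate: it defines $\Phi(\lambda)=\E_r[\max_i\{\lambda_i+r_i\}]$, writes $w_t$ as the minimizer of $\Phi^*(x)/\eta_t+\inp{x}{\hat{L}_t}$, and runs the standard FTRL analysis (adapted from Lemmas 3.3 and 3.4 of Zhan et al.), where the stability term appears after discarding nonnegative Bregman divergences $D_\Phi$, and the penalty term comes from the identities $\Phi^*(w_t)=-\E[r_{t,i_t}]$ and $\Phi^*(e_{i^*})\le-\E[r_{i^*}]$. You instead stay entirely in the primal: the concave min-potential $\Phi_t(L)=\E_r[\min_i\{L_i-r_i/\eta_t\}]$ with Danskin gradient $\phi(\eta_t L)$, a supergradient (tangent-line) inequality in $L$ for the be-the-leader step, a second supergradient inequality in $\lambda=1/\eta$ for the learning-rate drift, and then telescoping plus boundary terms. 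Your key steps all check out: the sign of the $\lambda$-tangent inequality uses that the learning rates are decreasing (so the weights $1/\eta_t-1/\eta_{t-1}$ are nonnegative); the boundary term $-B_1$ equals the $t=1$ penalty term because $\hat{L}_1=\mathbf{0}$ forces $i_1=\argmax_i r_i$ and $\eta_0=\infty$; and $B_{T+1}\le-\E[r_{i^*}]/\eta_{T+1}$ distributes over the telescoping weights precisely because $\E[r_{t,i^*}]$ is constant in $t$. The two arguments discard the same slack (your tangent-line gaps are the primal face of the paper's Bregman divergences) and land on identical stability and penalty expressions. What each buys: your route is more elementary and self-contained, avoiding conjugacy and Bregman machinery altogether, and makes the origin of each term transparent; the paper's route connects the policy to the FTRL literature, which lets it import existing lemmas wholesale and yields the per-round penalty identity $\Phi^*(e_{i^*})-\Phi^*(w_t)\le\E[r_{t,i_t}-r_{t,i^*}]$ directly rather than through telescoping of boundary terms.
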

Following the convention, we refer to the first and second term of (\ref{eq: regret decomposition}) as the stability term and penalty term, respectively. 
The penalty term can be bounded from above as follows, providing a slightly tighter bound than \citet{pmlr-v247-lee24a}. 
The proof is provided in Appendix~\ref{app: penalty bound}.
\begin{lemma}\label{lem: penalty bound}
    For any $t \in [T]$, Algorithm \ref{alg: FTPL} with $\alpha >1$ satisfies 
    \begin{multline*}
        \E_{r_t \sim \gP_\alpha^K}\qty[r_{t,i_t} - r_{t,i^*} \middle | \hat{L}_t] \\ \leq \frac{\alpha}{\alpha-1} \sum_{i \ne i^*} \frac{1}{(1 + \eta_t\hat\uL_{t,i})^{\alpha-1}} \land C_\alpha K^{\frac{1}{\alpha}},
    \end{multline*}
    where $C_\alpha = \frac{2\alpha^3 + (e-2)\alpha^2}{(\alpha-1)(2\alpha-1)}$.
\end{lemma}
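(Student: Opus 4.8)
The plan is to prove the two bounds inside the minimum separately, both conditioned on $\hat{L}_t$ (so that the fresh perturbations $r_t\sim\gP_\alpha^K$ are independent of everything in the conditioning). Since $\E[r_{t,i^*}\mid\hat{L}_t]=\tfrac{\alpha}{\alpha-1}$ is just the Pareto mean, it remains to control $\E[r_{t,i_t}\mid\hat{L}_t]$. For the first bound I would further use the decomposition $\E[r_{t,i_t}-r_{t,i^*}\mid\hat{L}_t]=\sum_{i\ne i^*}\E[(r_{t,i}-r_{t,i^*})\I[i_t=i]\mid\hat{L}_t]\le\sum_{i\ne i^*}\E[r_{t,i}\I[i_t=i]\mid\hat{L}_t]$, where the $i=i^*$ term vanishes and $r_{t,i^*}\ge1>0$ is dropped.

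For the first (gap-dependent) branch, the key observation is to compare the selected arm not with $i^*$ but with the empirical leader $m=\argmin_j\hat{L}_{t,j}$, for which $\hat\uL_{t,m}=0$. On $\{i_t=i\}$ arm $i$ beats $m$, i.e. $\hat\uL_{t,i}-r_{t,i}/\eta_t\le -r_{t,m}/\eta_t$, and since $r_{t,m}\ge1$ this forces $r_{t,i}\ge1+\eta_t\hat\uL_{t,i}$. Thus $\{i_t=i\}\subseteq\{r_{t,i}\ge1+\eta_t\hat\uL_{t,i}\}$, and by independence of $r_{t,i}$ from $\hat{L}_t$ together with the elementary identity $\E[r\,\I[r\ge a]]=\tfrac{\alpha}{\alpha-1}a^{1-\alpha}$ for $a\ge1$, each summand is at most $\tfrac{\alpha}{\alpha-1}(1+\eta_t\hat\uL_{t,i})^{-(\alpha-1)}$. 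Summing over $i\ne i^*$ yields the first bound exactly.

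For the second ($K$-dependent) branch, I would use the trivial but effective bound $r_{t,i_t}\le\max_{j}r_{t,j}$, giving $\E[r_{t,i_t}-r_{t,i^*}\mid\hat{L}_t]\le\E[\max_j r_{t,j}]-\tfrac{\alpha}{\alpha-1}$. Writing $\E[\max_j r_{t,j}]=1+\int_1^\infty\big(1-(1-y^{-\alpha})^K\big)\dd y$, bounding the integrand by $\min(1,Ky^{-\alpha})$ via Bernoulli's inequality, and splitting the integral at $y=K^{1/\alpha}$ gives $\E[\max_j r_{t,j}]\le\tfrac{\alpha}{\alpha-1}K^{1/\alpha}$, hence the claim with the clean constant $\tfrac{\alpha}{\alpha-1}\le C_\alpha$. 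A more granular route that reproduces the stated $C_\alpha$ literally is to bound, for the arm of rank $\sigma_{t,i}$, the product $\prod_{j\ne i}F(\cdot)\le F(y)^{\sigma_{t,i}-1}$ (every arm with a smaller loss estimate contributes a factor $\le F(y)$), sum the resulting integrals over the ranks $1,\dots,K$, and estimate $\sum_{k=1}^K F(y)^{k-1}$ through $F(y)^{k-1}\le e^{-(k-1)y^{-\alpha}}$; this is where the $e-2$ and the $2\alpha-1$ denominator enter.

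The only genuinely non-obvious step is the threshold identification in the first branch: the naive comparison of $i_t$ against $i^*$ yields merely a lower bound on $r_{t,i_t}-r_{t,i^*}$, so one must instead exploit the empirical leader $m$, whose perturbation is bounded below by $1$, to convert selection into the pointwise event $r_{t,i}\ge1+\eta_t\hat\uL_{t,i}$. The second branch is routine once one recognizes $r_{t,i_t}\le\max_j r_{t,j}$; the remaining work is purely the calculus of the order-statistic integral, and the gap between the clean constant $\tfrac{\alpha}{\alpha-1}$ and the paper's $C_\alpha$ simply reflects how tightly one chooses to estimate that integral.
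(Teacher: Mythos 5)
Your proof is correct and, for the $K$-dependent branch, genuinely different from the paper's. For the gap-dependent branch the two arguments coincide in substance: the paper writes $\E[\I[i_t=i]\,r_{t,i}\mid \hat L_t]$ as the exact integral $\int_1^\infty \frac{\alpha}{(z+\eta_t\hat\uL_{t,i})^{\alpha}}\prod_{j\ne i}F(z+\eta_t\hat\uL_{t,j})\,\dd z$ and then drops the CDF product; your replacement of $\{i_t=i\}$ by the superset $\{r_{t,i}\ge 1+\eta_t\hat\uL_{t,i}\}$ (via the empirical leader, whose perturbation is at least $1$ --- the same fact that lets the paper start its integral at $z=1$) followed by the identity $\E[r\,\I[r\ge a]]=\frac{\alpha}{\alpha-1}a^{1-\alpha}$ is the identical computation in probabilistic language. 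The real divergence is the second branch: the paper keeps the selection-probability structure, bounds the product by $e^{-k_\alpha(z)}$ with $k_\alpha(z)=\sum_i (z+\eta_t\hat\uL_{t,i})^{-\alpha}$, splits the integral at $z=K^{1/\alpha}$, and controls the tail through the lower incomplete gamma function $\gamma(1-1/\alpha,1)$, which is exactly where $C_\alpha=\frac{2\alpha^3+(e-2)\alpha^2}{(\alpha-1)(2\alpha-1)}$ comes from; you instead discard the selection structure entirely via $r_{t,i_t}\le \max_j r_{t,j}$ and use the elementary order-statistic bound $\E[\max_j r_{t,j}]\le \frac{\alpha}{\alpha-1}K^{1/\alpha}$, yielding $\frac{\alpha}{\alpha-1}(K^{1/\alpha}-1)$. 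Since $\frac{\alpha}{\alpha-1}\le C_\alpha$ for all $\alpha>1$ (the ratio $C_\alpha\cdot\frac{\alpha-1}{\alpha}=\frac{2\alpha^2+(e-2)\alpha}{2\alpha-1}$ is at least $1$ because the quadratic $2\alpha^2+(e-4)\alpha+1$ has negative discriminant), your bound implies the stated one and is in fact tighter --- at $\alpha=3$ your constant is $3/2$ versus $C_3\approx 6.05$ --- and it vanishes as $\alpha\to\infty$, which is precisely the qualitative behavior the paper's remark after its proof aims to capture. The only weak spot is your optional ``more granular route'' claiming to reproduce $C_\alpha$ literally: the rank-based product bound $\prod_{j\ne i}F(\cdot)\le F(\cdot)^{\sigma_{t,i}-1}$ is really the mechanism of the paper's stability analysis (Lemma~\ref{lem: stability bound}), not the source of $C_\alpha$, and as sketched it would not produce that constant; but since your primary route already proves a strengthening of the stated inequality, that aside is dispensable.
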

The stability term can be bounded from above as follows.
\begin{lemma} \label{lem: stability bound}
    For any $t \in [T]$, $i \in [K]$ and $q_t$ defined in (\ref{def: exploration prob}), Algorithm \ref{alg: FTPL} with $\alpha >1$ satisfies 
    \begin{multline*}
        \E\qty[\hat\ell_{t,i}\qty(\phi_i(\eta_t\hat{L}_t) - \phi_i(\eta_t\hat{L}_{t+1})) \middle | \hat{L}_t] \\ \leq e(\alpha+1)\eta_t \sum_{j\in [K]}q_{t,j} q_{t,i}. \hspace{1em}
    \end{multline*}
\end{lemma}
We provide the proof of Lemma \ref{lem: stability bound} in Appendix \ref{app: stability bound}.
While the proof structure follows the previous analysis~\citep{pmlr-v247-lee24a}, our construction of $p_{t,i}$ allows the stability term to be upper bounded in terms of $q_{t,i}$, which enables BOBW guarantees in the decoupled setting.

Under the learning rate in Theorems~\ref{thm: adv} and~\ref{thm: sto}, the order of the bound on penalty term is never larger than that of the stability for $\alpha \in (1,3]$, making the stability term dominant in the regret.
This observation is consistent with \citet{rouyer2020decoupled}, who analyzed the case $\beta \in (0,2/3]$.

As discussed in Corollary~\ref{cor: t-indep}, one may consider the use of arm-dependent~\citep{jin2023improved} or stability–penalty matching~\citep{tsuchiya2024stability,pmlr-v247-ito24a} learning rates, instead of the current simple learning rate, to equalize contributions of the two terms. 
We expect this could yield (possibly improved) BOBW guarantees of FTPL for general $\alpha > 1$.
However, to the best of our knowledge, such designs have not been explored for FTPL, in contrast to FTRL frameworks, which leverage the explicit probability vector $w_t$, an approach that cannot be directly extended to FTPL due to its inherent structure.
Therefore, developing a principled counterpart of such adaptive learning rate designs for FTPL is an important direction for future work, and we believe our approach of replacing $w_t$ with the efficiently computable $q_t$ provides a promising direction for such extensions.

\paragraph{Proof sketch.} 
In the SCA regime, the regret is written in terms of $\Delta_i$ and exploitation probability $w_{t,i}$ as in (\ref{eq: constrained pseudo regret}).
Therefore, to derive a desired bound, we need to express the stability and penalty terms in terms of $w_{t,i}$ and $\Delta_i$.
Using analysis techniques similar to those in previous FTPL analyses for the standard multi-armed bandits~\citep{pmlr-v201-honda23a, pmlr-v247-lee24a}, we define the following events:
\begin{align}\label{def: event Dt}
    D_t := \qty{\sum_{i \ne i^*} \frac{1}{(2^{1/\alpha} + \eta_t \hat\uL_{t,i})^\alpha} \leq \frac{1}{2}}.
\end{align}
This event is introduced only for the sake of analysis, and Algorithm~\ref{alg: FTPL} does not need to know whether $D_t$ occurs.
On $D_t$, it implies that $\hat\uL_{t,i^*} = 0$, indicating that the optimal arm $i^*$ has been accurately identified based on the information so far, and we can bound $q_{t,i}$ in terms of $w_{t,i}$ as follows. 
\begin{align*}
    q_{t,i} \leq \qty(\frac{1}{1+\eta_t\hat{\uL}_{t,i}})^{\frac{\alpha+1}{2}} \lesssim w_{t,i}^{\frac{1}{2}+\frac{1}{2\alpha}} \lesssim w_{t,i}^{1-\frac{1}{\alpha}}, \, \forall i \ne i^*,
\end{align*}
where the second step follows from Lemma~\ref{lem: lb and ub of w_{t,i}} and the last step holds when $\alpha \in (1,3]$.
This relationship is a key technical lemma in our analysis and also provides the intuition behind our choice of $q_t$.
In particular, the design of $D_t$ enables us to establish an explicit relationship between $q_{t,i}$ and $w_{t,i}$, which may be of independent interest.

\begin{figure*}[ht] 
\begin{minipage}{0.49\linewidth}
    \centering
    \includegraphics[width=\textwidth
    ]{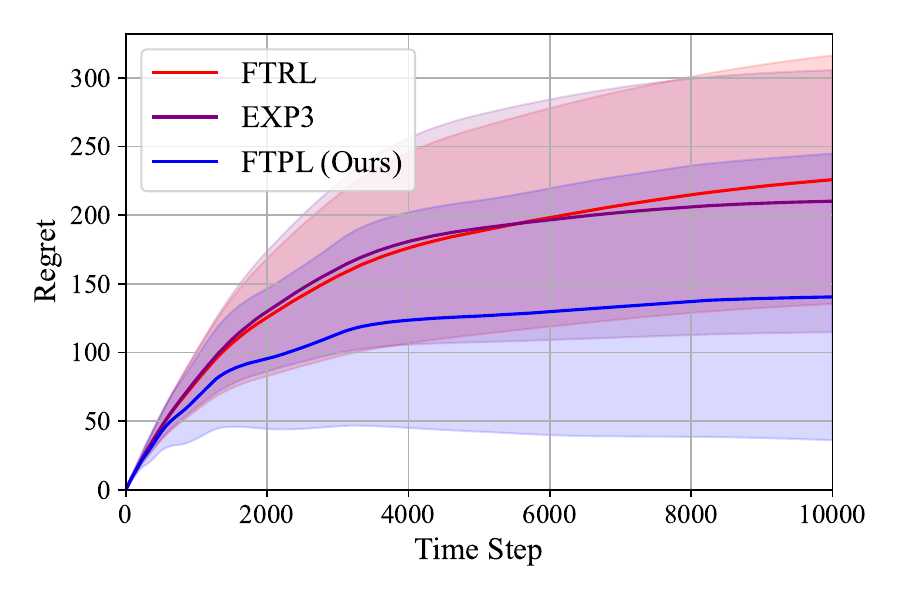}
    \caption{Adversarial regret with $\Delta = 0.125$}
    \label{fig: Regret_0.125}
\end{minipage}
\hfill
\begin{minipage}{0.49\linewidth}
    \centering
    \includegraphics[width=0.92\textwidth 
    ]{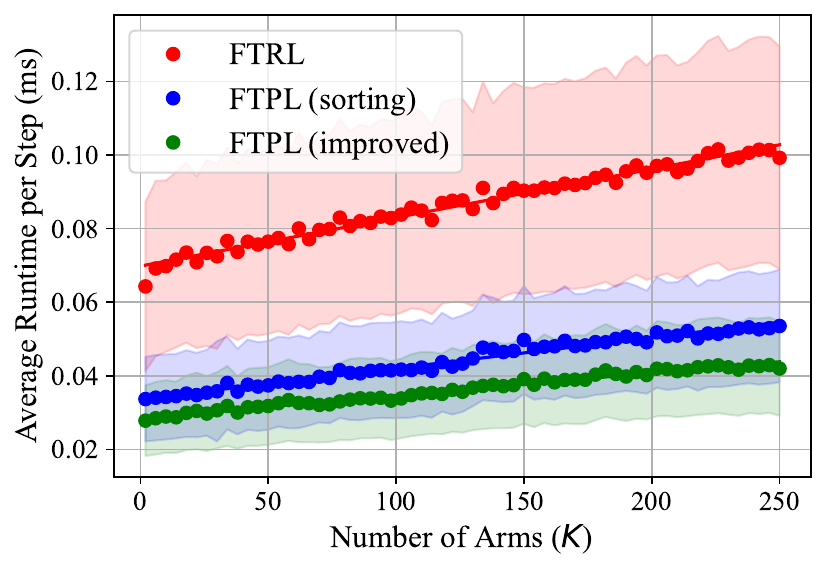}
    \caption{Computation time per-step (ms), 100 runs}
    \label{fig: complexity_newton}
\end{minipage}
\end{figure*}

In Appendix~\ref{app: aux lem}, we show that the stability term of the optimal arm $i^*$  is bounded by that of the suboptimal arms on $D_t$, i.e., $\gO(\eta_t\sum_{j\in [K]}q_{t,j} \cdot \sum_{i\ne i^*} q_{t,i})$.
Therefore, by Lemma~\ref{lem: stability bound}, we have
\begin{align*}
    &\Reg(T) \\
    &\leq \E\qty[\sum_{t=1}^T \gO\qty(\I[D_t] \frac{K^{\frac{1}{\alpha}-\frac{1}{2}}}{\sqrt{t}}\sum_{j \in [K]} q_{t,j} \sum_{i\ne i^*} w_{t,i}^{1-\frac{1}{\alpha}})] \\
    & \hspace{5em}+ \E\qty[\sum_{t=1}^T \gO\qty(\I[D_t^c]\sqrt{\frac{K}{t}})] \\
    &\leq \E\qty[\sum_{t=1}^T \gO\qty(\I[D_t] \frac{K^{\frac{1}{2\alpha}}}{\sqrt{t}}\sum_{i\ne i^*} w_{t,i}^{1-\frac{1}{\alpha}} + \I[D_t^c]\sqrt{\frac{K}{t}})],
\end{align*}
where the last step follows from the definition of $q_{t,i}$, which implies $\sum_{i} q_{t,i} \leq \sum_{i} i^{-\frac{1}{2}-\frac{1}{2\alpha}} \leq \frac{2\alpha}{\alpha-1} K^{\frac{1}{2}-\frac{1}{2\alpha}}$.
On the complement event $D_t^c$, we apply the uniform bounds from Lemmas~\ref{lem: penalty bound} and~\ref{lem: stability bound}, which leads to terms depending only on $K$, $t$, and $\alpha$.
By the definition of pseudo-regret in (\ref{eq: constrained pseudo regret}), we obtain
\begin{align*}
    \Reg(T)  \geq \E\qty[\sum_{t=1}^T\I[D_t] \sum_{i\ne i^*} \Delta_iw_{t,i} +  \I[D_t^c] 0.196\Delta_{\Min}],
\end{align*}
where the value of $0.196$ is introduced by Lemma~\ref{lem: Dtc lower bound}, which provides the lower bound on $w_{t,i_t}$.
By applying the self-bounding technique, we have
\begin{align*}
    \Reg(T) &\leq \E\qty[\sum_{t=1}^T \gO\qty(\sum_{i\ne i^*} \qty(\frac{K^{\frac{1}{2\alpha}}}{\sqrt{t}}w_{t,i}^{1-\frac{1}{\alpha}} -\Delta_i  w_{t,i})) ] \\
     &\hspace{3em}+\E\qty[\sum_{t=1}^T\gO\qty(\sqrt{\frac{K}{t}}-\Delta_\Min)].
\end{align*}
Since $\max_{w\in [0,1]} \frac{Aw^{1-\frac{1}{\alpha}}}{\sqrt{t}}-\Delta_i w \leq \gO(A^{\alpha}\Delta_i^{1-\alpha} t^{-\alpha/2})$, the regret satisfies
\begin{align*}
    \Reg(T) \leq \gO\qty(\qty(\sum_{t=1}^T \sum_{i\ne i^*} \sqrt{K} \Delta_i^{1-\alpha}t^{-\frac{\alpha}{2}})+\frac{K}{\Delta_\Min}),
\end{align*}
which concludes the proof.

\section{Numerical experiments}\label{sec: experiment}
We evaluate the empirical performance and computational efficiency of our policy, Algorithm~\ref{alg: FTPL}, under two regimes in the decoupled setting: adversarial and stochastic regimes. All experiments are conducted for $1000$ independent repetitions unless otherwise specified, 
with the time horizon $T = 10000$, $\alpha = 3$ for Algorithm~\ref{alg: FTPL}, and $\beta=2/3$ for Decoupled-Tsallis-INF~\citep{rouyer2020decoupled}.
We also provide the results of FTRL with Shannon entropy, which corresponds to the case of $\beta = 1$ for Decoupled-Tsallis-INF. 
This policy is closely related to the decoupled Exp3 policy considered in \citet{avner2012decoupled} as discussed in Section~\ref{sec: previous decouple}, where arm-selection probabilities is written in the closed form.
For simplicity, we denote Algorithm~\ref{alg: FTPL} by $\mathsf{FTPL}$, $\beta=2/3$ and $\beta=1$ instances of Decoupled-Tsallis-INF by $\mathsf{FTRL}$ and $\mathsf{EXP3}$, respectively.
Although the constant $c$ in the learning rate $\eta_t$ can be tuned either to minimize regret bound analytically or empirically, we set $c=2$ following prior studies~\citep{zimmert2021tsallis, pmlr-v201-honda23a, pmlr-v247-lee24a}. 
The shaded regions in the figures indicate one standard deviation.
Details on the implementation and the additional results including real-world data-based simulation are given in Appendix~\ref{app: real world experiment}.

\subsection{Implementation of FTPL} \label{sec: implementation}
The computation of $p_t$ in (\ref{def: exploration prob}) requires the rank $\sigma_{t,i}$ of $\hat{L}_{t,i}$.
As discussed in Section~\ref{sec: proposed}, a straightforward implementation recomputes these ranks at every round by sorting the values $\{\hat{L}_{t,i}\}_i$, which incurs an average complexity of $\gO(K\log K)$.
However, this is unnecessarily inefficient since the loss estimator is updated only for the selected arm $i_t$, and therefore the ranks of all other arms can change by at most one.

Consequently, instead of re-sorting all arms, it suffices to locate the updated rank $\sigma_{t+1,i_t}$ of the played arm while maintaining the previous rank in memory $\Theta(K)$.
The optimized version locates the new rank of played arm via binary search in $\gO(\log K)$ and then updates the rank of affected block of arms, which takes $\gO(K)$ in the worst case.

\begin{figure*}
\begin{minipage}{0.49\linewidth}
    \centering
    \includegraphics[width=\textwidth]{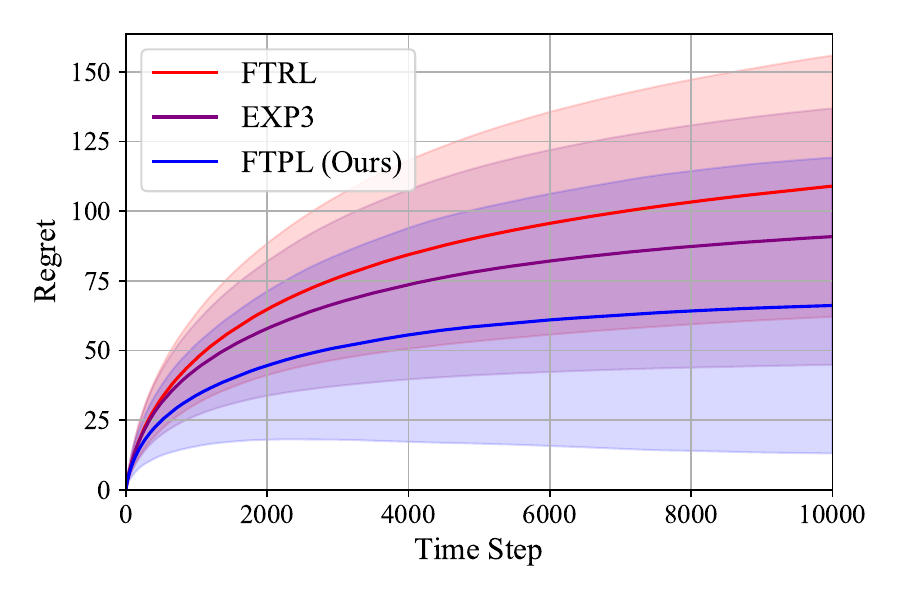}
    \vspace{-1.4em}
    \caption{Stochastic regret with $\mu_1 = [0.55, 0.6, 0.45, 0.3, 0.2]$ and $\Delta_{\text{min}}=0.05$}
    \label{fig: mu1}
\end{minipage}
\hfill
\begin{minipage}{0.49\linewidth}
    \centering
    \includegraphics[width=\textwidth]{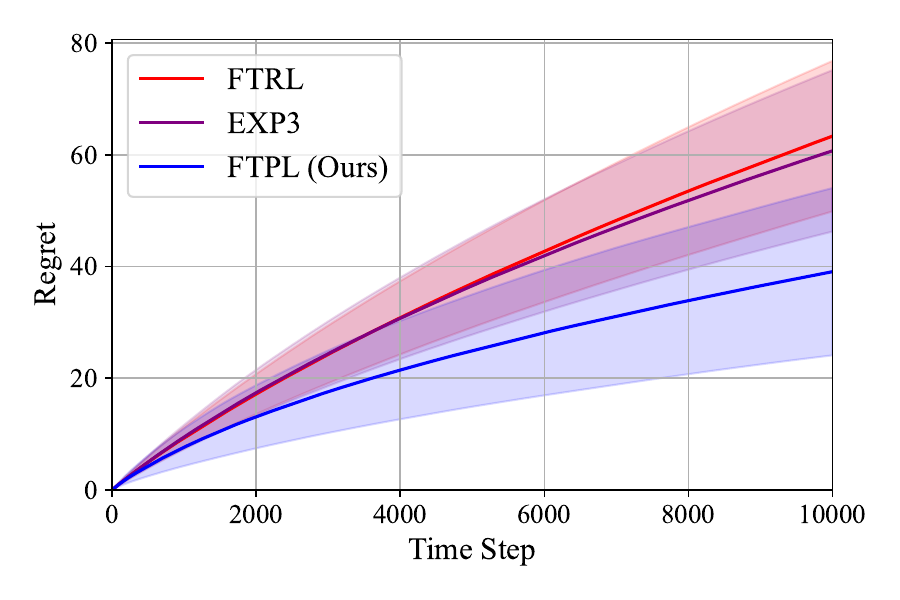}
    \vspace{-1.4em}
    \caption{Stochastic regret with $\mu_2 = [0.905, 0.91, 0.89, 0.908,$ $0.88]$ and $\Delta_{\text{min}}=0.002$}
    \label{fig: mu2}
\end{minipage}
\vspace{-0.8em}
\end{figure*}
\subsection{Adversarial regime}
For the adversarial regime, we follow the setup of \citet{zimmert2021tsallis}. 
Specifically, the losses are generated from Bernoulli distributions, where mean losses of the optimal arm and all suboptimal arms alternate between $(0, \Delta)$ and $(1-\Delta, 1)$, with the duration of each phase growing exponentially as $\lfloor 1.6^n \rfloor$, where $n$ denotes the phase index. We consider an eight-armed bandit with a unique optimal arm under $\Delta=0.125$. To further demonstrate its scalability and robustness, we extend this evaluation to 16 distinct instances by varying both $K$ and $\Delta$, with the results provided in Appendix~\ref{app: adv experiment}.


The first experiment (Figures~\ref{fig: Regret_0.125} and~\ref{fig: complexity_newton}) evaluates the performance of BOBW policies under $\Delta=0.125$. Specifically, Figure~\ref{fig: Regret_0.125} presents the empirical regret performance, showing that our policy achieves lower cumulative regret.

Figure~\ref{fig: complexity_newton} shows the computational efficiency of $\mathsf{FTPL}$ relative to $\mathsf{FTRL}$.
Here, following the implementation details in Section~\ref{sec: implementation}, we refer to the straightforward sorting-based implementation as $\mathsf{FTPL(sorting)}$, and to the optimized implementation as $\mathsf{FTPL(improved)}$.
The average per-step runtime is measured as the number of arms increases, with $K \in \{4x+2: x =0,1,\ldots, 63\}$, and we additionally report a linear regression fit, as a small number of outliers appear for certain values of $K$, likely due to computational resource variability. 

While the FTRL policies generally require solving a convex optimization problem, in the Tsallis entropy case, $w_t$ can be computed efficiently using Newton's method with warm start~\citep{zimmert2021tsallis}. 
Even with these efficient implementations, Figure~\ref{fig: complexity_newton} shows that the runtime of $\mathsf{FTRL}$ is slower than that of $\mathsf{FTPL}$ as also reflected by the larger slope of the line even when compared with the sorting-based implementation of $\mathsf{FTPL}$, which has $\gO(K\log K)$ average complexity.

Although the optimization can be solved by iterative Newton’s method that costs $\gO(K)$ per iteration, as noted by \citet{chen2025geometric}, the required number of iterations to satisfy its regret guarantee has not been formally characterized, which makes the total computational cost of $\mathsf{FTRL}$ remain theoretically unclear.
While the difference between $\gO(K)$ and $\gO(K\log K)$ may not be clearly visible in moderate $K$, the key observation is that the two implementations of $\mathsf{FTPL}$ have fundamentally different theoretical complexities.
In this context, $\mathsf{FTRL}$ appears to show a similar or slightly less favorable trend than the $\gO(K\log K)$ implementation due to the iterative nature of Newton’s method.


To further demonstrate the efficiency of Newton's method, we evaluate the per-step runtime of $\mathsf{FTRL}$ using a splitting conic solver (SCS) for $K \in \{2^i: i \in [6]\}$ in Appendix~\ref{app: adv experiment}~\citep{ocpb:16}.
In this setting, we observe that the runtime increases by at least $100$ times.

    
    
    
    
\subsection{Stochastic regime}
In this setting, we consider the two five-armed bandit instances with Bernoulli rewards and a unique optimal arm.
The first is an easy instance with $\mu_1 = [0.55, 0.6, 0.45,$ $0.3, 0.2]$ and $\Delta_{\min}=0.05$.
The second one is a difficult instance with $\mu_2 = [0.905,$$ 0.91, 0.89, 0.908, 0.88]$, which we construct to have smaller gaps $\Delta_{\min}=0.002$ between arms while all arms have high mean rewards, making it harder to distinguish the optimal arm.

In Figure~\ref{fig: mu1}, the regret of all policies gradually converges, which is consistent with the theoretical analysis. However, as the problem instances become more challenging from Figure~\ref{fig: mu1} to Figure~\ref{fig: mu2}, it seems more rounds are required for convergence in the harder setting.
Nevertheless, in Figures~\ref{fig: Regret_0.125},~\ref{fig: mu1}, and~\ref{fig: mu2}, all BOBW policies show the intended robustness across both stochastic and adversarial regimes. 
Among them, $\mathsf{FTPL}$ achieves the best empirical performance while also offering improved computational efficiency.

\section{Conclusion}
We proposed a practically efficient FTPL policy with Pareto perturbations that achieves BOBW guarantees in the decoupled multi-armed bandit problems. Our policy achieves optimal regret $\gO(\sqrt{KT})$ in the adversarial regime, and a near-optimal time-independent regret bound of $\gO(K/\Delta_\Min)$ in the stochastically constrained adversarial regime. Our result improves upon the optimal regret bound of the standard stochastic multi-armed bandits, $\gO(\sum_{i\ne i^*}\log T/\Delta_i)$, which is time-dependent. 

In addition to these theoretical strengths, we avoid both the convex optimization step in previous BOBW policy, and the resampling step typically required in FTPL for estimating arm-selection probabilities. 
As a result, our policy shows improved computational efficiency compared to Decoupled-Tsallis-INF, while achieving better empirical performance across both regimes. 

A key idea of our policy is the replacement of $w_t$ with an efficiently computable approximation $q_t$.
We expect that this approach can be used to design the refined learning rates (e.g., adaptive learning rates) for FTPL: by using an approximation of $w_t$, one can adjust the learning rate in a way analogous to FTRL frameworks, where the learning rate is explicitly determined by $w_t$. 
This perspective potentially serves as a foundation for establishing BOBW guarantees for FTPL beyond the MAB setting.

\section*{Acknowledgements}
CK and JL were supported by the National Research Foundation of Korea (NRF) grant funded by Korea government (MSIT) (No. RS-2024-00395303).
JL was also supported by the grant Nos. 2024-00460980; and 2025-02304717 (IITP) funded by the Korea government (the Ministry of Science and ICT).
MO was supported by the NRF and the IITP (Nos. RS-2022-NR071853, RS-2023-00222663, RS-2025-25463302).



\section*{Impact Statement}
This paper focuses on theoretical aspects of bandit problems, and we believe there are no direct negative ethical or societal impacts that should be highlighted here.
Our research may facilitate the real-world applications of the FTPL policies, such as in recommendation systems.


\bibliography{ref}
\bibliographystyle{icml2026}

\newpage
\appendix
\onecolumn

\section{Omitted Proofs for Lemmas}
In this section, we provide the detailed proofs for lemmas omitted in the main paper.
\subsection{Proof for the regret decomposition (Lemma \ref{lem: regret decomposition})}\label{app: regret decomposition}
While the overall proof is a straightforward adaptation of the arguments in the semi-bandit setting, particularly Lemmas 3.3 and 3.4 from \citet{zhan2025follow}, to the MAB setting, we provide all details here for completeness.

We begin by recalling the connection between FTPL and FTRL, where it is known that FTPL can generally be expressed as FTRL with a specific corresponding regularizer~\citep{abernethy2015fighting, suggala2020follow}. 
To formalize this, consider a convex potential function $\Phi: \sR^K \to \sR$ for $\phi$ defined as
\begin{equation*}
    \Phi(\lambda) = \E_{r}\qty[\max_{i\in [K]} \{\lambda_i + r_i\}],
\end{equation*}
so that the gradient of $\Phi$ satisfies $\nabla \Phi(\lambda) = \phi(-\lambda)$, where $\phi$ denotes the arm-selection probability function of FTPL.
The convex conjugate (or Lagrange transform) of $\Phi$ is given by
\begin{equation*}
    \Phi^*(p) = \sup_{\lambda \in \sR^K} \inp{p}{\lambda}-\Phi(\lambda), \quad \text{for } p \in \Int(\gP_{K-1}),
\end{equation*}
where $\Int(\gP_{K-1})$ denotes the interior of the probability simplex of dimension $K-1$.
It is known that FTPL is equivalent to FTRL with regularizer $\Phi^*(p)$.
By standard results in the convex analysis~\citep[see][Lemma G.1 and the references therein]{zhan2025follow}, one can see that $w_t = \nabla\Phi(-\eta_t \hat{L}_t)$ implies $-\eta_t \hat{L}_t \in \partial \Phi^*(w_t)$, and hence
\begin{equation*}
    w_t \in \argmin_{x\in \gP_{K-1}} \qty{\Phi^*(x)/\eta_t + \inp{x}{\hat{L}_t}}.
\end{equation*}
It is worth noting that if $w_t$ lies on the boundary of the simplex, the gradient $\nabla \Phi^*(p)$ may not exist.
Nevertheless, we consider minimization over $\gP_{K-1}$ rather than $\Int(\gP_{K-1})$, since the regularizer $\Phi^*(p)$ remains well-defined even on boundary points, although its gradients may be unbounded.

\textbf{Lemma~\ref{lem: regret decomposition} (restated)} 
\textit{Let $\{\eta_t\}_{t\in [T]}$ be a sequence of positive, decreasing learning rates and $\eta_0 = \infty$. Then, Algorithm~\ref{alg: FTPL} with $\alpha >1$ satisfies}
\begin{equation*}
    \Reg(T) \leq \sum_{t=1}^T \E\qty[\inp{\hat\ell_t}{\phi(\eta_t\hat{L}_t) - \phi(\eta_t\hat{L}_{t+1})}] + \sum_{t=1}^{T+1} \qty(\frac{1}{\eta_t} - \frac{1}{\eta_{t-1}})\E_{r_{t} \sim \gD}\qty[r_{t,i_t} - r_{t,i^*}].
\end{equation*} 
\begin{proof}
    Following the proof of Lemma 3.3 of \citet{zhan2025follow}, let $\Phi_t^*(x) = \Phi^*(x)/\eta_t + \inp{x}{\hat{L}_t}.$
    By definition, we have $w_t \in \argmin_{x \in \gP_{K-1}}\Phi_t^*(x)$ and
    \begin{align*}
        \sum_{t=1}^T &\inp{w_t -e_{i^*}}{\hat{\ell}_t} 
        \\ &= \sum_{t=1}^T  \inp{w_t -w_{t+1}}{\hat{\ell}_t} + \sum_{t=1}^T \inp{w_{t+1} }{\hat{\ell}_t} - \sum_{t=1}^T \inp{e_{i^*}}{\hat{\ell}_t}\\
        &= \sum_{t=1}^T  \inp{w_t -w_{t+1}}{\hat{\ell}_t} + \sum_{t=1}^T \qty(\Phi_{t+1}^*(w_{t+1})-\frac{\Phi^*(w_{t+1})}{\eta_{t+1}} - \qty(\Phi_t^*(w_{t+1})-\frac{\Phi^*(w_{t+1})}{\eta_t})) \\
        & \quad - \sum_{t=1}^T \qty(\Phi_{t+1}^*(e_{i^*})-\frac{\Phi^*(e_{i^*})}{\eta_{t+1}} - \qty(\Phi_t^*(e_{i^*})-\frac{\Phi^*(e_{i^*})}{\eta_t})) \numberthis{\label{eq: decomposition with potential}}
        \\
        &= \sum_{t=1}^T \inp{w_t -w_{t+1}}{\hat{\ell}_t} + \sum_{t=1}^T \qty(\Phi_t^*(w_t)-\Phi_t^*(w_{t+1})) + \sum_{t=2}^{T+1} \qty(\frac{1}{\eta_{t}}-\frac{1}{\eta_{t-1}})\qty(\Phi^*(e_{i^*})-\Phi^*(w_{t})) \\
        & \quad+\Phi_{T+1}^*(w_{T+1}) - \Phi_1^*(w_1)  - \Phi_{T+1}^*(e_{i^*}) + \Phi_1^*(e_{i^*})
    \end{align*}
    where (\ref{eq: decomposition with potential}) follows from the definition of $\hat{L}_t$ and $\Phi^*_t$ that
    \begin{equation*}
        \inp{w_{t+1} }{\hat{\ell}_t} = \inp{w_{t+1} }{\hat{L}_{t+1}-\hat{L}_t}  \qq{and} \inp{x}{\hat{L}_t} = \Phi_t^*(x) - \Phi^*(x)/\eta_t.
    \end{equation*}
    Since $\hat{L}_1 = \mathbf{0}$ and $\Phi_{T+1}^*(w_{T+1}) \leq \Phi_{T+1}^*(e_{i^*})$, we have
    \begin{align*}
        \sum_{t=1}^T \inp{w_t -e_{i^*}}{\hat{\ell}_t} &\leq \sum_{t=1}^T \qty(\inp{w_t -w_{t+1}}{\hat{\ell}_t} + \Phi_t^*(w_t)-\Phi_t^*(w_{t+1}) ) \\ 
        &\quad + \sum_{t=2}^{T+1} \qty(\frac{1}{\eta_{t}}-\frac{1}{\eta_{t-1}})\qty(\Phi^*(e_{i^*})-\Phi^*(w_{t})) + \frac{\Phi^*(e_{i^*})-\Phi^*(w_1)}{\eta_1}.
    \end{align*}
    For notational simplicity, let $\eta_0 = \infty$, which is not used in the policy and is introduced merely for the analysis.
    Then,
    \begin{align*}
        \sum_{t=1}^T \inp{w_t -e_{i^*}}{\hat{\ell}_t} &\leq \sum_{t=1}^T \qty(\inp{w_t -w_{t+1}}{\hat{\ell}_t} + \Phi_t^*(w_t)-\Phi_t^*(w_{t+1}) ) \\ 
        &\quad + \sum_{t=1}^{T+1} \qty(\frac{1}{\eta_{t}}-\frac{1}{\eta_{t-1}})\qty(\Phi^*(e_{i^*})-\Phi^*(w_{t})). \numberthis{\label{eq: regret decomposition all term}}
    \end{align*}
    Let $D_\Phi$ denote the Bregman divergence associated with $\Phi$, which is defined by
    \begin{equation*}
        D_\Phi(x,y) = \Phi(x) - \Phi(y)-\inp{\nabla \Phi(y)}{x-y}, \, \forall x,y \in \sR^K.
    \end{equation*}
    Since $w_t \in \partial \Phi^*(-\eta_t \hat{L}_t)$ and $-\eta_t \hat{L}_t \in \partial \Phi(w_t)$, it holds
    \begin{equation*}
        \Phi(-\eta_t \hat{L}_t) + \Phi^*(w_t) = \inp{w_t}{-\eta_t\hat{L}_t}.
    \end{equation*}
    for all $t \in \sN$.
    Then, we have
    \begin{align*}
        \Phi_t^*(w_t)-\Phi_t^*(w_{t+1}) &= - \frac{1}{\eta_t}\qty(\Phi^*(w_{t+1}) - \Phi^*(w_t) - \inp{w_{t+1}-w_t}{-\eta_t \hat{L}_t}) \tag{by definition of $\Phi_t^*$}\\ 
        & = - \frac{1}{\eta_t}\bigg( \inp{w_{t+1}}{-\eta_{t+1}\hat{L}_{t+1}} - \Phi(-\eta_{t+1}\hat{L}_{t+1})   \\
        &\hspace{4.5em}+\inp{w_t }{\eta_t \hat{L}_t} +\Phi(-\eta_t \hat{L}_t) - \inp{w_{t+1}-w_t}{-\eta_t \hat{L}_t}\bigg) \\
        &= - \frac{1}{\eta_t} \qty(\Phi(-\eta_t \hat{L}_t) - \Phi(-\eta_{t+1} \hat{L}_{t+1})-\inp{w_{t+1}}{-\eta_t \hat{L}_t + \eta_{t+1}\hat{L}_{t+1}})\\
        &= -\frac{1}{\eta_t} D_{\Phi}(-\eta_t \hat{L}_t, -\eta_{t+1}\hat{L}_{t+1}). \tag{$\because w_{t+1} = \nabla \Phi(-\eta_{t+1}\hat{L}_{t+1})$}
    \end{align*}
    On the other hand, by definition, one can obtain (or see Lemma G.2 of \citet{zhan2025follow})
    \begin{align*}
        D_{\Phi}(x,y) + D_{\Phi}(z,x) - D_{\Phi}(z,y) = \inp{\nabla\Phi(x)-\nabla\Phi(y)}{x-z}
    \end{align*}
    for any $x,y,z \in \sR^K$.
    Therefore, by letting $x = -\eta_t\hat{L}_{t+1}$, $y= -\eta_{t+1}\hat{L}_{t+1}$ and $z=-\eta_t \hat{L}_{t}$, we obtain
    \begin{align*}
        D_{\Phi}(-\eta_t\hat{L}_{t+1},-\eta_{t+1}\hat{L}_{t+1}) + D_{\Phi}(-\eta_t \hat{L}_{t},-\eta_t\hat{L}_{t+1}) &- D_{\Phi}(-\eta_t \hat{L}_{t},-\eta_{t+1}\hat{L}_{t+1}) \\
        &= \inp{\nabla \Phi(-\eta_t \hat{L}_{t+1}) - w_{t+1}}{-\eta_{t}\hat{L}_{t+1} + \eta_t\hat{L}_{t}} \\
        &=\inp{\phi(-\eta_t \hat{L}_{t+1}) - w_{t+1}}{-\eta_t\hat{\ell}_{t}} ,
    \end{align*}
    which implies
    \begin{align*}
        &\inp{w_t -w_{t+1}}{\hat{\ell}_t} + \Phi_t^*(w_t)-\Phi_t^*(w_{t+1}) \\
        &= \frac{1}{\eta_t} \inp{w_t -w_{t+1}}{\eta_t \hat{\ell}_t} -\frac{1}{\eta_t} D_{\Phi}(-\eta_t \hat{L}_t, -\eta_{t+1}\hat{L}_{t+1}) \\
        &= \frac{1}{\eta_t} \inp{w_t-\phi(-\eta_t \hat{L}_{t+1}) + \phi(-\eta_t \hat{L}_{t+1}) -w_{t+1}}{\eta_t \hat{\ell}_t} -\frac{1}{\eta_t} D_{\Phi}(-\eta_t \hat{L}_t, -\eta_{t+1}\hat{L}_{t+1})\\
        &= \inp{w_t - \phi(-\eta_t \hat{L}_{t+1})}{\hat{\ell}_t} + \frac{1}{\eta_t} \qty(\inp{\phi(-\eta_t \hat{L}_{t+1})-w_{t+1}}{\eta_t \hat{\ell}_t} - D_{\Phi}(-\eta_t \hat{L}_t, -\eta_{t+1}\hat{L}_{t+1})) \\
        &= \inp{w_t - \phi(-\eta_t \hat{L}_{t+1})}{\hat{\ell}_t} - \frac{1}{\eta_t}\qty( D_{\Phi}(-\eta_t\hat{L}_{t+1},-\eta_{t+1}\hat{L}_{t+1}) + D_{\Phi}(-\eta_t \hat{L}_{t},-\eta_t\hat{L}_{t+1})) \\
        &\leq \inp{w_t - \phi(\eta_t \hat{L}_{t+1})}{\hat{\ell}_t} \tag{$\because D_\Phi(\cdot, \cdot) \geq 0$}.
    \end{align*}
    Since $w_t = \phi(\eta_t \hat{L}_t)$, it remains to control the second term in (\ref{eq: regret decomposition all term}).
    
    While it can be obtained by direct application of Lemma 3.4 of \citet{zhan2025follow}, we provide the corresponding proof here for completeness.
    By definition of $\Phi^*$ and $w_t = \nabla\Phi(-\eta_t \hat{L}_t)$, we have
    \begin{align*}
        \Phi^*(w_t) = -\inp{\eta_t\hat{L}_t}{w_t} - \Phi(-\eta_t\hat{L}_t) &= - \E\qty[\inp{\eta_t \hat{L}_t}{e_{i_t}}] + \E\qty[\min_i \eta_t\hat{L}_{t,i}-r_{t,i}] \\
        &= - \E\qty[\inp{\eta_t \hat{L}_t}{e_{i_t}}] + \E\qty[\inp{\eta_t \hat{L}_t-r_t}{e_{i_t}}] \\
        &= - \E\qty[r_{t,i_t}].
    \end{align*}
    By definition of $\Phi$, we have $\Phi(\lambda) \geq \E_r\qty[\inp{r+\lambda}{p}]$ for any $p \in \gP_{K-1}$ and $\lambda\in \sR^K$.
    Hence,
    \begin{align*}
        \Phi^*(e_{i^*}) = \sup_{x\in \sR^K} \inp{x}{e_i^*} - \Phi(x) &\leq \sup_{x\in \sR^K} \inp{x}{e_i^*} - \E_r \qty[\inp{r+x}{e_{i^*}}] \\
        &= - \E_r \qty[\inp{r}{e_{i^*}}] = -\E_r\qty[r_{i^*}],
    \end{align*}
    which concludes the proof.
\end{proof}

\subsection{Proof for the stability term (Lemma \ref{lem: stability bound})}\label{app: stability bound}
\textbf{Lemma~\ref{lem: stability bound} (restated)} 
\textit{
    For any $t \in [T]$, $i \in [K]$, Algorithm \ref{alg: FTPL} with $\alpha >1$ satisfies 
    \begin{equation*}
        \E\qty[\hat\ell_{t,i}\qty(\phi_i(\eta_t\hat{L}_t) - \phi_i(\eta_t\hat{L}_{t+1})) \middle | \hat{L}_t] \leq e(\alpha+1)\eta_t \sum_{j\in [K]}q_{t,j} q_{t,i},
    \end{equation*}
    where $q_t$ is defined in (\ref{def: exploration prob}).
}

\begin{proof}
    Let $\lambda \in \sR^K$ and $\phi_i'(\lambda) = \frac{\partial \phi_i(\lambda)}{\partial \lambda_i}$, which is
    \begin{align*}
        \phi_i'(\lambda) &= \int_{-\min_j \lambda_j}^{\infty} -\frac{\alpha(\alpha+1)}{(z+\lambda_i+1)^{\alpha+2}} \prod_{j \ne i}\qty(1- \frac{1}{(z+\lambda_j +1)^\alpha})\dd z \\
        &=\int_{0}^{\infty} -\frac{\alpha(\alpha+1)}{(z+\ul_i+1)^{\alpha+2}} \prod_{j \ne i}\qty(1- \frac{1}{(z+\ul_j +1)^\alpha})\dd z,
    \end{align*}
    where the underline denotes $\ul = \lambda - \1 \cdot \min_j \lambda_j$.
    Note that $-\phi_i'(\lambda)$ is decreasing with respect to $\lambda_i$ and increasing with respect to $\lambda_j$.
    Then, by definition, we have
    \begin{align*}
        \I[i=j_t]&\qty(\phi_i(\eta_t \hat{L}_t)-\phi_i(\eta_{t} \hat{L}_{t+1})) 
        \\ &= \I[i=j_t]\int_0^{\eta_t \ell_{t,i}p_{t,i}^{-1}} -\phi_i'(\eta_t\hat{L}_t+x e_i) \dd x \\
        &= \I[i=j_t]\int_0^{\eta_t \ell_{t,i}p_{t,i}^{-1}} -\phi_i'(\eta_t\hat{L}_t) \dd x \tag{$\because$ decreasing w.r.t. $\lambda_i$}\\
        &=  \I[i=j_t] \int_0^\infty \frac{\alpha(\alpha+1)\eta_t \ell_{t,i}p_{t,i}^{-1} }{(z+\ul_i+1)^{\alpha+2}} \prod_{j \ne i}\qty(1-\frac{1}{(z+\ul_j+1)^\alpha}) \dd  z.
    \end{align*}
    Let $I_{i,\alpha+2}(\lambda) = \int_0^\infty \frac{1}{(z+\lambda_i+1)^{\alpha+2}} \prod_{j \ne i}\qty(1-\frac{1}{(z+\lambda_j+1)^\alpha}) \dd  z$.
    Then,
    \begin{align*}
        \E\qty[\hat{\ell}_{t,i}\qty(\phi_i(\eta_t \hat{L}_t)-\phi_i(\eta_{t} \hat{L}_{t+1})) \middle | \hat{L}_t] &= \E\qty[\frac{\ell_{t,i}\I[j_t=i]}{p_{t,i}}\qty(\phi_i(\eta_t \hat{L}_t)-\phi_i(\eta_{t} \hat{L}_{t+1})) \middle | \hat{L}_t] \\
        &\leq \alpha(\alpha+1) \eta_t \E\qty[\frac{\ell_{t,i}^2 \I[j_t =i]}{p_{t,i}^2} I_{i,\alpha+2}(\eta_t \hat{\uL}_t)\middle | \hat{L}_t] \\
        &\leq \alpha(\alpha+1)\eta_t \E\qty[\frac{I_{i,\alpha+2}(\eta_t \hat{\uL}_t)}{p_{t,i}}\middle | \hat{L}_t],
    \end{align*}
    where the last inequality follows from $\ell_{t,i} \leq 1$ and $\E[\I[j_t =i]|\hat{L}_t]=p_{t,i}$.

    By definition of $I_{i,\alpha+2}$, one can see that
    \begin{align*}
        I_{i,\alpha+2}(\ul) \leq I_{i,\alpha+2}(\lambda^*), \qq{where} \lambda_j^* = \begin{cases}
            \ul_i, & \sigma_j \leq \sigma_i, \\
            \infty, & \sigma_j > \sigma_i,
        \end{cases}
    \end{align*}
    where $\sigma_i$ denotes the rank of $\lambda_i$ in the increasing order of $\lambda$, i.e., $\sigma_i < \sigma_j$ iff $\lambda_i \leq \lambda_j$ with arbitrary tie-breaking rule.
    In the later of the proof, we assume $\sigma_i =i$ without loss of generality for the simplicity, i.e., $\lambda_1 \leq \lambda_2 \leq \ldots, \lambda_K$.
    Then, we have
    \begin{align*}
        I_{i,\alpha+2}(\lambda^*) &= \int_0^\infty \frac{1}{(z+\ul_i+1)^{\alpha+2}}\qty(1-\frac{1}{(z+\ul_i+1)^{\alpha}})^{i-1} \dd z \\
        &= \frac{1}{\alpha }\int_0^{\frac{1}{(1+\ul_i)^{\alpha}}} w^{\frac{1}{\alpha}}(1-w)^{i-1} \dd w \\
        &= \frac{1}{\alpha} B\qty(\frac{1}{(1+\ul_i)^{\alpha}}; 1+\frac{1}{\alpha},i),
    \end{align*}    
    where $B(x;a,b) = \int_0^x t^{a-1}(1-t)^{b-1} \dd t$ denotes the incomplete Beta function.
    By elementary calculation, we obtain for any $x\in [0,1]$
    \begin{align*}
         B\qty(x; 1+\frac{1}{\alpha},i) = \int_0^x t^{\frac{1}{\alpha}}(1-t)^{i-1} \dd t &\leq \int_0^x t^{\frac{1}{\alpha}}e^{-t(i-1)} \dd t \\
         &\leq e \int_0^x t^{\frac{1}{\alpha}}e^{-ti} \dd t \tag{$\because x \in [0,1]$} \\
         &= \frac{e}{i^{1+\frac{1}{\alpha}}} \gamma\qty(1+\frac{1}{\alpha}, xi),
    \end{align*}
    where $\gamma(a,x)$ denotes the lower incomplete gamma function.
    Since $\gamma(a,x) \leq \Gamma(a)$ for any $x >0$, we have
    \begin{equation}\label{eq: stability first rslt}
        I_{i,\alpha+2}(\lambda^*) \leq \frac{e}{\alpha i^{1+\frac{1}{\alpha}}} \Gamma\qty(1+\frac{1}{\alpha}) \leq \frac{e}{\alpha i^{1+\frac{1}{\alpha}}} \Gamma\qty(2) = \frac{e}{\alpha i^{1+\frac{1}{\alpha}}}.
    \end{equation}
    
    On the other hand, by Equation 8.10.2 of \citet{olver2010nist}, it holds that
    \begin{align*}
        \gamma\qty(1+\frac{1}{\alpha}, \frac{i}{(1+\ul_i)^{\alpha}}) \leq \frac{\alpha}{\alpha+1} \frac{i^{1/\alpha}}{(1+\ul_i)} \min\qty(1, \frac{i}{(1+\ul_i)^{\alpha}}),
    \end{align*}
    which implies
    \begin{align}\label{eq: stability second rslt}
        I_{i,\alpha+2}(\lambda^*) \leq \frac{e}{\alpha+1} \frac{1}{(1+\ul_i)i}.
    \end{align}
    Therefore, from (\ref{eq: stability first rslt}) and (\ref{eq: stability second rslt}), we obtain
    \begin{equation*}
       \alpha(\alpha+1) \E\qty[\frac{I_{i,\alpha+2}(\eta_t \hat{\uL}_t)}{p_{t,i}}\middle | \hat{L}_t] \leq \frac{e(\alpha+1)}{p_{t,i} i^{1+\frac{1}{\alpha}}} \land \frac{e\alpha}{p_{t,i}(1+\eta_t \hat{\uL}_{t,i}) i}.
    \end{equation*}
    By definition of $p_t$ and $q_{t,i}$ in (\ref{def: exploration prob}), when $\frac{1}{(1+\eta_{t}\hat{\uL}_{t,i})} \leq \frac{1}{i^{1/\alpha}}$, we have
    \begin{align*}
        \frac{1}{p_{t,i}} \frac{1}{(1+\eta_t\hat{\uL}_{t,i})i} =\sum_{j} q_{t,j} \frac{\sqrt{i (1+\eta_t\hat{\uL}_{t,i})}}{(1+\eta_t\hat{\uL}_{t,i})i} =\sum_{j} q_{t,j} \frac{1}{\sqrt{i(1+\eta_{t}\hat{\uL}_{t,i})}} = \sum_{j} q_{t,j} q_{t,i}. 
    \end{align*}
    On the other hand, when $\frac{1}{(1+\eta_{t}\hat{\uL}_{t,i})} \geq \frac{1}{i^{1/\alpha}}$, we have
    \begin{align*}
        \frac{1}{p_{t,i}i^{1+1/\alpha}} = \sum_j q_{t,j} \frac{\sqrt{i^{1+1/\alpha}}}{i^{1+1/\alpha}} = \sum_j q_{t,j} \frac{1}{\sqrt{i^{1+1/\alpha}}} = \sum_j q_{t,j} q_{t,i}.
    \end{align*}
    Therefore, in any cases, we obtain
    \begin{equation*}
        \E\qty[\hat{\ell}_{t,i}\qty(\phi_i(\eta_t \hat{L}_t)-\phi_i(\eta_{t} \hat{L}_{t+1})) \middle | \hat{L}_t] \leq e(\alpha+1)\eta_t \sum_{j \in [K]} q_{t,j} q_{t,i},
    \end{equation*}
    which concludes the proof.
\end{proof}

\subsection{Proof for the penalty term (Lemma \ref{lem: penalty bound})}\label{app: penalty bound}
\textbf{Lemma~\ref{lem: penalty bound} (restated)} 
\textit{For any $t \in [T]$, Algorithm \ref{alg: FTPL} with $\alpha >1$ satisfies}
    \begin{equation*}
        \E_{r_t \sim \gP_\alpha^K}\qty[r_{t,i_t} - r_{t,i^*} \middle | \hat{L}_t] \leq \frac{\alpha}{\alpha-1} \sum_{i \ne i^*} \frac{1}{(1 + \eta_t\hat\uL_{t,i})^{\alpha-1}} \land C_\alpha K^{\frac{1}{\alpha}},
    \end{equation*}
\textit{where $C_\alpha = \frac{2\alpha^3 + (e-2)\alpha^2}{(\alpha-1)(2\alpha-1)}$.}

\begin{proof}
    The proof of this lemma is almost the same as that of Lemma 12 of \citet{pmlr-v247-lee24a}, except that we provide a slightly tighter bound.

    By the choice of Pareto perturbations, we have
    \begin{align*}
        \E\qty[r_{t,i_t}-r_{t,i^*} \middle | \hat{L}_t] &\leq \sum_{i \ne i^*} \E\qty[\I[I_t=i] r_{t,i}\middle | \hat{L}_t] \\
        &= \int_1^\infty \sum_{i\ne i^*} \qty(\frac{\alpha}{(z+\eta_t\hat{\uL}_{t,i})^\alpha}) \prod_{j\ne i} \qty(1-\frac{1}{(z+\eta_t \hat{\uL}_{t,j})^\alpha}) \dd z \\
        &\leq \int_1^\infty \sum_{i\ne i^*} \qty(\frac{\alpha}{(z+\eta_t\hat{\uL}_{t,i})^\alpha}) \dd z \\
        & =\frac{\alpha}{\alpha-1}  \sum_{i\ne i^*} \frac{1}{(1+\eta_t \hat{\uL}_{t,i})^{\alpha-1}}.
    \end{align*}
    Let $k_\alpha(z) = \sum_{i\in [K]} \frac{1}{(z+\eta_t \hat{\uL}_{t,i})^\alpha} \in (0, \frac{K}{z^\alpha}]$.
    Then,
    \begin{align*}
         \int_1^\infty \sum_{i\ne i^*} &\qty(\frac{\alpha}{(z+\eta_t\hat{\uL}_{t,i})^\alpha}) \prod_{j\ne i} \qty(1-\frac{1}{(z+\eta_t \hat{\uL}_{t,j})^\alpha}) \dd z \\
         &\leq  \int_1^\infty \sum_{i\ne i^*} \qty(\frac{\alpha}{(z+\eta_t\hat{\uL}_{t,i})^\alpha}) \exp(-\sum_{j\ne i} \qty(1-\frac{1}{(z+\eta_t \hat{\uL}_{t,j})^\alpha}))\dd z  \\
         & \leq e\alpha  \int_1^\infty \sum_{i\ne i^*} \qty(\frac{\alpha}{(z+\eta_t\hat{\uL}_{t,i})^\alpha}) e^{-k_\alpha(z)}\dd z \leq e \int_1^\infty k_\alpha(z) e^{-k_\alpha(z)} \dd z.
    \end{align*}
    Since $xe^{-x} \leq e^{-1}$ for $x\geq 0$ and $xe^{-x}$ is increasing for $x\leq 1$ and $k_\alpha(z)$ holds for $z\geq K^{1/\alpha}$, we have
    \begin{align*}
        e\alpha \int_1^\infty k_\alpha(z) e^{-k_\alpha(z)} \dd z &\leq \alpha \int_1^{K^{1/\alpha}} 1 \dd z + e \int_{K^{1/\alpha}}^\infty \alpha \frac{K}{z^{\alpha}} e^{-\frac{K}{z^\alpha}} \dd z \\
        &= \alpha (K^{1/\alpha}-1) + e K^{1/\alpha} \int_0^1 w^{-\frac{1}{\alpha}} e^{-w} \dd w \\
        &= K^{1/\alpha}\qty(\alpha+e \gamma\qty(1-\frac{1}{\alpha},1)) - \alpha,
    \end{align*}
    where $\gamma$ denotes the lower incomplete gamma function.
    By the same arguments in \citet[Appendix D.1.]{pmlr-v247-lee24a}, it holds that
    \begin{equation*}
       \gamma\qty(1-\frac{1}{\alpha},1)\leq \frac{\alpha^2(1-e^{-1})}{(\alpha-1)(2\alpha-1)}+ \frac{\alpha e^{-1}}{\alpha-1}  = \frac{(1+e^{-1})\alpha^2-e^{-1}\alpha}{(\alpha-1)(2\alpha-1)},
    \end{equation*}
    which implies
    \begin{align*}
        e\alpha \int_1^\infty k_\alpha(z) e^{-k_\alpha(z)} \dd z &\leq \qty(\alpha+\frac{(e+1)\alpha^2 - \alpha}{(\alpha-1)(2\alpha-1)})K^{\frac{1}{\alpha}} - \alpha \\
        &\leq \frac{2\alpha^3+(e-2)\alpha^2}{(\alpha-1)(2\alpha-1)}K^{\frac{1}{\alpha}} - \alpha,
    \end{align*}
    which concludes the proof.
\end{proof}
\begin{remark}
    While the additional $-\alpha$ term is not directly used in the analysis, it is easy to observe that the upper bound vanishes as $\alpha \to \infty$.
    This behavior is intuitive since larger values of $\alpha$ correspond to perturbation distributions with lighter right tails, increasingly concentrated around the left endpoint at $1$.
    In the limit, as $\alpha \to \infty$, the perturbation converges to a Dirac delta function at $1$, eliminating any randomness, i.e., the difference between perturbations becomes zero.
\end{remark}

\section{Regret bound for adversarial bandits (Theorem~\ref{thm: adv})}\label{app: adversarial}
\textbf{Theorem~\ref{thm: adv} (restated)} 
\textit{
    In the adversarial regime, Algorithm~\ref{alg: FTPL} with $\alpha>1$ and $\eta_t = c K^{\frac{1}{\alpha}-\frac{1}{2}}/\sqrt{t}$ for $c>0$ satisfies $\Reg(T) \leq \gO(\sqrt{KT})$.
}

\begin{proof}
    From Lemma~\ref{lem: regret decomposition}, we have
    \begin{align*}
        \Reg(T) \leq \sum_{t=1}^T \E\qty[\inp{\hat\ell_t}{\phi(\eta_t\hat{L}_t) - \phi(\eta_t\hat{L}_{t+1})}] + \sum_{t=1}^{T+1} \qty(\frac{1}{\eta_t} - \frac{1}{\eta_{t-1}})\E_{r_{t} \sim \gD}\qty[r_{t,i_t} - r_{t,i^*}].
    \end{align*}
    For the stability term (the first term), we have
    \begin{align*}
        \sum_{t=1}^T \E\qty[\inp{\hat\ell_t}{\phi(\eta_t\hat{L}_t) - \phi(\eta_t\hat{L}_{t+1})}]
        &= \E\qty[\sum_{t=1}^T\sum_{i \in [K]}\E\qty[\hat{\ell}_{t,i}\qty(\phi_i(\eta_t \hat{L}_t)-\phi_i(\eta_{t} \hat{L}_{t+1})) \middle | \hat{L}_t]] \\
        &\leq \E\qty[\sum_{t=1}^T e(\alpha+1)\eta_t \sum_{j \in [K]} q_{t,j} \sum_{i \in [K]} q_{t,i}] \tag{by Lemma \ref{lem: stability bound}} \\
        &\leq \sum_{t=1}^T \frac{4\alpha^2(\alpha+1)e}{(\alpha-1)^2} \frac{cK^{\frac{1}{\alpha}-\frac{1}{2}}}{\sqrt{t}}K^{1-\frac{1}{\alpha}} \numberthis{\label{eq: upper bound on q}}\\
        &= \sum_{t=1}^T \frac{4c\alpha^2(\alpha+1)e}{(\alpha-1)^2} \sqrt{\frac{K}{t}} \\
        &\leq \frac{8c\alpha^2(\alpha+1)e}{(\alpha-1)^2}\sqrt{KT}, \numberthis{\label{eq: adv stability}}
    \end{align*}
    where (\ref{eq: upper bound on q}) follows by the definition of $q_{t,i}$,
    \begin{equation} \label{eq: bound on q}
        \sum_{i \in [K]} q_{t,i} = \sum_{i\in[K]} \qty(\frac{1}{1+\eta_t\hat{\uL}_{t,i}} \land \frac{1}{\sigma_i^{1/\alpha}})^{\frac{\alpha+1}{2}} \leq \sum_{i \in [K]} i^{-\frac{1}{2}-\frac{1}{2\alpha}} \leq \frac{2\alpha}{\alpha-1} K^{\frac{1}{2}-\frac{1}{2\alpha}}.
    \end{equation}
    For the penalty term (the second term), we have
    \begin{align*}
        \sum_{t=1}^{T+1} \qty(\frac{1}{\eta_t} - \frac{1}{\eta_{t-1}})&\E_{r_{t} \sim \gP_\alpha^K}\qty[r_{t,i_t} - r_{t,i^*}]  \\
        &= \frac{\E_{r_{t} \sim \gP_\alpha^K}\qty[r_{t,i_t} - r_{t,i^*}]}{\eta_1} +  \sum_{t=2}^{T+1} \qty(\frac{1}{\eta_t} - \frac{1}{\eta_{t-1}})\E_{r_{t} \sim \gP_\alpha^K}\qty[r_{t,i_t} - r_{t,i^*}] \\
        &\leq \frac{\alpha\Gamma(1-1/\alpha)}{\alpha-1}\frac{\sqrt{K}}{c} +  \sum_{t=2}^{T+1} \qty(\frac{1}{\eta_t} - \frac{1}{\eta_{t-1}})\E_{r_{t} \sim \gP_\alpha^K}\qty[r_{t,i_t} - r_{t,i^*}], \numberthis{\label{eq: adv penalty first}}
    \end{align*}
    where the inequality follows from Lemma 18 of \citet{pmlr-v247-lee24a}.
    For the second term in (\ref{eq: adv penalty first}), we have
    \begin{align*}
        \sum_{t=2}^{T+1} \qty(\frac{1}{\eta_t} - \frac{1}{\eta_{t-1}})\E\qty[\E_{r_t \sim \gP_\alpha^K}\qty[r_{t,i_t} - r_{t,i^*} \middle | \hat{L}_t]] 
        &\leq \frac{K^{\frac{1}{2}-\frac{1}{\alpha}}}{c}\sum_{t=2}^{T+1} \qty(\sqrt{t}-\sqrt{t-1}) C_\alpha K^{\frac{1}{\alpha}} \\
        &= \frac{C_\alpha\sqrt{K}}{c} \qty(\sqrt{T+1}-1) \\
        &\leq \frac{C_\alpha}{c}\sqrt{KT}, \numberthis{\label{eq: adv penalty second}}
    \end{align*}
    where the last inequality follows from $\sqrt{x+1}-1\leq \sqrt{x}$ for $x>0$.
    Therefore, from (\ref{eq: adv stability}), (\ref{eq: adv penalty first}), and (\ref{eq: adv penalty second}), we obtain
    \begin{align*}
        \Reg(T) \leq \qty(\frac{8c\alpha^2(\alpha+1)e}{(\alpha-1)^2} + \frac{2\alpha^3+(e-2)\alpha^2}{c(\alpha-1)(2\alpha-1)})\sqrt{KT} + \frac{\alpha\Gamma(1-1/\alpha)}{\alpha-1}\frac{\sqrt{K}}{c},
    \end{align*}
    which concludes the proof.
\end{proof}

\section{Regret bound for stochastic bandits (Theorem~\ref{thm: sto})}\label{app: stochastic}
To analyze the regret in the stochastic regime, we define the event
\begin{align*}
    D_t := \qty{\sum_{i \ne i^*} \frac{1}{(2^{1/\alpha} + \eta_t \hat\uL_{t,i})^\alpha} \leq \frac{1}{2}}.
\end{align*}
When $D_t$ occurs, it implies that $\hat\uL_{t,i^*} = 0$, indicating that the optimal arm $i^*$ has been accurately identified based on the information so far. In the subsequent proof, we separately analyze the cases where $D_t$ holds and where its complement $D_t^c$ holds.

\textbf{Theorem~\ref{thm: sto} (restated)} 
\textit{
    In the stochastically constrained adversarial regime with a unique best arm $i^*$, Algorithm~\ref{alg: FTPL} with $\alpha \in (1,3]$ and $\eta_t = c K^{\frac{1}{\alpha}-\frac{1}{2}}/\sqrt{t}$ for $c>0$ satisfies
    \begin{align*}
        \Reg(T) \leq \gO\qty(\qty(\sum_{t=1}^T \sum_{i\ne i^*} \sqrt{K} \Delta_i^{1-\alpha}t^{-\frac{\alpha}{2}})+\frac{K}{\Delta_\Min}).
    \end{align*}
}

\begin{proof}
    We bound the stability term and penalty terms by separately analyzing the contributions on the events $D_t$ and $D_t^c$.
    \paragraph{Stability term} For the stability term, we start from
    \begin{align*}
        \sum_{t=1}^T \E\qty[\inp{\hat\ell_t}{\phi(\eta_t\hat{L}_t) - \phi(\eta_t\hat{L}_{t+1})}] 
        = \E\qty[\sum_{t=1}^T\sum_{i \in [K]}\E\qty[\hat{\ell}_{t,i}\qty(\phi_i(\eta_t \hat{L}_t)-\phi_i(\eta_{t} \hat{L}_{t+1})) \middle | \hat{L}_t]]. \numberthis{\label{eq: sto stability}} \\ 
    \end{align*}
    On $D_t$, we separate the contribution of the optimal arm $i^*$ from that of the suboptimal arms for a tighter analysis. For the suboptimal arms, Lemma~\ref{lem: stability bound} yields
    \begin{align*}
        \sum_{t=1}^T \I[D_t] \sum_{i \ne i^*} \E\qty[\hat{\ell}_{t,i} \qty(\phi_i(\eta_t \hat{L}_t)-\phi_i(\eta_{t} \hat{L}_{t+1})) \middle | \hat{L}_t] 
        &= \sum_{t=1}^T \I[D_t] \frac{cK^{\frac{1}{\alpha}-\frac{1}{2}}e(\alpha+1)}{\sqrt{t}} \sum_{j \in [K]}q_{t,j} \sum_{i \ne i^*}q_{t,i} \\
        &\leq \sum_{t=1}^T \I[D_t] \frac{cK^{\frac{1}{\alpha}-\frac{1}{2}}e(\alpha+1)}{\sqrt{t}} \sum_{j \in [K]}q_{t,j} \sum_{i \ne i^*} (2e^2w_{t,i})^{1-\frac{1}{\alpha}}.
    \end{align*}
    Here, the last step follows from
    \begin{align}\label{eq: sto stability q}
        q_{t,i} \leq \qty(\frac{1}{1+\eta_t\hat\uL_{t,i}})^{\frac{\alpha+1}{2}} \leq (2e^2w_{t,i})^{\frac{1}{2}+\frac{1}{2\alpha}} \leq (2e^2w_{t,i})^{1-\frac{1}{\alpha}},
    \end{align}
    where the second inequality follows from Lemma~\ref{lem: lb and ub of w_{t,i}}, and the last one holds since $1-\frac{1}{\alpha} \leq \frac{1}{2}+\frac{1}{2\alpha}$ for $\alpha \in (1,3]$. 
    For the optimal arm, Lemma~\ref{lem: 25 of lee} gives
    \begin{align*}
        &\sum_{t=1}^T\I[D_t]\E\qty[\hat{\ell}_{t,i^*}\qty(\phi_{i^*}(\eta_t \hat{L}_t) - \phi_{i^*}(\eta_t \hat{L}_{t+1})) \eval \hat{L}_t] \\
        &\quad\leq \sum_{t=1}^T\I[D_t]\qty[\sum_{j \in [K]} q_{t,j} \sum_{i \ne i^*} \frac{e(1-e^{-1})\eta_t \alpha}{(1-\zeta)^{\alpha+1}(1+\eta_t\hat{\uL}_{t,i})^{\alpha+1}} + \frac{1}{1-e^{-1}}(1-e^{-1})^{\frac{\zeta}{\eta_t}}\qty(\frac{\zeta}{\eta_t} + e)] \\
        &\quad\leq \sum_{t=1}^T\I[D_t]\frac{cK^{\frac{1}{\alpha}-\frac{1}{2}}e(1-e^{-1})\alpha}{(1-\zeta)^{\alpha+1}\sqrt{t}} \sum_{j \in [K]}q_{t,j}\sum_{i\ne i^*} \qty(\frac{1}{1+\eta_t\hat\uL_{t,i}})^{\frac{\alpha+1}{2}} + \gO\qty(c^2K^{\frac{2}{\alpha}-1}) \tag{by (\ref{eq: hnd lem 11})} \\
        &\quad\leq \sum_{t=1}^T\I[D_t]\frac{cK^{\frac{1}{\alpha}-\frac{1}{2}}e(1-e^{-1})\alpha}{(1-\zeta)^{\alpha+1}\sqrt{t}} \sum_{j \in [K]}q_{t,j}\sum_{i\ne i^*} (2e^2w_{t,i})^{1-\frac{1}{\alpha}} + \gO\qty(c^2K^{\frac{2}{\alpha}-1}). \tag{by (\ref{eq: sto stability q})}
    \end{align*}    
    Combining the contributions from both the optimal and suboptimal arms, we obtain
    \begin{align*}
        &\sum_{t=1}^T \I[D_t] \sum_{i \in [K]} \E\qty[\hat{\ell}_{t,i}\qty(\phi_i(\eta_t \hat{L}_t)-\phi_i(\eta_{t} \hat{L}_{t+1})) \middle | \hat{L}_t] \\
        &\quad= \sum_{t=1}^T \I[D_t] \qty(\alpha+1+\frac{\alpha(1-e^{-1})}{(1-\zeta)^{\alpha+1}}) \frac{cK^{\frac{1}{\alpha}-\frac{1}{2}}e}{\sqrt{t}}\sum_{j \in [K]}q_{t,j} \sum_{i \ne i^*} (2e^2w_{t,i})^{1-\frac{1}{\alpha}} + \gO\qty(c^2K^{\frac{2}{\alpha}-1}) \\
        &\quad\leq \sum_{t=1}^T \I[D_t] \qty(\frac{2c\alpha(\alpha+1)e}{\alpha-1}+\frac{2c\alpha^2(1-e^{-1})e}{(\alpha-1)(1-\zeta)^{\alpha+1}}) \frac{K^{\frac{1}{2\alpha}}}{\sqrt{t}} \sum_{i\ne i^*} (2e^2w_{t,i})^{1-\frac{1}{\alpha}} + \gO\qty(c^2K^{\frac{2}{\alpha}-1}),
    \end{align*}
    where the last step follows from (\ref{eq: bound on q}). 
    On $D_t^c$, we have
    \begin{align*}
        \sum_{t=1}^T \I[D_t^c] \E\qty[\inp{\hat\ell_t}{\phi(\eta_t\hat{L}_t) - \phi(\eta_t\hat{L}_{t+1})} \middle | \hat{L}_t] 
        &= \sum_{t=1}^T \I[D_t^c] \frac{cK^{\frac{1}{\alpha}-\frac{1}{2}}e(\alpha+1)}{\sqrt{t}} \sum_{j \in [K]}q_{t,j} \sum_{i \in [K]}q_{t,i} \\
        &\leq \sum_{t=1}^T \I[D_t^c] \frac{4\alpha^2(\alpha+1)e}{(\alpha-1)^2} \frac{cK^{\frac{1}{\alpha}-\frac{1}{2}}}{\sqrt{t}}K^{1-\frac{1}{\alpha}} \tag{by (\ref{eq: bound on q})} \\
        &= \sum_{t=1}^T \I[D_t^c] \frac{4c\alpha^2(\alpha+1)e}{(\alpha-1)^2} \sqrt{\frac{K}{t}}.
    \end{align*}
    Combining the bounds for both $D_t$ and $D_t^c$, the stability term can be bounded as
    \begin{align*}
        &\sum_{t=1}^T \E\qty[\inp{\hat\ell_t}{\phi(\eta_t\hat{L}_t) - \phi(\eta_t\hat{L}_{t+1})}] \\
        &\quad\leq \E\qty[\sum_{t=1}^T \I[D_t] \qty(\frac{2c\alpha(\alpha+1)e}{\alpha-1}+\frac{2c\alpha^2(1-e^{-1})e}{(\alpha-1)(1-\zeta)^{\alpha+1}}) \frac{K^{\frac{1}{2\alpha}}}{\sqrt{t}} \sum_{i \ne i^*}(2e^2w_{t,i})^{1-\frac{1}{\alpha}}] \\
        &\qquad+ \E\qty[\sum_{t=1}^T \I[D_t^c] \frac{4c\alpha^2(\alpha+1)e}{(\alpha-1)^2} \sqrt{\frac{K}{t}}] + \gO\qty(c^2K^{\frac{2}{\alpha}-1}). \numberthis{\label{eq: sto stability final}}
    \end{align*}
    \paragraph{Penalty term} For the penalty term, we can start from (\ref{eq: adv penalty first}):
    \begin{align*}
        &\sum_{t=1}^{T+1} \qty(\frac{1}{\eta_t} - \frac{1}{\eta_{t-1}})\E_{r_{t} \sim \gP_\alpha^K}\qty[r_{t,i_t} - r_{t,i^*}] \\
        &\quad\leq \sum_{t=2}^{T+1} \qty(\frac{1}{\eta_t} - \frac{1}{\eta_{t-1}}) \E_{r_{t} \sim \gP_\alpha^K}\qty[r_{t,i_t} - r_{t,i^*}] + \frac{\alpha\Gamma(1-1/\alpha)}{\alpha-1}\frac{\sqrt{K}}{c}. \numberthis{\label{eq: sto penalty first}}
    \end{align*}
    On $D_t$, we obtain 
    \begin{align*}
        &\sum_{t=2}^{T+1} \I[D_t] \qty(\frac{1}{\eta_t} - \frac{1}{\eta_{t-1}}) \E_{r_t \sim \gP_\alpha^K}\qty[r_{t,i_t} - r_{t,i^*} \middle | \hat{L}_t] \\
        &\quad\leq \sum_{t=2}^{T+1} \I[D_t]  \frac{\alpha}{\alpha-1} \qty(\frac{1}{\eta_t}-\frac{1}{\eta_{t-1}}) \sum_{i\ne i^*}\frac{1}{(1+\eta_t\hat\uL_{t,i})^{\alpha-1}}. \tag{by Lemma~\ref{lem: penalty bound}}
    \end{align*}
    For $t \geq 2$, Lemma~\ref{lem: lb and ub of w_{t,i}} implies
    \begin{align*}
        \qty(\frac{1}{\eta_t} - \frac{1}{\eta_{t-1}}) \sum_{i\ne i^*}\frac{1}{(1+\eta_t\hat\uL_{t,i})^{\alpha-1}} 
        \leq \frac{K^{\frac{1}{2}-\frac{1}{\alpha}}}{2c\sqrt{t-1}} \sum_{i\ne i^*} (2e^2w_{t,i})^{1-\frac{1}{\alpha}},
    \end{align*}
    which yields
    \begin{align*}
        \sum_{t=2}^{T+1} \I[D_t] \qty(\frac{1}{\eta_t} - \frac{1}{\eta_{t-1}}) \E_{r_{t} \sim \gP_\alpha^K}\qty[r_{t,i_t} - r_{t,i^*} \middle | \hat{L}_t]
        \leq \sum_{t=1}^{T} \I[D_t] \frac{\alpha}{2c(\alpha-1)}  \frac{K^{\frac{1}{2\alpha}}}{\sqrt{t}} \sum_{i\ne i^*} (2e^2w_{t,i})^{1-\frac{1}{\alpha}}, 
    \end{align*}
    where we used the fact that $\frac{1}{2}-\frac{1}{\alpha} \leq \frac{1}{2\alpha}$ for $\alpha \in (1,3]$.
    
    On $D_t^c$, we obtain
    \begin{align*}
        \sum_{t=2}^{T+1} \I[D_t^c] \qty(\frac{1}{\eta_t} - \frac{1}{\eta_{t-1}}) \E_{r_t \sim \gP_\alpha^K}\qty[r_{t,i_t} - r_{t,i^*} \middle | \hat{L}_t]
        &\leq \sum_{t=2}^{T} \I[D_t^c] \qty(\frac{1}{\eta_t} - \frac{1}{\eta_{t-1}}) C_\alpha K^{\frac{1}{\alpha}} \tag{by Lemma~\ref{lem: penalty bound}}\\
        &\leq \sum_{t=1}^{T} \I[D_t^c] \frac{K^{\frac{1}{2}-\frac{1}{\alpha}}}{c\sqrt{2}\sqrt{t}} C_\alpha K^{\frac{1}{\alpha}} \\
        &= \sum_{t=1}^{T} \I[D_t^c] \frac{C_\alpha}{c\sqrt{2}} \sqrt{\frac{K}{t}},
    \end{align*}
    where the first step assumes $\eta_T = \eta_{T+1}$ for simplicity and the second step is due to the fact that $\sqrt{t/(t-1)} \leq \sqrt{2}$ for $t \geq 2$.
    Here, the assumption $\eta_T = \eta_{T+1}$ does not affect the behavior of the algorithm, as the procedure terminates at round $T$.
    Although this assumes knowledge of $T$, even without it one can just introduce an additional $\gO(1/\sqrt{T+1})$ term, which does not affect the overall regret for sufficiently large $T$.
    Combining the bounds for both $D_t$ and $D_t^c$, the penalty term can be bounded as
    \begin{align*}
        &\sum_{t=1}^{T+1} \qty(\frac{1}{\eta_t}-\frac{1}{\eta_{t-1}}) \E_{r_t \sim \gP_\alpha^K}\qty[r_{t,i_t}-r_{t,i^*}] \\
        &\quad\leq \E\qty[\sum_{t=1}^{T} \I[D_t] \frac{\alpha}{2c(\alpha-1)} \frac{K^{\frac{1}{2\alpha}}}{\sqrt{t}} \sum_{i\ne i^*} (2e^2w_{t,i})^{1-\frac{1}{\alpha}} 
        + \sum_{t=1}^{T} \I[D_t^c] \frac{C_\alpha}{c\sqrt{2}} \sqrt{\frac{K}{t}}] 
        + \frac{\alpha\Gamma(1-1/\alpha)}{\alpha-1}\frac{\sqrt{K}}{c}. \numberthis{\label{eq: sto penalty final}}
    \end{align*}
    Finally, combining (\ref{eq: sto stability final}) with (\ref{eq: sto penalty final}), the regret can be upper bounded as
    \begin{align*}
        \Reg(T) 
        &\leq \E\qty[\sum_{t=1}^T \I[D_t] \qty(\frac{2c\alpha(\alpha+1)e}{\alpha-1}+\frac{2c\alpha^2(1-e^{-1})e}{(\alpha-1)(1-\zeta)^{\alpha+1}} + \frac{\alpha}{2c(\alpha-1)}) \frac{K^{\frac{1}{2\alpha}}}{\sqrt{t}} \sum_{i \ne i^*}(2e^2w_{t,i})^{1-\frac{1}{\alpha}}] \\
        &\quad+ \E\qty[\sum_{t=1}^T \I[D_t^c] \qty(\frac{4c\alpha^2(\alpha+1)e}{(\alpha-1)^2} + \frac{C_\alpha}{c\sqrt{2}}) \sqrt{\frac{K}{t}}] + \gO\qty(c^2K^{\frac{2}{\alpha}-1}) + \frac{\alpha\Gamma(1-1/\alpha)}{\alpha-1}\frac{\sqrt{K}}{c}. \numberthis{\label{eq: sto final}}
    \end{align*}
    
    \paragraph{Self-bounding technique} We employ the self-bounding technique of \citet{zimmert2021tsallis} in the stochastically constrained adversarial regime to demonstrate that our policy adapts to broader settings beyond the purely stochastic regime. Specifically,
    \begin{align*}
        \Reg(T) 
        &= 2 \cdot \Reg(T) - \E\qty[\sum_{t=1}^T \sum_{i \ne i^*} \Delta_i w_{t,i}] \tag{by (\ref{eq: constrained pseudo regret})} \\
        &= 2 \cdot \Reg(T) - \E\qty[\sum_{t=1}^T \qty(\sI[D_t]\sum_{i \ne i^*}\Delta_i w_{t,i} + \sI[D_t^c]\sum_{i \ne i^*}\Delta_i w_{t,i})] \\
        &\leq \gO\qty(c^2K^{\frac{2}{\alpha}-1}) + \frac{2\alpha\Gamma(1-1/\alpha)}{\alpha-1}\frac{\sqrt{K}}{c} + \E\qty[\sum_{t=1}^T \sI[D_t] \sum_{i\ne i^*} \qty(\frac{Z_1(\alpha) w_{t,i}^{1-\frac{1}{\alpha}}}{\sqrt{t}} - \Delta_iw_{t,i})] \\
        &\quad+ \E\qty[\sum_{t=1}^T \sI[D_t^c] \qty(\qty(\frac{8c\alpha^2(\alpha+1)e}{(\alpha-1)^2}+\frac{C_\alpha\sqrt{2}}{c})\sqrt{\frac{K}{t}} - \frac{1-e^{-1/2}}{2} \Delta_\Min)] \tag{by (\ref{lem: Dtc lower bound})} \\
        &\leq \gO\qty(c^2K^{\frac{2}{\alpha}-1}) + \frac{2\alpha\Gamma(1-1/\alpha)}{\alpha-1}\frac{\sqrt{K}}{c} + \E\qty[\sum_{t=1}^T \I[D_t] \sum_{i \ne i^*} Z_2(\alpha) \Delta_i^{1-\alpha}t^{-\frac{\alpha}{2}}] \\
        &\quad+ \E\qty[\sum_{t=1}^T \I[D_t^c] \max\qty(\qty(\frac{8c\alpha^2(\alpha+1)e}{(\alpha-1)^2}+\frac{C_\alpha\sqrt{2}}{c})\sqrt{\frac{K}{t}} - \frac{1-e^{-1/2}}{2} \Delta_\Min, 0)], \numberthis{\label{eq: sbt first}}
    \end{align*}
    where the last step follows from
    \begin{align*}
        \sum_{i\ne i^*} \qty(\frac{Z_1(\alpha)w_{t,i}^{1-\frac{1}{\alpha}}}{\sqrt{t}}-\Delta_iw_{t,i}) \leq \sum_{i\ne i^*}\max_{w \in [0,1]} \qty(\frac{Z_1(\alpha) w^{1-\frac{1}{\alpha}}}{\sqrt{t}}-\Delta_iw) \leq \sum_{i \ne i^*} Z_2(\alpha) \Delta_i^{1-\alpha}t^{-\frac{\alpha}{2}},
    \end{align*}
    which is a direct application of Lemma 8 in \citet{rouyer2020decoupled}. Here, the constants $Z_1(\alpha; \zeta)$ and $Z_2(\alpha ; \zeta)$ are 
    \begin{align}\label{eq: Z_1}
        Z_1(\alpha;\zeta) = Z_1(\alpha) = (2e^2)^{1-\frac{1}{\alpha}}\qty(\frac{4c\alpha(\alpha+1)e}{\alpha-1}+\frac{4c\alpha^2(1-e^{-1})e}{(\alpha-1)(1-\zeta)^{\alpha+1}} + \frac{\alpha}{c(\alpha-1)})K^{\frac{1}{2\alpha}}
    \end{align} 
    and 
    \begin{align}\label{eq: Z_2}
       Z_2(\alpha;\zeta)= Z_2(\alpha) = Z_1(\alpha)^\alpha \qty(\qty(\frac{\alpha-1}{\alpha})^{\alpha-1}-\qty(\frac{\alpha-1}{\alpha})^\alpha).
    \end{align}
    Next, we define the time step $T_\mathrm{cut}$ such that for $t > \left\lfloor T_\mathrm{cut} \right\rfloor$, the last term in (\ref{eq: sbt first}) evaluates to zero. Hence, we can bound the sum as
    \begin{align}\label{eq: max cut}
        \sum_{t=1}^{T} \max\qty(A\sqrt{\frac{K}{t}} - B\cdot\Delta_\Min,0) \leq \sum_{t=1}^{\left\lfloor T_\mathrm{cut} \right\rfloor} \qty(A\sqrt{\frac{K}{t}} - B\cdot\Delta_\Min) \leq \frac{2A^2}{B} \frac{K}{\Delta_\Min},
    \end{align}
    where $T_\mathrm{cut}:= \frac{A^2K}{B^2\Delta_\Min^2}$. Using this, we can upper bound (\ref{eq: sbt first}) as 
    \begin{align*}
        &\gO\qty(c^2K^{\frac{2}{\alpha}-1}) + \frac{2\alpha\Gamma(1-1/\alpha)}{\alpha-1}\frac{\sqrt{K}}{c} + \E\qty[\sum_{t=1}^T \I[D_t] \sum_{i \ne i^*} Z_2(\alpha) \Delta_i^{1-\alpha}t^{-\frac{\alpha}{2}}] \\
        &\quad+ \E\qty[\sum_{t=1}^T \I[D_t^c] \max\qty(\qty(\frac{8c\alpha^2(\alpha+1)e}{(\alpha-1)^2}+\frac{C_\alpha\sqrt{2}}{c})\sqrt{\frac{K}{t}} - \frac{1-e^{-1/2}}{2} \Delta_\Min, 0)] \\
        &\leq \gO\qty(c^2K^{\frac{2}{\alpha}-1}) + \frac{2\alpha\Gamma(1-1/\alpha)}{\alpha-1}\frac{\sqrt{K}}{c} + \sum_{t=1}^T \sum_{i\ne i^*} Z_2(\alpha) \Delta_i^{1-\alpha}t^{-\frac{\alpha}{2}} \\
        &\quad+ \frac{4}{1-e^{-1/2}} \qty(\frac{8c\alpha^2(\alpha+1)e}{(\alpha-1)^2}+\frac{C_\alpha\sqrt{2}}{c})^2 \frac{K}{\Delta_\Min}, \tag{by (\ref{eq: max cut})}
    \end{align*}
    which implies
    \begin{align*}
        \Reg(T) \leq \gO\qty(\qty(\sum_{t=1}^T \sum_{i\ne i^*} \sqrt{K} \Delta_i^{1-\alpha}t^{-\frac{\alpha}{2}})+\frac{K}{\Delta_\Min}),
    \end{align*}
    and thus concludes the proof.
\end{proof}

\textbf{Corollary~\ref{cor: t-indep} (restated)}
\textit{
    In the stochastically constrained adversarial regime with a unique best arm $i^*$, Algorithm~\ref{alg: FTPL} with $\alpha=3$ and $\eta_t = c K^{\frac{1}{\alpha}-\frac{1}{2}}/\sqrt{t}$ for $c>0$ satisfies
    \begin{align*}
        \Reg(T) \leq \gO\qty(\sqrt{\frac{K}{\Delta_\Min}\sum_{i\ne i^*}\frac{1}{\Delta_i}}+\frac{K}{\Delta_\Min}).
    \end{align*}
}
\begin{proof}
    By Theorem~\ref{thm: sto}, for $\alpha\in (1,3]$, the regret of our policy satisfies
    \begin{align*}
        \Reg(T) \leq \gO\qty(\qty(\sum_{t=1}^T \sum_{i\ne i^*} \sqrt{K} \Delta_i^{1-\alpha}t^{-\frac{\alpha}{2}})+\frac{K}{\Delta_\Min}),
    \end{align*} 
    which depends on the time horizon $T$.
    Inspired by Theorem 4 of \citet{rouyer2020decoupled}, we next derive a $T$-independent bound for $\alpha \in (2,3]$ and identify the value of $\alpha$ that minimizes this bound. 
    
    Let $T_0 := D\qty(\frac{x}{\Delta_\Min})^2$ for $D \geq 1$ and $x \in \sR_{>0}$. For $t \leq T_0$, the proof follows identically to the adversarial regime, yielding a contribution of order $\gO(\sqrt{KT_0})$. For the remaining rounds $t > T_0$, we apply the same argument as in the stochastic case, i.e., (\ref{eq: sto final}), which gives
    \begin{align*}
        \Reg(T) 
        &\leq \E\qty[\sum_{t=T_0+1}^{T} \I[D_t] \qty(\frac{2c\alpha(\alpha+1)e}{\alpha-1} + \frac{2c\alpha^2(1-e^{-1})e}{(\alpha-1)(1-\zeta)^{\alpha+1}} + \frac{\alpha}{2c(\alpha-1)}) \frac{K^{\frac{1}{2\alpha}}}{\sqrt{t}} \sum_{i\ne i^*} (2e^2w_{t,i})^{1-\frac{1}{\alpha}}] \\
        &\quad+ \E\qty[\sum_{t=T_0+1}^{T} \I[D_t^c] \qty(\frac{4c\alpha^2(\alpha+1)e}{(\alpha-1)^2} + \frac{C_\alpha}{c\sqrt{2}}) \sqrt{\frac{K}{t}}] + \qty(\frac{8c\alpha^2(\alpha+1)e}{(\alpha-1)^2} + \frac{C_\alpha}{c})\sqrt{KT_0} \\
        &\quad+ \gO\qty(c^2K^{\frac{2}{\alpha}-1}) + \frac{\alpha\Gamma(1-1/\alpha)}{\alpha-1}\frac{\sqrt{K}}{c}, \numberthis{\label{eq: sbt U}}
    \end{align*}
    for any $\zeta \in (0,1)$ and $\alpha \in (1,3]$. Here, $C_\alpha = \frac{2\alpha^3+(e-2)\alpha^2}{(\alpha-1)(2\alpha-1)}$ denotes the constant defined in Lemma~\ref{lem: penalty bound}.
    Following the similar steps as Theorem~\ref{thm: sto}, we obtain
    \begin{align*}
        \Reg(T) 
        &= 2 \cdot \Reg(T) - \E\qty[\sum_{t=1}^T \sum_{i \ne i^*} \Delta_i w_{t,i}] \numberthis{\label{eq: sbt L}} \\
        &\leq 2 \cdot \Reg(T) - \E\qty[\sum_{t=T_0+1}^T \sum_{i \ne i^*} \Delta_i w_{t,i}] \\
        &\leq \E\qty[\sum_{t=T_0+1}^T \I[D_t] \sum_{i \ne i^*} \qty(\frac{Z_1(\alpha)w_{t,i}^{1-\frac{1}{\alpha}}}{\sqrt{t}}-\Delta_iw_{t,i})] \\ 
        &\quad+ \E\qty[\sum_{t=T_0+1}^T \I[D_t^c] \qty( \qty(\frac{8c\alpha^2(\alpha+1)e}{(\alpha-1)^2}+\frac{C_\alpha\sqrt{2}}{c})\sqrt{\frac{K}{t}} - \frac{1-e^{-1/2}}{2}\Delta_{\Min})] \\
        &\quad+ \qty(\frac{16c\alpha^2(\alpha+1)e}{(\alpha-1)^2} + \frac{2C_\alpha}{c})\sqrt{KT_0} + \gO\qty(c^2K^{\frac{2}{\alpha}-1}) + \frac{2\alpha\Gamma(1-1/\alpha)}{\alpha-1}\frac{\sqrt{K}}{c} \\
        &\leq \sum_{t=T_0+1}^T \sum_{i \ne i^*}Z_2(\alpha)\Delta_i^{1-\alpha}t^{-\frac{\alpha}{2}} + \frac{4}{1-e^{-1/2}} \qty(\frac{8c\alpha^2(\alpha+1)e}{(\alpha-1)^2}+\frac{C_\alpha\sqrt{2}}{c})^2 \frac{K}{\Delta_\Min} \\
        &\quad+ \qty(\frac{16c\alpha^2(\alpha+1)e}{(\alpha-1)^2} + \frac{2C_\alpha}{c})\sqrt{KT_0} + \gO\qty(c^2K^{\frac{2}{\alpha}-1}) + \frac{2\alpha\Gamma(1-1/\alpha)}{\alpha-1}\frac{\sqrt{K}}{c}. \numberthis{\label{eq: cor first}}
    \end{align*}
    Here, $Z_1(\alpha)$ and $Z_2(\alpha)$ are the constants defined in (\ref{eq: Z_1}) and (\ref{eq: Z_2}), respectively. By Lemma~\ref{lem: 9 of rouyer}, for $\alpha\in (2,3]$, the first term in (\ref{eq: cor first}) can be bounded as
    \begin{align*}
        \sum_{t=T_0+1}^T \sum_{i\ne i^*}Z_2(\alpha)\Delta_i^{1-\alpha}t^{-\frac{\alpha}{2}} \leq \sum_{i \ne i^*} Z_3(\alpha)\frac{\sqrt{K}D^{1-\frac{\alpha}{2}}}{\Delta_i},
    \end{align*}
    which gives
    \begin{align*}
        \Reg(T) 
        &\leq \sum_{i \ne i^*} Z_3(\alpha)\frac{\sqrt{K}D^{1-\frac{\alpha}{2}}}{\Delta_i} + \frac{4}{1-e^{-1/2}} \qty(\frac{8c\alpha^2(\alpha+1)e}{(\alpha-1)^2}+\frac{C_\alpha\sqrt{2}}{c})^2 \frac{K}{\Delta_\Min} \\
        &\quad+ \qty(\frac{16c\alpha^2(\alpha+1)e}{(\alpha-1)^2} + \frac{2C_\alpha}{c})\sqrt{KT_0} + \gO\qty(c^2K^{\frac{2}{\alpha}-1}) + \frac{2\alpha\Gamma(1-1/\alpha)}{\alpha-1}\frac{\sqrt{K}}{c} \\
        &= \sum_{i \ne i^*} Z_3(\alpha)\frac{\sqrt{K}D^{1-\frac{\alpha}{2}}}{\Delta_i} + \frac{4}{1-e^{-1/2}} \qty(\frac{8c\alpha^2(\alpha+1)e}{(\alpha-1)^2}+\frac{C_\alpha\sqrt{2}}{c})^2 \frac{K}{\Delta_\Min} \\
        &\quad+ \qty(\frac{16c\alpha^2(\alpha+1)e}{(\alpha-1)^2} + \frac{2C_\alpha}{c})\frac{x\sqrt{KD}}{\Delta_\Min} + \gO\qty(c^2K^{\frac{2}{\alpha}-1}) + \frac{2\alpha\Gamma(1-1/\alpha)}{\alpha-1}\frac{\sqrt{K}}{c}, \numberthis{\label{eq: cor third}}
    \end{align*}
    where 
    \begin{align*}
        Z_3(\alpha) = \frac{2x^{2-\alpha}Z_2(\alpha)}{(\alpha-2)\sqrt{K}}. 
    \end{align*}
    The first and third terms in (\ref{eq: cor third}) are minimized when $\alpha=3$ and
    \begin{align*}
        D = \frac{Z_3(3)}{x\qty(144ce + 2C_3/c)} \sum_{i\ne i^*} \frac{\Delta_\Min}{\Delta_i},
    \end{align*}
    which, by the AM-GM inequality, leads to the bound
    \begin{align*}
        \sum_{i\ne i^*} \frac{Z_3(3)}{\Delta_i} \frac{\sqrt{K}}{\sqrt{D}} + \qty(144ce + \frac{2C_3}{c})\frac{x\sqrt{KD}}{\Delta_\Min} 
        &\leq 2\sqrt{\qty(144ce + \frac{2C_3}{c}) \frac{x Z_3(3) K}{\Delta_\Min} \sum_{i\ne i^*}\frac{1}{\Delta_i}} \numberthis{\label{eq: cor second}} \\
        &\leq \gO\qty(\sqrt{\frac{K}{\Delta_\Min}\sum_{i\ne i^*}\frac{1}{\Delta_i}}). \numberthis{\label{eq: cor fourth}}
    \end{align*}    
    Note that the feasible range of $x \in \sR_{>0}$ is determined by the two constraints $T_0 \leq T_{\mathrm{cut}}$ and $D \geq 1$:
    \begin{align*}
        x \leq \min\qty{\frac{4(144ce + 2C_3/c)(72ce+C_3\sqrt{2}/c)^2}{Z_3(3)(1-e^{-1/2})^2}\frac{K}{\Delta_\Min}\qty(\sum_{i\ne i^*}\frac{1}{\Delta_i})^{-1}, \frac{Z_3(3)}{144ce + 2C_3/c} \sum_{i\ne i^*} \frac{\Delta_\Min}{\Delta_i}}.
    \end{align*}
    Any choice of $x$ within this range is valid, since $x$ cancels out in (\ref{eq: cor second}) through the term $x Z_3(3)$, as $Z_3(3)$ contains $x^{-1}$, and thus does not affect the final bound.
    Finally, from (\ref{eq: cor third}) and (\ref{eq: cor fourth}), we obtain
    \begin{align*}
        \Reg(T) \leq \gO\qty(\sqrt{\frac{K}{\Delta_\Min}\sum_{i\ne i^*}\frac{1}{\Delta_i}} + \frac{K}{\Delta_\Min}),
    \end{align*}
    which concludes the proof.

    \paragraph{The choice of $c$}
    We determine the choice of $c$ in (\ref{eq: cor second}). Specifically, 
    \begin{align}\label{eq: leading constant}
        \qty(144ce + \frac{2C_3}{c})xZ_3(3)
        &= \frac{32e^4}{27}\qty(144ce + \frac{36+9e}{5c}) \qty(24ce + \frac{18ce(1-e^{-1})}{(1-\zeta)^4} + \frac{3}{2c})^3.
    \end{align} 
    For notational simplicity, we express the RHS in the form $f(c) = (h_1c + h_2/c)(h_3c + h_4/c)^3$, where $h_1, h_2, h_3, h_4$ are constants determined by the coefficients above. To minimize $f(c)$, we substitute $x=c^2$ and define $g(x) = (h_1x + h_2)(h_3x + h_4)^3/x^2$. Differentiating $g(x)$ with respect to $x$ gives 
    \begin{align*}
        g'(x) = \frac{(h_3x + h_4)^2(2h_1h_3x^2 + (h_2h_3 - h_1h_4)x - 2h_2h_4)}{x^3}.
    \end{align*}
    Thus, the stationary points of $g(x)$, where $g'(x)=0$, are obtained by solving the quadratic equation $2h_1h_3x^2 + (h_2h_3 - h_1h_4)x - 2h_2h_4 = 0$. This yields 
    \begin{align*}
        x^* = \frac{-h_2h_3 + h_1h_4 + \sqrt{(h_2h_3-h_1h_4)^2 + 16h_1h_2h_3h_4}}{4h_1h_3}. 
    \end{align*}
    Recalling that $x=c^2$, the optimal choice of $c$ is given by 
    \begin{align*}
        c^* = \sqrt{x^*}.
    \end{align*}
    When $\zeta=10^{-1}$, the above computation gives
    \begin{align*}
        c^* \simeq 0.128, \quad f(c) \simeq 25799.360.
    \end{align*}
    As $\zeta$ decreases, the resulting constant decreases.
    
    \begin{remark}
        There are some possible ways to further reduce the leading multiplicative constant of (\ref{eq: leading constant}) in the regret bound. Firstly, in (\ref{eq: sbt L}), one may introduce a parameter $\kappa \in (0,1)$ and consider (\ref{eq: sbt U})$- \kappa \times$ (\ref{eq: constrained pseudo regret}). In our analysis, we use $\kappa=1/2$ for simplicity, but optimizing over $\kappa$ can reduce the constant. Secondly, in (\ref{def: event Dt}), one may define
        \begin{align*}
            D_t = \qty{\sum_{i \ne i^*} \frac{1}{(y^{1/\alpha} + \eta_t \hat{\uL}_{t,i})^\alpha} \leq \frac{1}{y}},
        \end{align*}
        where we set $y=2$ for simplicity in our analysis. By optimizing $y$, the constant in the regret bound can be reduced. For instance, the term $2e^2$ in (\ref{eq: sto stability q}) contributes to the constant, and under the above definition of $D_t$, it becomes $ye^{\frac{2}{y-1}}$, which can be decreased by choosing an appropriate $y$. Finally, in the process of choosing $c$, we set $\zeta = 10^{-1}$ for simplicity. Together with the optimizations discussed above, tuning $\zeta$ could further reduce the constant in the regret bound.
        \end{remark}
\end{proof}

\section{Auxiliary lemmas}\label{app: aux lem}
Here, we include several lemmas, along with their proofs if necessary, that are used in the appendix.
\begin{lemma}[modified Lemma 25 of \citet{pmlr-v247-lee24a}]\label{lem: 25 of lee}
    On $D_t$, for any $\zeta \in (0,1)$, FTPL with Pareto perturbations of shape $\alpha>1$ satisfies
    \begin{align*}
        \E\bigg[\hat{\ell}_{t,i^*}&\qty(\phi_{i^*}(\eta_t \hat{L}_t) - \phi_{i^*}(\eta_t \hat{L}_{t+1})) \eval \hat{L}_t\bigg] \\
       & \leq \sum_{j \in [K]} q_{t,j} \cdot \sum_{i \ne i^*} \frac{e(1-e^{-1})\eta_t \alpha}{(1-\zeta)^{\alpha+1}(1+\eta_t\hat{\uL}_{t,i})^{\alpha+1}} + \frac{1}{1-e^{-1}}(1-e^{-1})^{\frac{\zeta}{\eta_t}}\qty(\frac{\zeta}{\eta_t} + e),
    \end{align*}
    and when $\eta_t = cK^{\frac{1}{\alpha}-\frac{1}{2}}/\sqrt{t}$,
    \begin{equation}\label{eq: hnd lem 11}
        \sum_{t=1}^\infty \frac{1}{1-e^{-1}}(1-e^{-1})^{\frac{\zeta}{\eta_t}}\qty(\frac{\zeta}{\eta_t} + e) \leq \gO\qty(c^2 K^{\frac{2}{\alpha}-1}).
    \end{equation}
\end{lemma}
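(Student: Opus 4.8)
The plan is to adapt the argument of Lemma~25 in \citet{pmlr-v247-lee24a} to our exploration weights $q_t$. Since $\hat\ell_{t,i^*}$ vanishes unless the explored arm satisfies $j_t=i^*$, I first take the conditional expectation over $\ell_t$ and $j_t\sim p_t$ and use $\E[\I[j_t=i^*]\mid\hat L_t]=p_{t,i^*}$, which reduces the quantity to $\E_{\ell_t}[\ell_{t,i^*}(\phi_{i^*}(\eta_t\hat L_t)-\phi_{i^*}(\eta_t\hat L_t+\eta_t\ell_{t,i^*}p_{t,i^*}^{-1}e_{i^*}))]$. On $D_t$ we have $\hat\uL_{t,i^*}=0$, so $q_{t,i^*}=1$ and therefore $p_{t,i^*}^{-1}=\sum_{j\in[K]}q_{t,j}$, which is exactly the prefactor appearing on the right-hand side. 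Writing the difference as $\int_0^s(-\phi_{i^*}')(\eta_t\hat L_t+xe_{i^*})\,\mathrm{d}x$ with $s=\eta_t\ell_{t,i^*}p_{t,i^*}^{-1}$, I split the integral at the threshold $x=\zeta$ and treat the bulk part $x\le\zeta$ and the tail part $x>\zeta$ separately.

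The key structural fact is that, because $i^*$ is the empirical leader on $D_t$, raising its coordinate shrinks every suboptimal gap; concretely, $-\phi_{i^*}'(\eta_t\hat L_t+xe_{i^*})=\alpha\,\E_{r_{-i^*}}[(M+x)^{-(\alpha+1)}\I[M+x>1]]$ with $M=\max_{j\ne i^*}(r_{t,j}-\eta_t\hat\uL_{t,j})$, equivalently a sum over $i\ne i^*$ of integrals governed by $(z+\eta_t\hat\uL_{t,i}-x+1)^{-(\alpha+1)}$. Unlike the generic derivative bound used in Lemma~\ref{lem: stability bound}, this representation decays in the suboptimal gaps. For the bulk part I bound the maximum by the sum over $j$ (keeping the full power $\alpha+1$) and use the elementary inequality $z+\eta_t\hat\uL_{t,i}-x+1\ge(1-\zeta)(z+\eta_t\hat\uL_{t,i}+1)$, valid for $x\le\zeta$, which produces the factor $(1-\zeta)^{-(\alpha+1)}$. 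Integrating in $z$, bounding the residual product factors (which yields the constant $e(1-e^{-1})$), and using $\int_0^{\min(s,\zeta)}\le s=\eta_t\ell_{t,i^*}p_{t,i^*}^{-1}$ together with $\E[\ell_{t,i^*}^2]\le1$, the bulk contributes precisely the first term $\sum_j q_{t,j}\sum_{i\ne i^*}\frac{e(1-e^{-1})\eta_t\alpha}{(1-\zeta)^{\alpha+1}(1+\eta_t\hat\uL_{t,i})^{\alpha+1}}$.

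The tail part $x>\zeta$ is where the analysis is most delicate, and I expect it to be the main obstacle. Here the difference is bounded by the probability that $i^*$ is still selected after its cumulative loss has been inflated, and one must show that this decays geometrically in the size of the inflation. Following the block argument of \citet{pmlr-v247-lee24a}, I would discretize the forced loss increment into unit levels, argue that the selection probability of $i^*$ is multiplied by at most $1-e^{-1}$ per level, and sum the resulting arithmetico-geometric series, which yields the second term $\frac{1}{1-e^{-1}}(1-e^{-1})^{\zeta/\eta_t}(\zeta/\eta_t+e)$. The difficulty is that this contraction must hold uniformly over all configurations of the suboptimal arms permitted by $D_t$, so the per-level factor has to be established without any lower bound on the individual gaps.

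Finally, for \eqref{eq: hnd lem 11} I substitute $\eta_t=cK^{1/\alpha-1/2}/\sqrt t$, so that $\zeta/\eta_t=a\sqrt t$ with $a=\zeta K^{1/2-1/\alpha}/c$, and compare the series $\sum_t(1-e^{-1})^{a\sqrt t}(a\sqrt t+e)$ with the integral $\int_0^\infty(1-e^{-1})^{a\sqrt t}(a\sqrt t+e)\,\mathrm{d}t$. The summand is eventually decreasing because the geometric factor dominates the polynomial one, so the comparison is valid; the substitution $u=a\sqrt t$ turns the integral into $\frac{2}{a^2}\int_0^\infty(u^2+eu)(1-e^{-1})^u\,\mathrm{d}u$, a convergent constant. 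Hence the whole sum is $\gO(a^{-2})=\gO(c^2K^{2/\alpha-1})$, as claimed.
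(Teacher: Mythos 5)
Your reduction over $j_t$, the identity $p_{t,i^*}^{-1}=\sum_{j\in[K]}q_{t,j}$ on $D_t$ (via $q_{t,i^*}=1$, since $\hat\uL_{t,i^*}=0$ and $\sigma_{t,i^*}=1$), and your bulk estimate for $x\le\zeta$ reproduce the paper's case~(i) essentially verbatim: the same derivative representation as a sum over $i\ne i^*$ of integrals against $(z+\eta_t\hat\uL_{t,i}-x+1)^{-(\alpha+1)}$, the same elementary inequality $z+\eta_t\hat\uL_{t,i}+1-x\ge(1-\zeta)(z+\eta_t\hat\uL_{t,i}+1)$, the same constant $e(1-e^{-1})$ from retaining the factor $e^{-(1-F(z))}$, and the same use of $\ell_{t,i^*}^2\le 1$ with $p_{t,i^*}^{-1}=\sum_j q_{t,j}$. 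Your integral-comparison proof of (\ref{eq: hnd lem 11}) is also correct and slightly more explicit than the paper, which folds it into the citation: with $\rho=1-e^{-1}$ the summand $\rho^{u}(u+e)$ is decreasing in $u\ge 0$ (its log-derivative factor $(u+e)\ln\rho+1\le e\ln\rho+1<0$), so the sum is dominated by $\frac{2}{a^2}\int_0^\infty (u^2+eu)\rho^{u}\,\dd u=\gO(a^{-2})$ with $a=\zeta K^{\frac{1}{2}-\frac{1}{\alpha}}/c$.

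The tail part, however, is a genuine gap, and the mechanism you sketch cannot be repaired. You propose that each unit-level inflation of arm $i^*$'s loss multiplies its selection probability by at most $1-e^{-1}$; no such uniform contraction holds. Take $K=2$ with $\eta_t\hat\uL_{t,2}$ very large: $\phi_{i^*}$ is arbitrarily close to $1$ both before and after a unit shift, so the per-level factor is arbitrarily close to $1$ --- exactly the failure you flag (``without any lower bound on the individual gaps''), and it is fatal rather than merely delicate. The factor $1-e^{-1}$ in \citet{pmlr-v201-honda23a} does not arise from any contraction of $\phi$: there the estimator is built by geometric resampling, so $\hat\ell_{t,i^*}$ involves a geometric random variable whose success probability is lower bounded by a constant on $D_t$-type events, and the geometric tail $\sP[G\ge k]\le(1-e^{-1})^{k-1}$ is what produces $(1-e^{-1})^{\zeta/\eta_t}(\zeta/\eta_t+e)$. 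Correspondingly, the paper never integrates $-\phi_{i^*}'$ beyond $x=\zeta$: it splits on the event $\{\hat\ell_{t,i^*}>\zeta/\eta_t\}$, bounds the probability difference $\phi_{i^*}(\eta_t\hat L_t)-\phi_{i^*}(\eta_t\hat L_{t+1})$ by $1$ there, and invokes the truncated-mean bound $\E[\I[\hat\ell_{t,i^*}>\zeta/\eta_t]\,\hat\ell_{t,i^*}\mid\hat L_t]\le\frac{1}{1-e^{-1}}(1-e^{-1})^{\zeta/\eta_t}(\zeta/\eta_t+e)$ cited from Lemma~11 of \citet{pmlr-v201-honda23a} (Lemma~23 of \citet{pmlr-v247-lee24a}). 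To close your argument you must either import that lemma as the paper does --- and you should then verify it transfers to the exact importance-weighted estimator used here, whose conditional tail given $\hat L_t$ is deterministic (note $\hat\ell_{t,i^*}\le p_{t,i^*}^{-1}=\sum_j q_{t,j}$ on $D_t$) rather than geometric --- or supply a different tail mechanism; the per-level contraction of $\phi_{i^*}$ is not one.
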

\begin{proof}
   Note that our formulation differs slightly from that in \citet{pmlr-v247-lee24a}, as we require a tighter result for the later use of this lemma. 
   Nevertheless, the overall proof remains almost the same and thus we only provide details for the parts that differ.

    As the proof in \citet{pmlr-v201-honda23a} and \citet{pmlr-v247-lee24a}, we consider two cases (i) $p_{t,i^*}^{-1} \leq \zeta/\eta_t$ and (ii) $p_{t,i^*}^{-1} > \zeta/\eta_t$ separately.
    Notice that the case (ii) can be directly obtained by Lemma 11 in \citet{pmlr-v201-honda23a} (or Lemma 23 in \citet{pmlr-v247-lee24a}), which shows that
    \begin{align*}
        \E\qty[\I\qty[\hat{\ell}_{t,i^*}> \frac{\zeta}{\eta_t}] \hat{\ell}_{t,i^*} \middle | \hat{L}_t]\leq \frac{1}{1-e^{-1}}(1-e^{-1})^{\frac{\zeta}{\eta_t}}\qty(\frac{\zeta}{\eta_t} + e).
    \end{align*}
    When $p_{t,i}^{-1} \leq \zeta/\eta_t$, on $D_t$ (where $\hat{L}_{t,i^*}=0$), we have
    \begin{align*}
        \phi_{i^*}(\eta_t(\hat{L}_t +xe_{i^*})) = \int_1^\infty f(z) \prod_{j \ne i^*} F\qty(z+ \eta_t (\hat{\uL}_{t,j}-x)) \dd z.
    \end{align*}
    This implies that for $x \leq \frac{\zeta}{\eta_t}$
    \begin{align*}
        - &\frac{\dd }{\dd x} \phi_{i^*}(\eta_t(\hat{L}_t +xe_{i^*})) \\
        &= \int_1^\infty f(z) \sum_{i\ne i^*} \qty(\eta_t f\qty(z+\eta_t(\hat{\uL}_{t,i}-x)) \prod_{j \ne i, i^*}\qty(1-F\qty(z+\eta_t(\hat{\uL}_{t,j}-x)))) \dd z \\
        &\leq \int_1^\infty f(z) \sum_{i\ne i^*} \qty(\eta_t f\qty(z+\eta_t(\hat{\uL}_{t,i}-x)) \exp(-\sum_{j \ne i, i^*}\qty(1-F\qty(z+\eta_t(\hat{\uL}_{t,j}-x))))) \dd z \\
        &\leq e \int_1^\infty f(z) \sum_{i\ne i^*} \qty(\eta_t f\qty(z+\eta_t(\hat{\uL}_{t,i}-x)) \exp(-\sum_{j \ne i, i^*}\qty(1-F\qty(z+\eta_t(\hat{\uL}_{t,j}-x)))-(1-F(z)))) \dd z \\
        &\leq e \int_1^\infty f(z) \sum_{i\ne i^*} \eta_t f\qty(z+\eta_t(\hat{\uL}_{t,i}-x)) \exp(-(1-F(z))) \dd z \\
        &= e \int_1^\infty f(z) \sum_{i\ne i^*} \eta_t \frac{\alpha}{(z+\eta_{t}(\hat{\uL}_{t,i}-x))^{\alpha+1}} \exp(-(1-F(z))) \dd z \\
        &\leq  \sum_{i\ne i^*} \frac{e\eta_t \alpha}{(1-\zeta)^{\alpha+1}(1+\eta_t\hat{\uL}_{t,i})^{\alpha+1}} \int_1^\infty f(z)\exp(-(1-F(z))) \dd z \numberthis{\label{eq: hnd 11 bound}} 
        \\
        &=  \sum_{i\ne i^*} \frac{e(1-e^{-1})\eta_t \alpha}{(1-\zeta)^{\alpha+1}(1+\eta_t\hat{\uL}_{t,i})^{\alpha+1}}, \numberthis{\label{eq: hnd 11 bound rslt}}
    \end{align*}
where (\ref{eq: hnd 11 bound}) follows from $x \leq \zeta/\eta_t$ and
\begin{equation*}
    \frac{1}{(z+a-b)} \leq \frac{1}{(1+a)(1-b)} , \quad \forall z\geq 1, b<1, a\geq 0. 
\end{equation*}
Therefore, we have
\begin{align*}
    \E&\qty[\I[\hat{\ell}_{t,i^*} \leq \zeta/\eta_t]\hat{\ell}_{t,i^*}\qty(\phi_{i^*}(\eta_t\hat{L}_t) - \phi_{i^*}(\eta_t\hat{L}_{t+1}))\middle | \hat{L}_t ] \\
    &\hspace{3em}\leq \E\qty[ \I[\hat{\ell}_{t,i} \leq \zeta/\eta_t] \hat{\ell}_{t,i^*}^2 \sum_{i\ne i^*}  \frac{e(1-e^{-1})\eta_t \alpha}{(1-\zeta)^{\alpha+1}(1+\eta_t\hat{\uL}_{t,i})^{\alpha+1}} \middle | \hat{L}_t] \tag{by (\ref{eq: hnd 11 bound rslt})} \\
    &\hspace{3em}\leq \E\qty[ \frac{\ell_{t,i^*}^2}{p_{t,i^*}} \sum_{i\ne i^*}  \frac{e(1-e^{-1})\eta_t \alpha}{(1-\zeta)^{\alpha+1}(1+\eta_t\hat{\uL}_{t,i})^{\alpha+1}} \middle | \hat{L}_t] \\
    &\hspace{3em} \leq \sum_{j \in [K]} q_{t,j} \sum_{i \ne i^*} \frac{e(1-e^{-1})\eta_t \alpha}{(1-\zeta)^{\alpha+1}(1+\eta_t\hat{\uL}_{t,i})^{\alpha+1}},
\end{align*}
where the last inequality follows from $\ell_{t} \in [0,1]^K$ and $p_{t,i^*} = \frac{q_{t,i^*}}{\sum_{j\in [K]} q_{t,j}}$ with $q_{t,i^*}=1$ on $D_t$.
\end{proof}

\begin{lemma} \label{lem: lb and ub of w_{t,i}}
For FTPL with Pareto perturbations with shape $\alpha > 1$, it holds that
    \begin{align*} 
        w_{t,i} \leq \frac{1}{(1 + \eta_t\hat\uL_{t,i})^\alpha}, \forall t \in [T], i \in [K] \qq{and} w_{t,i} \geq \frac{1}{2e^2(1 + \eta_t \hat\uL_{t,i})^\alpha} \text{ on } D_t, \forall i \ne i^*.
    \end{align*}
In addition, the optimal arm satisfies $w_{t,i^*} \geq \frac{1}{2e}$ on $D_t$.
\end{lemma}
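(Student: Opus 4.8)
The plan is to work directly with the integral representation (\ref{eq: exploit prob}), writing $w_{t,i} = \int_1^\infty f(z+a_i)\prod_{j\neq i}F(z+a_j)\,\dd z$ with the shorthand $a_j := \eta_t\hat\uL_{t,j}\geq 0$, and to settle the three claims by controlling the product of CDF factors. The upper bound is immediate and needs neither $D_t$ nor $i\neq i^*$: since $F\leq 1$, I would drop every factor to get $w_{t,i}\leq\int_1^\infty f(z+a_i)\,\dd z = (1+a_i)^{-\alpha}$, using $\int_1^\infty \alpha(z+a)^{-(\alpha+1)}\dd z=(1+a)^{-\alpha}$.

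For the two lower bounds I would restrict the integral to $z\geq 2^{1/\alpha}$ and bound the product from below by an exponential. The threshold $2^{1/\alpha}$ is chosen for two reasons: it is exactly the cutoff appearing in the definition of $D_t$, and it makes $\int_{2^{1/\alpha}}^\infty f = \tfrac12$. On $\{z\geq 2^{1/\alpha}\}$ every term $u_j:=(z+a_j)^{-\alpha}$ satisfies $u_j\leq z^{-\alpha}\leq 1/2$, so I can apply the elementary inequality $1-u\geq e^{-2u}$, valid on $[0,1/2]$, to obtain $\prod_{j\neq i}F(z+a_j)\geq\exp(-2\sum_{j\neq i}u_j)$. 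The key is then to bound $\sum u_j$ using $D_t$, recalling that on $D_t$ we have $\hat\uL_{t,i^*}=0$. For a suboptimal arm $i\neq i^*$ I would split $\sum_{j\neq i}u_j = u_{i^*}+\sum_{j\neq i,i^*}u_j$; the first term is $z^{-\alpha}\leq 1/2$, and the second is at most $\sum_{j\neq i^*}(z+a_j)^{-\alpha}\leq\sum_{j\neq i^*}(2^{1/\alpha}+a_j)^{-\alpha}\leq 1/2$ by $D_t$. Hence the product is at least $e^{-2}$ on this range, giving $w_{t,i}\geq e^{-2}\int_{2^{1/\alpha}}^\infty f(z+a_i)\dd z = e^{-2}(2^{1/\alpha}+a_i)^{-\alpha}$. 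Finally I would convert the denominator via $2^{1/\alpha}+a_i\leq 2^{1/\alpha}(1+a_i)$ (equivalent to $1\leq 2^{1/\alpha}$), so that $(2^{1/\alpha}+a_i)^\alpha\leq 2(1+a_i)^\alpha$ and $w_{t,i}\geq \frac{1}{2e^2}(1+a_i)^{-\alpha}$, as claimed.

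The optimal-arm bound follows the same template but with a sharper sum estimate. With $a_{i^*}=0$, I would lower-bound $\prod_{j\neq i^*}F(z+a_j)$ on $z\geq 2^{1/\alpha}$ by $\exp(-2\sum_{j\neq i^*}u_j)$, where now $\sum_{j\neq i^*}u_j\leq\sum_{j\neq i^*}(2^{1/\alpha}+a_j)^{-\alpha}\leq 1/2$ directly from $D_t$ (no extra $u_{i^*}$ term arises), so the product is at least $e^{-1}$. Integrating then yields $w_{t,i^*}\geq e^{-1}\int_{2^{1/\alpha}}^\infty f(z)\dd z = e^{-1}\cdot\frac12 = \frac{1}{2e}$. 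The main obstacle this scheme overcomes is the behaviour near $z=1$: the factor $F(z+a_{i^*})=F(z)\to 0$ as $z\to 1$, so any attempt to bound the product over the full range $[1,\infty)$ collapses to a useless estimate; cutting the integral at $2^{1/\alpha}$ both excises this degenerate region and is precisely calibrated so that the surviving mass $\int_{2^{1/\alpha}}^\infty f=1/2$ pairs with the $1/2$ budget supplied by $D_t$. The only external facts used are the integral representation (\ref{eq: exploit prob}) and the identity $\hat\uL_{t,i^*}=0$ on $D_t$.
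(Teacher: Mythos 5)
Your proof is correct and follows essentially the same route as the paper's: the same upper bound by dropping the CDF factors, the same truncation of the integral at the median $2^{1/\alpha}$, the same use of the $D_t$ budget of $1/2$, and the same final conversion $(2^{1/\alpha}+a)^{\alpha}\le 2(1+a)^{\alpha}$, yielding the identical constants $\frac{1}{2e^2}$ and $\frac{1}{2e}$. The only cosmetic differences are that you invoke $1-u\ge e^{-2u}$ on $[0,1/2]$ directly where the paper chains $1-x\ge e^{-x/(1-x)}$ with the median property $F(z)\ge 1/2$, and you keep the product over $j\ne i$ and split off the $u_{i^*}$ term rather than first adjoining the $j=i$ factor as the paper does.
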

\begin{proof}
By definition of $w_{t,i}$, for any $i\in[K]$, $t\in \sN$ and $\hat\buL_t \in \sR_{\geq 0}^K$, it holds that
\begin{align*}
    w_{t,i} = \int_{1}^\infty f(z+\eta_t\hat{\uL}_{t,i})\prod_{j\ne i} F(z+\eta_t\hat{\uL}_{t,j}) \dd z = \int_{1}^\infty \frac{\alpha}{(z+\eta_t\hat{\uL}_{t,i})^{\alpha+1}}\prod_{j\ne i} F(z+\eta_t\hat{\uL}_{t,j}) \dd z. 
\end{align*}
\paragraph{Upper bound} The upper bound follows directly from the definition of $w_{t,i}$:
    \begin{equation*}
        \int_{1}^\infty \frac{\alpha}{(z+\eta_t\hat{\uL}_{t,i})^{\alpha+1}}\prod_{j\ne i} F(z+\eta_t\hat{\uL}_{t,j}) \dd z \leq \int_{1}^\infty \frac{\alpha}{(z+\eta_t\hat{\uL}_{t,i})^{\alpha+1}} \dd z \leq \frac{1}{(1+\eta_t\hat{\uL}_{t,i})^\alpha}.
    \end{equation*}
    Note that this inequality holds for all $t$, regardless of whether the event $D_t$ occurs.

\paragraph{Lower bound}
    Since the cumulative distribution function $F$ takes value in $[0,1]$, on $D_t$, we obtain for the second term, 
    \begin{align*}
       w_{t,i}&= \int_{1}^\infty f(z+\eta_t\hat{\uL}_{t,i})\prod_{j\ne i} F(z+\eta_t\hat{\uL}_{t,j}) \dd z  \\
        &\geq \int_{1}^\infty f(z+\eta_t\hat{\uL}_{t,i})\prod_{j\in [K]} F(z+\eta_t\hat{\uL}_{t,j}) \dd z \\
        &\geq \int_{1}^\infty f(z+\eta_t\hat{\uL}_{t,i}) \exp(-\sum_{j\in [K]} \frac{1-F(z+\eta_t\hat{\uL}_{t,j})}{F(z+\eta_t\hat{\uL}_{t,j})}) \dd z  \tag{$\because e^{-\frac{x}{1-x}} \leq 1-x$ for $x<1$}
        \\
        & = \int_{1}^\infty f(z+\eta_t\hat{\uL}_{t,i}) \exp(-\sum_{j\ne i^*} \frac{1-F(z+\eta_t\hat{\uL}_{t,j})}{F(z+\eta_t\hat{\uL}_{t,j})}) \exp(-\frac{1-F(z)}{F(z)}) \dd z  \tag{$\because \hat{\uL}_{t,i^*}=0$ on $D_t$} \\
        &\geq \int_{2^{1/\alpha}}^\infty f(z+\eta_t\hat{\uL}_{t,i}) \exp(-\sum_{j\ne i^*} \frac{1-F(z+\eta_t\hat{\uL}_{t,j})}{F(z+\eta_t\hat{\uL}_{t,j})}) \exp(-\frac{1-F(z)}{F(z)}) \dd z \\
        &\geq \frac{1}{e}\int_{2^{1/\alpha}}^\infty f(z+\eta_t\hat{\uL}_{t,i}) \exp(-2\sum_{j\ne i^*} (1-F(z+\eta_t\hat{\uL}_{t,j}))) \dd z \tag{$\because 2^{1/\alpha}$ is the median} \\
        &= \frac{1}{e}\int_{2^{1/\alpha}}^\infty f(z+\eta_t\hat{\uL}_{t,i}) \exp(-2\sum_{j\ne i^*} \frac{1}{(z+\eta_t\hat{\uL}_{t,j})^\alpha})\dd z \tag{Pareto perturbation}\\
        &\geq  \frac{1}{e^2} \int_{2^{1/\alpha}}^\infty f(z+\eta_t\hat{\uL}_{t,i}) \dd z = \frac{1}{e^2} \frac{1}{(2^{1/\alpha}+\eta_t\hat{\uL}_{t,i})^\alpha}. \tag{Definition of $D_t$ in (\ref{def: event Dt})}
    \end{align*}
     Since $\frac{(x+1)^\alpha}{(x+2^{1/\alpha})^\alpha}$ is increasing with respect to $x\geq 0$ for any $\alpha>1$, this implies that
    \begin{equation*}
        \frac{(1+\eta_t\hat{\uL}_{t,i})^\alpha}{(2^{1/\alpha}+\eta_t\hat{\uL}_{t,i})^\alpha} \geq \frac{1}{2} \implies  \frac{1}{2(1+\eta_t\hat{\uL}_{t,i})^\alpha} \leq \frac{1}{(2^{1/\alpha}+\eta_t\hat{\uL}_{t,i})^\alpha},
    \end{equation*}
    which concludes the proof for the lower bound.
    \paragraph{Lower bound for optimal arm} 
    Since $\hat{\uL}_{t,i^*}=0$ on $D_t$, we obtain that
    \begin{align*}
        w_{t,i^*} &= \int_1^\infty \frac{\alpha}{z^{\alpha+1}} \prod_{j\ne i^*} F(z+\eta_t\hat{\uL}_{t,j}) \dd z \\
        &\geq \int_1^\infty \frac{\alpha}{z^{\alpha+1}} \exp(-\sum_{j\ne i^*} \frac{1-F(z+\eta_t\hat{\uL}_{t,j})}{F(z+\eta_t\hat{\uL}_{t,j})}) \dd z \tag{$\because e^{-\frac{x}{1-x}} \leq 1-x$ for $x<1$}\\
        &\geq \int_{2^{1/\alpha}}^\infty \frac{\alpha}{z^{\alpha+1}} \exp(-\sum_{j\ne i^*} \frac{1-F(z+\eta_t\hat{\uL}_{t,j})}{F(z+\eta_t\hat{\uL}_{t,j})}) \dd z \\
        &\geq \int_{2^{1/\alpha}}^\infty \frac{\alpha}{z^{\alpha+1}} \exp(-2\sum_{j\ne i^*}(1-F(z+\eta_t\hat{\uL}_{t,j})) ) \dd z \\
        &\geq \frac{1}{e} \int_{2^{1/\alpha}}^\infty \frac{\alpha}{z^{\alpha+1}} \dd z = \frac{1}{2e},
    \end{align*}
    which concludes the proof.
\end{proof}

\begin{lemma}\label{lem: Dtc lower bound}
    On $D_t^c$, $\sum_{i\ne i^*} \Delta_i w_{t,i} \geq \frac{1-e^{-1/2}}{2} \Delta_\Min $.
\end{lemma}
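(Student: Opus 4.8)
The plan is to reduce the statement to a clean upper bound on the optimal arm's exploitation probability. Since $\Delta_i \geq \Delta_\Min$ for every $i \ne i^*$, we have $\sum_{i\ne i^*}\Delta_i w_{t,i} \geq \Delta_\Min \sum_{i\ne i^*} w_{t,i} = \Delta_\Min(1 - w_{t,i^*})$, using $\sum_{i\in[K]} w_{t,i}=1$. Hence it suffices to show $w_{t,i^*} \leq \tfrac{1+e^{-1/2}}{2}$ on $D_t^c$, which is exactly equivalent to the claimed bound $1 - w_{t,i^*} \geq \tfrac{1-e^{-1/2}}{2}$.

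First I would write $w_{t,i^*}$ using the integral representation (\ref{eq: exploit prob}), namely $w_{t,i^*} = \int_1^\infty f(z+\eta_t\hat\uL_{t,i^*}) \prod_{j\ne i^*} F(z+\eta_t\hat\uL_{t,j})\,dz$. Because $\hat\uL_{t,i^*} \geq 0$ and $f$ is decreasing, I can replace $f(z+\eta_t\hat\uL_{t,i^*})$ by $f(z)$; note this step uses only nonnegativity of the loss-gap vector and, crucially, does \emph{not} require $\hat\uL_{t,i^*}=0$, which need not hold on $D_t^c$. Next I would bound the product via $\ln y \leq y-1$, giving $\prod_{j\ne i^*} F(z+\eta_t\hat\uL_{t,j}) \leq \exp\!\big(-\sum_{j\ne i^*}(z+\eta_t\hat\uL_{t,j})^{-\alpha}\big)$, using that $1-F(x)=x^{-\alpha}$ for the Pareto CDF.

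The key step is to split the integral at the median $z = 2^{1/\alpha}$ of $\gP_\alpha$ (where $F(2^{1/\alpha})=\tfrac12$) and connect the sub-median region to the event $D_t^c$. On $[1,2^{1/\alpha}]$, since $z \leq 2^{1/\alpha}$ we have $(z+\eta_t\hat\uL_{t,j})^{-\alpha} \geq (2^{1/\alpha}+\eta_t\hat\uL_{t,j})^{-\alpha}$, so the exponent is at least $\sum_{i\ne i^*}(2^{1/\alpha}+\eta_t\hat\uL_{t,i})^{-\alpha} > \tfrac12$ by the definition of $D_t^c$ in (\ref{def: event Dt}); hence the product is at most $e^{-1/2}$, and this piece contributes at most $e^{-1/2}\int_1^{2^{1/\alpha}} f(z)\,dz = \tfrac{e^{-1/2}}{2}$. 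On $[2^{1/\alpha},\infty)$ I bound the product by $1$, contributing at most $\int_{2^{1/\alpha}}^\infty f(z)\,dz = \tfrac12$. Adding the two pieces yields $w_{t,i^*}\leq \tfrac{1+e^{-1/2}}{2}$, which completes the argument.

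I expect the main obstacle to be recognizing the right target: rather than trying to lower bound each suboptimal mass $w_{t,i}$ individually (for which Lemma~\ref{lem: lb and ub of w_{t,i}} only supplies bounds on $D_t$, not on $D_t^c$), one should upper bound the single quantity $w_{t,i^*}$ and exploit $\sum_{i\ne i^*} w_{t,i} = 1-w_{t,i^*}$. The only genuine calculation is aligning the split point with the median $2^{1/\alpha}$ so that the sub-median region reproduces exactly the threshold $\tfrac12$ appearing in $D_t^c$; everything else is a short application of $\ln y \le y-1$ together with the closed forms of the Pareto $f$ and $F$.
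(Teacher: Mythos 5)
Your proof is correct and follows essentially the same route as the paper's: both reduce the claim to the upper bound $w_{t,i^*}\leq \tfrac{1+e^{-1/2}}{2}$ via $\Delta_i\geq\Delta_\Min$ and $\sum_{i\ne i^*}w_{t,i}=1-w_{t,i^*}$, bound the product of Pareto CDFs by $\exp\qty(-\sum_{j\ne i^*}(z+\eta_t\hat\uL_{t,j})^{-\alpha})$, and split the integral at the median $2^{1/\alpha}$ so that the definition of $D_t^c$ yields the factor $e^{-1/2}$ on the sub-median piece. The only (immaterial) difference is that you replace $f(z+\eta_t\hat\uL_{t,i^*})$ by $f(z)$ at the start, whereas the paper does so at the end.
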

\begin{proof}
    By definition, we have on $D_t^c$ that
    \begin{align*}
        w_{t,i^*} &= \int_{1}^\infty \frac{\alpha}{(z+\eta_t\hat{\uL}_{t,i^*})^{\alpha+1}}  \prod_{j\ne i^*} \qty(1-\frac{1}{(z+\eta_t\hat{\uL}_{t,j})^\alpha}) \dd z \\
        &\leq \int_1^\infty \frac{\alpha}{(z+\eta_t\hat{\uL}_{t,i^*})^{\alpha+1}}  \exp(-\sum_{j\ne i^*} \frac{1}{(z+\eta_t\hat{\uL}_{t,j})^\alpha}) \dd z \\
        &= \int_1^{2^{1/\alpha}} \frac{\alpha}{(z+\eta_t\hat{\uL}_{t,i^*})^{\alpha+1}}  \exp(-\sum_{j\ne i^*} \frac{1}{(z+\eta_t\hat{\uL}_{t,j})^\alpha}) \dd z  + \int_{2^{1/\alpha}}^\infty \frac{\alpha}{(z+\eta_t\hat{\uL}_{t,i^*})^{\alpha+1}} \dd z \\
        &\leq \frac{1}{\sqrt{e}}\int_1^{2^{1/\alpha}} \frac{\alpha}{(z+\eta_t\hat{\uL}_{t,i^*})^{\alpha+1}} + \int_{2^{1/\alpha}}^\infty \frac{\alpha}{(z+\eta_t\hat{\uL}_{t,i^*})^{\alpha+1}} \dd z \tag{by definition of $D_t^c$}\\
        &\leq \frac{1}{\sqrt{e}}\int_1^{2^{1/\alpha}} \frac{\alpha}{z^{\alpha+1}} + \int_{2^{1/\alpha}}^\infty \frac{\alpha}{z^{\alpha+1}} \dd z = \frac{e^{-1/2}}{2} + \frac{1}{2}.
    \end{align*}
    Since $1-w_{t,i^*}=\sum_{i\ne i^*} w_{t,i}$, the result follows.
\end{proof}

\begin{lemma}[Lemma of \citet{rouyer2020decoupled}]\label{lem: 9 of rouyer}
    Let $T_0 := \max_{i \ne i^*} \left\lceil D\qty(\frac{x}{\Delta_i})^2 \right\rceil$ for some constants $x \in \sR_{>0}$ and $D \geq 1$. For each suboptimal arm $i \ne i^*$, define $S_i(T) := \Delta_i^{1-\alpha} \sum_{t=T_0+1}^T t^{-\frac{\alpha}{2}}$. Then $S_i(T)$ converges as $T \to \infty$ if and only if $\alpha > 2$. Moreover, for $\alpha > 2$, we have
    \begin{align*}
        \lim_{T \to \infty} S_i(T) \leq \frac{2}{\alpha-2} \frac{\qty(x\sqrt{D})^{2-\alpha}}{\Delta_i}.
    \end{align*}
\end{lemma}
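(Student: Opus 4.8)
The plan is to treat the two assertions of the lemma separately, since the convergence dichotomy is a purely qualitative statement about the tail of a $p$-series, whereas the quantitative bound requires controlling that tail by an integral comparison and then substituting the lower bound on $T_0$ coming from its definition.

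First, for the convergence claim, I would note that for each suboptimal arm $i \ne i^*$ the prefactor $\Delta_i^{1-\alpha}$ is a fixed positive constant (as $\Delta_i > 0$), so $S_i(T)$ converges as $T \to \infty$ if and only if the series $\sum_t t^{-\alpha/2}$ converges. By the standard $p$-series / integral test, $\sum_t t^{-\alpha/2}$ converges precisely when $\alpha/2 > 1$, i.e.\ $\alpha > 2$, which establishes the dichotomy.

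Second, assuming $\alpha > 2$, I would bound the tail sum by an integral. Since $t \mapsto t^{-\alpha/2}$ is positive and decreasing, the integral comparison gives
\begin{align*}
    \sum_{t=T_0+1}^\infty t^{-\alpha/2} \leq \int_{T_0}^\infty t^{-\alpha/2}\,\dd t = \frac{T_0^{1-\alpha/2}}{\alpha/2 - 1} = \frac{2}{\alpha-2}\, T_0^{1-\alpha/2}.
\end{align*}
From the definition $T_0 = \max_{i\ne i^*}\lceil D(x/\Delta_i)^2\rceil$, the ceiling and the maximum both ensure $T_0 \geq D(x/\Delta_i)^2$ for each individual $i$. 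Because $\alpha > 2$ makes the exponent $1-\alpha/2$ negative, raising the larger quantity $T_0$ to this power yields the smaller value, so
\begin{align*}
    T_0^{1-\alpha/2} \leq \qty(\frac{D x^2}{\Delta_i^2})^{1-\alpha/2} = D^{1-\alpha/2}\, x^{2-\alpha}\, \Delta_i^{\alpha-2}.
\end{align*}

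Finally, I would multiply through by $\Delta_i^{1-\alpha}$ and collect exponents: $\Delta_i^{1-\alpha}\cdot\Delta_i^{\alpha-2} = \Delta_i^{-1}$, while $D^{1-\alpha/2}x^{2-\alpha} = x^{2-\alpha}D^{(2-\alpha)/2} = (x\sqrt{D})^{2-\alpha}$. This produces $S_i(T) \leq \tfrac{2}{\alpha-2}\,(x\sqrt{D})^{2-\alpha}/\Delta_i$, and taking $T \to \infty$ gives the claim. The calculation is essentially routine bookkeeping; the one point requiring care is the reversal of monotonicity when raising $T_0$ to the negative exponent $1-\alpha/2$, which is exactly what converts the lower bound $T_0 \geq D(x/\Delta_i)^2$ into the desired \emph{upper} bound on $T_0^{1-\alpha/2}$.
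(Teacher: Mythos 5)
Your proof is correct: the paper itself gives no proof of this lemma, deferring entirely to the citation of \citet{rouyer2020decoupled}, and your argument (constant prefactor plus the $p$-series dichotomy for the convergence claim, then the integral comparison $\sum_{t=T_0+1}^\infty t^{-\alpha/2} \leq \frac{2}{\alpha-2}T_0^{1-\alpha/2}$ combined with $T_0 \geq D(x/\Delta_i)^2$ and the sign reversal from the negative exponent $1-\alpha/2$) is exactly the standard derivation of this bound, with all exponent bookkeeping checking out. Nothing is missing.
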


\section{Additional experiments} \label{app: add exp}


\begin{figure}[t]
    \centering
    \includegraphics[width=\linewidth]{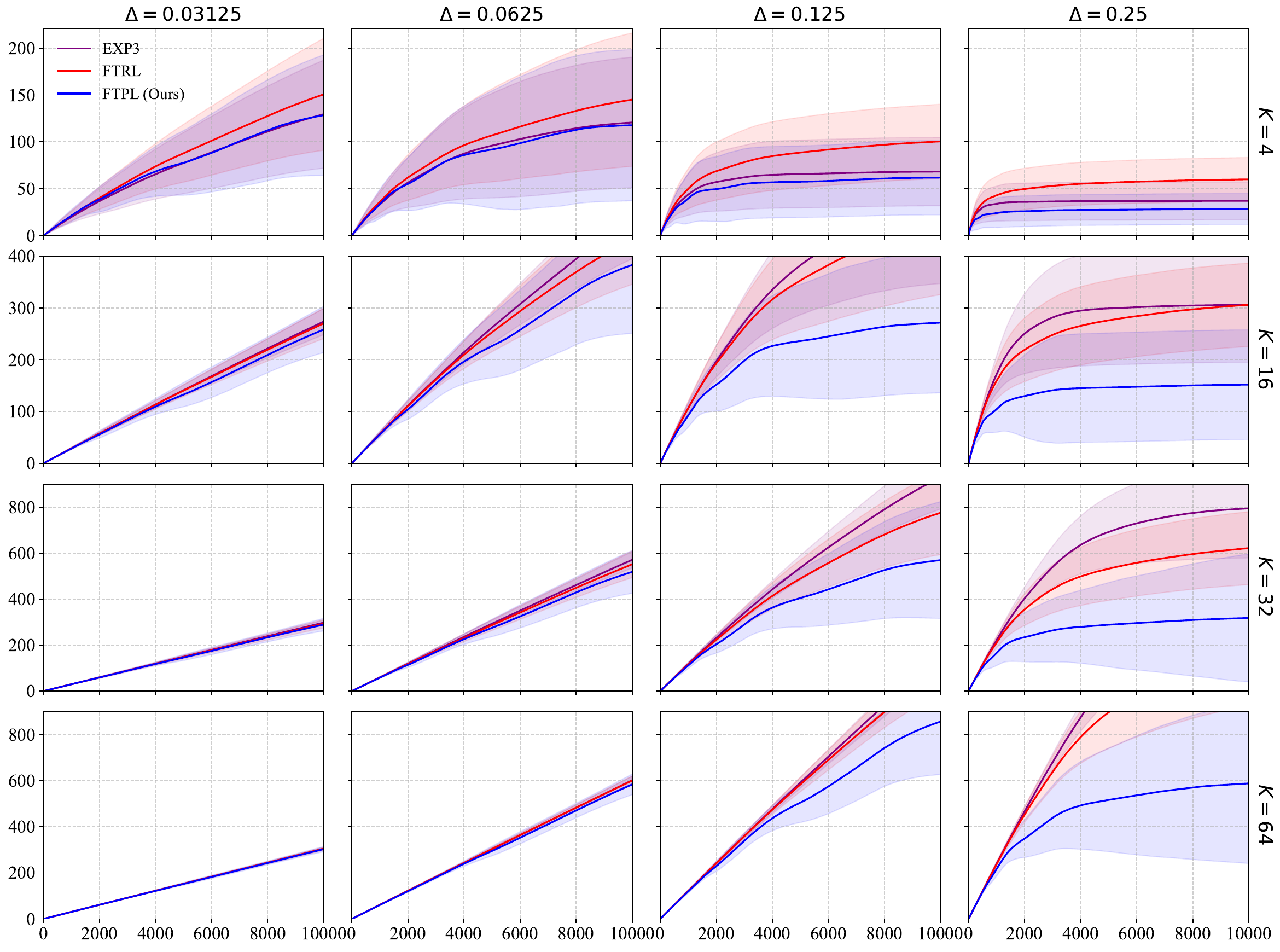}
    \caption{Cumulative regret comparison across 16 distinct instances with varying arm counts $K$ and gaps $\Delta$}
    \label{fig: 16_instances}
\end{figure}

\begin{figure}[t]
    \centering
    \includegraphics[width=0.5\linewidth]{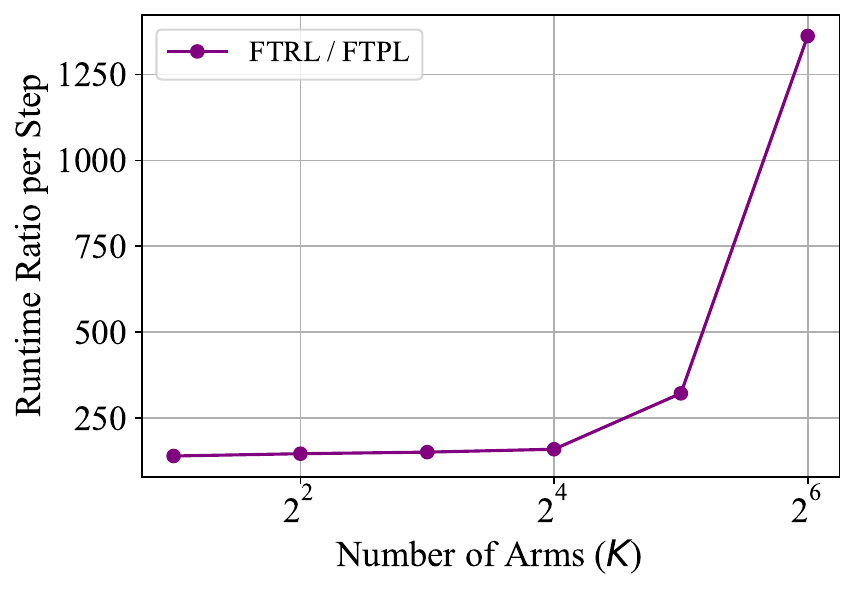}
    \caption{Runtime ratio using the SCS}
    \label{fig: complexity_scs}
\end{figure}

In this section, we present additional experimental results.

\begin{figure}[t]
    \centering
    \includegraphics[width=1\linewidth]{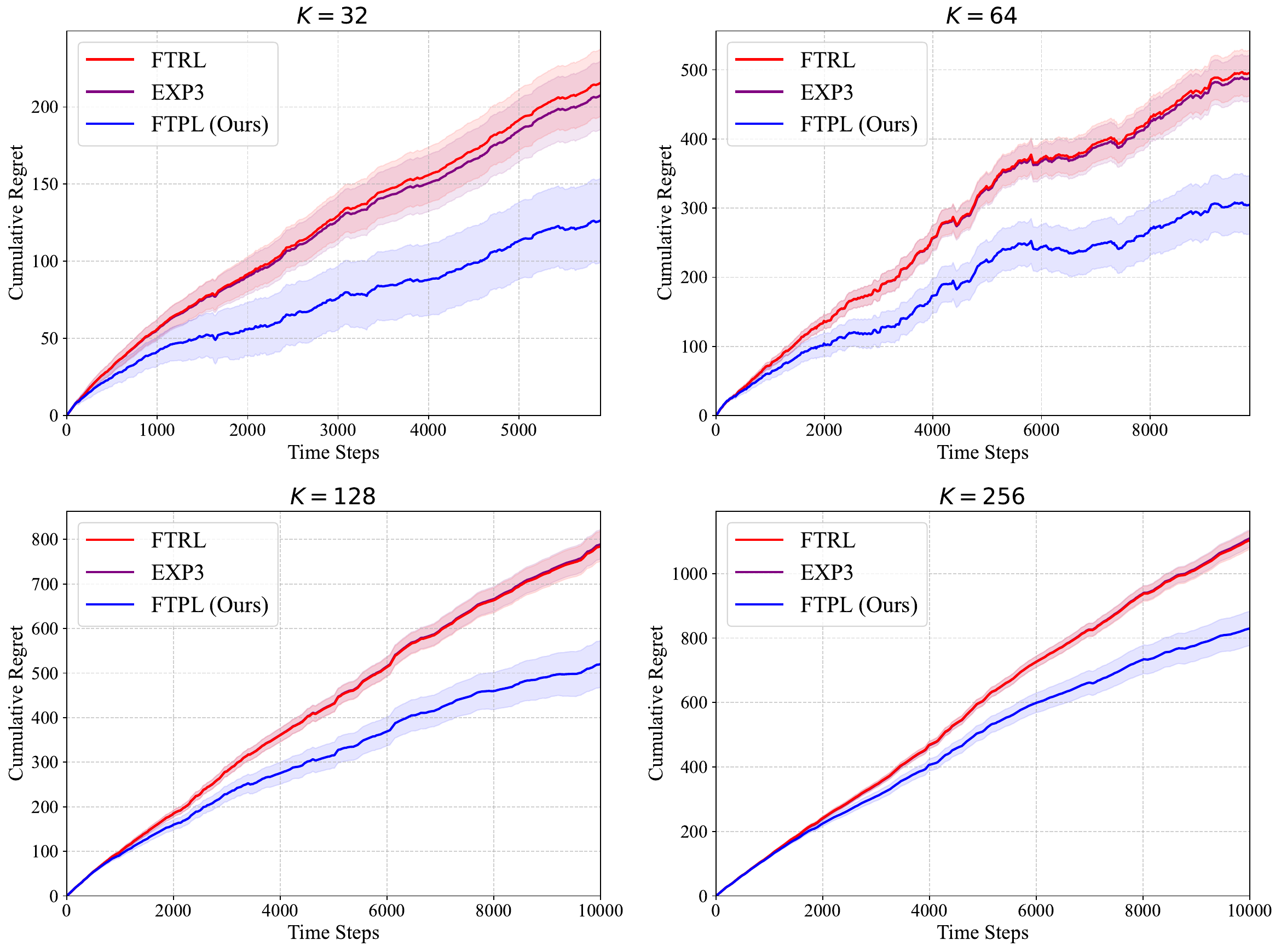}
    \caption{Cumulative regret comparison on the MovieLens 100K dataset under varying numbers of movies $K$}
    \label{fig: movie_lens}
\end{figure}

\subsection{Implementation details}\label{app: implementation}
For Decoupled-Tsallis-INF, we compute $w_t$ by using Newton’s method with a warm start, as described in \citet{zimmert2021tsallis}, which can efficiently solve the associated convex optimization problem.
For $\mathsf{EXP3}$, one can solve the optimization problem of FTRL with Shannon entropy, where the solution, $w_{t,i}$ can be written as for some $\nu \in \sR$ \vspace{-0.3em}
\begin{equation*}
    w_{t,i} = \exp(-\nu-\eta_t \hat{L}_{t,i}), \, \forall i \in [K], t \in \sN \implies w_{t,i} = \frac{\exp(-\eta_t\hat{L}_{t,i})}{\sum_{j=1}^K {\exp(-\eta_t\hat{L}_{t,j})}},  \vspace{-0.2em}
\end{equation*}
which can be directly updated after observing $\hat{\ell}_t$. 
For our proposed policy, we implement a naive baseline that performs a full sorting operation at each round.
However, this is unnecessarily inefficient since the loss estimator is updated only for the selected arm $i_t$, and therefore the ranks of the remaining arms change by at most one position.
Consequently, instead of re-sorting all arms, it suffices, whenever $\sigma_{t,i_t} \neq 1$, to check whether the rank $\sigma_{t+1,i_t}$ changes using a selection algorithm, which incurs $\mathcal{O}(K)$ time rather than the $\mathcal{O}(K \log K)$ cost of full sorting.
As mentioned in Section~\ref{sec: implementation}, such operations can be efficiently performed using $\Theta(K)$ memory, by storing the previous rank.
The full implementation is publicly available at 
\url{https://github.com/chaiwonkim/FTPL-for-Decoupled-Bandits}.


\subsection{Adversarial regime}\label{app: adv experiment}



To demonstrate the scalability and robustness of our policy, we conduct a comprehensive evaluation across a broader range of environments. Specifically, we expand our experimental scope to 16 distinct instances by varying the arm count $K \in \{4, 16, 32, 64\}$ and the gap $\Delta \in \{0.03125,0.0625,0.125,0.25\}$, following the experimental framework of \citet{zimmert2021tsallis}. Figure~\ref{fig: 16_instances} shows the average cumulative regret over 1,000 independent repetitions, with shaded regions indicating one standard deviation. Across all instances, our policy consistently outperforms all baselines, maintaining superior performance even under increased problem complexity (i.e., with a large $K$ and a small $\Delta$).

Figure~\ref{fig: complexity_scs} shows the per-step runtime ratio of $\mathsf{FTRL}$ to $\mathsf{FTPL}$ for $K \in \{2^i:i\in [6]\}$, where the optimization step in $\mathsf{FTRL}$ is solved using the splitting conic solver (SCS). The $x$-axis is shown on a logarithmic scale. As the number of arms increases, the ratio grows rapidly, reaching approximately $1363$ for $K=64$. Due to the excessive runtime of $\mathsf{FTRL}$ for larger number of arms, the experiments are repeated only $100$ times and are not performed beyond $K=64$. 

\subsection{Real-world data-based simulation}\label{app: real world experiment}

To further validate our approach in real-world scenarios, we construct a robust offline oracle by transforming the MovieLens 100K dataset, a common benchmark for online learning~\citep{li2019online}, into an adversarial decoupled bandit environment. The rewards are estimated via a matrix factorization-based approach. To address the challenge of counterfactual evaluation and prevent the oracle from producing biased rewards, we apply a double-centering preprocessing step prior to factorization, which allows the model to capture pure latent user-item interactions. 

We conduct all experiments with 1,000 independent repetitions across varying movie counts ($K \in \{ 32, 64, 128, 256\}$). Figure~\ref{fig: movie_lens} shows the average cumulative regret, where the shaded region denotes one standard deviation. As illustrated, our proposed policy consistently achieves superior regret performance, significantly outperforming both $\mathsf{FTRL}$ and $\mathsf{EXP3}$ across all tested environments.

\end{document}